\documentclass[11pt]{article}
	
	\newcommand{\blind}{0}
	
    \newcommand{\Real}{\mathbb R}
    
    \newcommand{\NatInt}{\mathbb N}

    \newcommand{\CalO}{\mathcal O}
    \newcommand{\CalG}{\mathcal G}

    \newcommand{\CalN}{\mathcal N}
    \newcommand{\CalT}{\mathcal T}
    \newcommand{\BX}{\bold X}
    \newcommand{\Bx}{\bold x}
    \newcommand{\By}{\bold y}
    \newcommand{\BY}{\bold Y}
    \newcommand{\BZ}{\bold Z}
    
    \newcommand{\Bm}{\bold m}
    \newcommand{\Bf}{\bold f}
    \newcommand{\Bi}{\bold i}
    \newcommand{\BU}{\bold U}
    \newcommand{\BD}{\bold D}
     \newcommand{\Bu}{\bold u}
    \newcommand{\Bl}{\bold l}

    \newcommand{\E}{\mathbb E}
     \newcommand{\polylog}{\text{polylog}}
    \newcommand{\prob}{\mathbb P}
    \newcommand{\Btheta}{\boldsymbol{\theta}}
     \newcommand{\BTheta}{\boldsymbol{\Theta}}

    \newcommand{\Bphi}{\boldsymbol{\phi}}
    \newcommand{\Brho}{\boldsymbol{\rho}}
    \newcommand{\Bmu}{\boldsymbol{\mu}}
    \newcommand{\BSigma}{\boldsymbol{\Sigma}}

	\addtolength{\oddsidemargin}{-.5in}%
	\addtolength{\evensidemargin}{-.5in}%
	\addtolength{\textwidth}{1in}%
	\addtolength{\textheight}{1.3in}%
	\addtolength{\topmargin}{-.8in}%
    \makeatletter
    \renewcommand\section{\@startsection {section}{1}{\z@}%
                                       {-3.5ex \@plus -1ex \@minus -.2ex}%
                                       {2.3ex \@plus.2ex}%
                                       {\normalfont\fontfamily{phv}\fontsize{16}{19}\bfseries}}
    \renewcommand\subsection{\@startsection{subsection}{2}{\z@}%
                                         {-3.25ex\@plus -1ex \@minus -.2ex}%
                                         {1.5ex \@plus .2ex}%
                                         {\normalfont\fontfamily{phv}\fontsize{14}{17}\bfseries}}
    \renewcommand\subsubsection{\@startsection{subsubsection}{3}{\z@}%
                                        {-3.25ex\@plus -1ex \@minus -.2ex}%
                                         {1.5ex \@plus .2ex}%
                                         {\normalfont\normalsize\fontfamily{phv}\fontsize{14}{17}\selectfont}}
    \makeatother
	
	\usepackage{amsmath}
	\usepackage{graphicx}
	\usepackage{enumerate}
	\usepackage{natbib} 
	\usepackage{url} 
	
	\usepackage{hyperref}       
    \usepackage{url}            
    \usepackage{booktabs}       
    \usepackage{amsfonts}       
    \usepackage{nicefrac}       
    \usepackage{microtype}      
    \usepackage{graphicx}
    \usepackage[space]{grffile}
    \usepackage{latexsym}
    \usepackage{textcomp}
    \usepackage{longtable}
    \usepackage{tabulary}
    \usepackage{booktabs,array,multirow}
    \usepackage{amsfonts,amsmath,amssymb}
    \usepackage{url}
    \usepackage{amsmath,amsfonts,amssymb}
    \usepackage{mathtools}
    \usepackage{bm}
    \usepackage[mathscr]{euscript}
    \usepackage{enumitem}
    \usepackage{xcolor}
    \usepackage{graphicx}
    \usepackage{multirow}
    \usepackage{booktabs}
    \usepackage{microtype}
    \usepackage{fix-cm}
    \usepackage{colortbl}
    \usepackage{tikz}
    \usepackage{pbox}
    \usepackage{xcolor}
    \usepackage{etoolbox}
    \usepackage{appendix}
    \usepackage{scalerel}
    \usepackage[ruled,vlined,linesnumbered]{algorithm2e}
    \usepackage[english]{babel}
    \usepackage{amsthm}
    \usepackage{gensymb}

    \newtheorem{theorem}{Theorem}
    \newtheorem{example}{Example}
    \newtheorem{lemma}{Lemma}

    \newtheorem{definition}{Definition}

	
	\begin{document}
		
		\def\spacingset#1{\renewcommand{\baselinestretch}%
			{#1}\small\normalsize} \spacingset{1}
		
		\if0\blind
		{
			\title{\bf A Sparse Expansion For Deep Gaussian Processes }
			\author{Liang Ding$^{a}$, Rui Tuo$^{b}$ and Shahin Shahrampour$^c$ \\
                $^a$ School of Data Science, Fudan University, Shanghai, China\\
			$^b$ Industrial \& Systems Engineering, Texas A\&M University, College Station, TX \\
             $^c$ Mechanical \& Industrial Engineering, Northeastern University, Boston, MA }
			\date{}
			\maketitle
		} \fi
		
		\if1\blind
		{

            \title{\bf A Sparse Expansion For Deep Gaussian Processes}
			\author{Author information is purposely removed for double-blind review}
			\date{}
			\maketitle

		} \fi
		\bigskip
		
	\begin{abstract}
{In this work, we use Deep Gaussian Processes (DGPs) as statistical surrogates for stochastic processes with complex distributions}. Conventional inferential methods for DGP models can suffer from high computational complexity as they require large-scale operations with kernel matrices for training and inference. In this work, we propose an efficient scheme for accurate inference and efficient training based on a range of Gaussian Processes, called the \emph{Tensor Markov Gaussian Processes} (TMGP). We construct an induced approximation  of TMGP referred to as the \emph{hierarchical expansion}.  Next, we develop a deep TMGP (DTMGP) model as the composition of multiple hierarchical expansion of TMGPs. The proposed DTMGP model has the following properties: (1) the outputs of each activation function are deterministic while the weights are chosen independently from standard Gaussian distribution; (2) in training or prediction, only $\mathcal{O}({\polylog}(M))$ (out of $M$) activation functions have non-zero outputs, which significantly boosts the computational efficiency. Our numerical experiments on synthetic models and real datasets show the superior computational efficiency of DTMGP over existing DGP models.
	\end{abstract}
			
	\noindent%
	{\it Keywords:} deep Gaussian processes; Markov Gaussian processes; inducing variables; sparse expansion.

	\spacingset{1.5} 

\section{Introduction} \label{sec:intro}
{
This work is partially motivated by surrogate modeling of stochastic computer simulations \citep{AnkenmanNelsonStaum10,plumlee2014building}. The ultimate goal in this field is to approximate an underlying stochastic process, defined by a complex stochastic computer simulation code, using a surrogate stochastic process that is computationally efficient.
Among existing statistical surrogates, Deep Gaussian Processes (DGP) proposed by \cite{Damianou13} has recently become a popular choice owing to its exceptional performance and flexibility in fitting the complex distributions from the stochastic processes in real applications \citep{radaideh2020surrogate,sauer2020active}.}

DGPs are multi-layer  compositions of multi-variate Gaussian Processes (GP). The deep probabilistic structures of DGPs allows a Bayesian formulation to model complex stochastic systems, such as  computer vision, natural language  processing, etc..  However, the cost of DPGs' expressiveness and flexibility is the extreme difficulties in training and inference, because the propagation of randomness throughout the layers of DGPs is nonlinear and correlated among layers. Existing approaches for overcoming these difficulties can be separated into two classes -- one class focuses on approximating the Bayesian formulations of DGPs, such as variational inference (VI) \citep{Blei_2017}, expectation propagation (EP) \citep{Bui16}, or Vecchia approximation \citep{VecchiaDGP}, while the other class considers simplified representations of DGPs' structures, such as inducing-variable approximation \citep{hensman2014nested} and random Fourier feature expansion \citep{Cutajar17}. 

Simplifications based on inducing variables and random Fourier features \citep{rahimi2008random} partially address the computational issue of DGPs. However, they are still hard to implement. This is  because these simplifications of DGPs are deep Neural Networks (DNN) with kernel based activations, correlated random weights and bias parameters, which is  equivalent to Bayesian Neural Networks (BNN) \citep{mackay1992practical,neal1996bayesian} with dense and highly correlated structures.  Training and inference of these simplifications are at least as hard as existing BNN models.

 In this work, we { focus on introducing an accurate and efficient simplification of DGPs.} We propose a sparse reduced-rank approximation, referred as \emph{Hierarchical Expansion}, for DGPs. Our expansion is specialized for DPGs which are the compositions of multi-variate tensor Markov GPs (TMGP) \citep{ding2020sample}, so we call this expanded DGP deep TMGP (DTMGP). One of our main contributions lies in constructing a sparse representation of DGPs. There are only a poly-logarithmic number of activations in our model with non-zero outputs each time we run the model. Because of this sparse property, training of DTMGPs are much faster and easier compared with existing DGP models, avoiding commonly seen numerical issues in training deep models. Another contribution is to show that hierarchical expansion is also highly accurate. Unlike many other GPs/DGPs sparse approximations, 
 the difference between sample paths generated by any DGP and those generated by its hierarchical expansion is small. This property allows DTMGPs to model complex stochastic systems with satisfactory performance as shown in our numerical experiments.

{ We use VI to train DTMGP on data generated from simulations and real data and compare it with other existing DGP models. We conduct simulation studies by running experiments on an artificial random field and simulator of a stochastic activity network \citep{Simopt}. We also use the  RC-49 data set, which is publicly available on the web \url{https://paperswithcode.com/dataset/rc-49}, to demonstrate the performance of the proposed approach. In these numerical studies, we find that the training process of DTMGP is more stable, and the training loss converges faster compared with its competitors. Besides, we compare the similarities between instances from the underlying systems and instances generated from competing statistical surrogate models. The results show that DTMGP outperforms the alternative DGPs in all experiments.}

The remainder of this article is organized as follows. We will review the related literature in Section \ref{sec:literature_review}. We introduce general DGP models and an approximation of DGPs called induced approximation in Section  \ref{sec:DGP-model}. The methodology and detailed implementation of DTMGP are introduced in Section  \ref{sec:methodology}. Simulation studies are given in Section \ref{sec:numerical} and experiments on real data are given in \ref{sec:RC-49} . Concluding remarks are made in Section \ref{sec:conclusion}. The Appendix includes the required mathematical tools and technical proofs.

\section{Literature Review} \label{sec:literature_review}
The state-of-the-art deep learning techniques have brought probabilistic modeling with deep neural network structure in popularity. Based on the concept of deep belief network (DBN) \citep{Hinton06},  \cite{Damianou13}  generalized the Restricted Boltzmann Machine \citep{Hinton10}, which is a DBN with binary output, to deep Gaussian Processes based on Gaussian process mapping. DGPs have been a popular tool in several applications. Their prowess has been demonstrated on many classification tasks \citep{Damianou13,fei2018active,Yang21}. Compared with traditional DNNs,   the flexibility in uncertainty quantification of DGPs makes them ideal candidate for surrogate modeling \citep{radaideh2020surrogate,sauer2020active}. DGPs are commonly used statistical surrogates in many applications such as Bayesian optimization \citep{hebbal2021bayesian}, calibration \citep{marmin2022deep}, multi-fidelity analysis \citep{ko2021deep}, healthcare \citep{li2021deep}, and etc.

However, training and inference of DGPs are difficult. Current attempts can be separated into two classes. One focuses on designing more efficient and accurate training algorithm while the other one on constructing DGP architectures with sparsity.  Efficient training and inference algorithms  include expectation propagation \citep{Bui16},  doubly stochastic variational inference \citep{Salimbeni17}, stochastic gradient Hamiltonian Monte Carlo \citep{Havasi18}, elliptical slice sampling \citep{sauer2020active}, Vecchia approximation \citep{vecchia1988estimation, VecchiaDGP,katzfuss2021general}, and etc. Reformulation of DGPs to simplified models is an alternative approach. DGPs are reformulated as an variational model in \cite{Tran16}.  Low-rank approximations for GPs in \cite{banerjee2008gaussian,cressie2008fixed} are also extended to DGPs.   \cite{hensman2014nested,dai2016variational} extend the  inducing-variable method to DGPs. \cite{Cutajar17} uses random Fourier feature expansion to show that DGPs are equivalent to infinitely wide BNN with corresponding activations. 

In this work, we consider DGPs with Markov structure and, based on the Markov structure, we design an accurate and efficient sparse expansion for DGPs. \cite{siden2020deep} proposes compositions of \emph{discrete} Gaussian Markov random field on graph and called deep graphical models of this structure \emph{Deep Gaussian Markov Random Fields} (DGMRF). We must point out that our DTMGPs is essentially different from DGMRF because DTMGPs are \emph{not} deep graphical models. More specifically, { DGMRF treats every activation as a random variable and imposes a graphical Markov structure on these activations, i.e., any two non-adjacent randomly distributed activations are conditionally independent given all other activations. On the other hand, every activation in DTMGP is  one of the orthonormal basis functions of a \emph{continuous} Gaussian Markov random field and these orthonormal basis functions constitute a so-called \emph{hierarchical expansion} of  the field. }

\section{ General DGP Models} \label{sec:DGP-model}
\subsection{Deep Gaussian Processes}\label{sec:DGP}
A GP $\CalG$ is a random function characterized by its mean function $\mu$ and covariance function $k$. To be more specific, given any input $\Bx$, the output $\CalG(\Bx)$ has a Gaussian distribution with mean $\mu(\Bx)$ and variance $k(\Bx,\Bx)$, and given any pair of inputs $\Bx$ and $\Bx'$, the covariance between  $\CalG(\Bx)$ and $\CalG(\Bx')$ is $k(\Bx,\Bx')$:
\begin{equation}
\label{eq:GP}
    \begin{aligned}
   & \CalG(\Bx) \sim \CalN \big(\mu(\Bx), k(\Bx,\Bx)\big),\\
    &\textbf{Cov}\big(\CalG(\Bx),\CalG(\Bx')\big)=k(\Bx,\Bx').
\end{aligned}
\end{equation}
A $W$-variate GP is simply a $W$-vector of GPs $[\CalG_1,\CalG_2,\cdots,\CalG_W]$. Without loss of generality, we assume that any multivariate GP in the following content  has mean function $\mu=0$ and independently distributed entries. A $H$-layer DGP $\Bf^{(H)}$ is then the composition of $H$ multi-variate GPs:
\begin{equation}
    \label{eq:DGP}
    \Bf^{(H)}(\Bx^*)=\CalG^{(H)}\circ\cdots\circ \CalG^{(2)}\circ\CalG^{(1)}(\Bx^*),
\end{equation}
where $f\circ g$ denotes the composition function $f(g(\cdot))$, $\CalG^{(h)}$ denotes a $W^{(h)}$-variate GP $[\CalG_1^{(h)},\CalG_2^{(h)},\cdots,\CalG^{(h)}_{W^{(h)}}]$, and $W^{(h)}$ is called the width of layer $h$.	The formulation \eqref{eq:DGP} of DGP can be viewed as a DNN with random activations $\{\CalG_i^{(h)}\}$.

While DGP yields a non-parametric and Bayesian formulation of DNNs, computations for inference/prediction of DGPs are cumbersome. For example,   given a set of input-output pairs  $(\BX,\BY)\in\Real^{n\times d}\times \Real^n$, the conditional probability distribution $\prob(\BY\big|\BX)$ induced by \eqref{eq:DGP} is 
\begin{equation}
\label{eq:DGP-integral}
    \prob \bigl(\bold{Y}\big|\BX\bigr)=\int \prob\bigl(\bold{Y}\big|\Bf^{(H)}\bigr)\prob\bigl(\Bf^{(H-1)}\big|\Bf^{(H-2)}\bigr)\cdots\prob\bigl(\Bf^{(1)}\big|\BX\bigr)d{\Bf^{(1)}}\cdots d{\Bf^{(H)}},
\end{equation}
where $\Bf^{(h)}$ denotes the output vector from layer $h$:
\[\Bf^{(h)}=\CalG^{(h)}\circ\CalG^{(h-1)}\circ\cdots\circ\CalG^{(1)},\]
with $h=1,\cdots,H$. Evidently, the integral \eqref{eq:DGP-integral} is intractable, leading to the intractability of inference and predictions of DGP $\Bf^{(H)}$ conditioned on any observed data $\bigl(\BX,\BY\bigr)$.

\subsection{Induced Approximation of DGPs}\label{sec:inducing-variable}
The inference of GP, which acts as a single activation in DGP, is also time and space consuming. A challenge for GPs  lies in their computational complexity and storage for the inverse of covariance matrix $k(\BX,\BX)$ which are $\CalO(n^3)$ and $\CalO(n^2)$, respectively, when the covariance matrix is of size $n$. To alleviate this computational problem, current methods, such as inducing-variable approximation in \cite{hensman2014nested,dai2016variational} and random Fourier feature approximation in \cite{Cutajar17}, focus on approximating the covariance matrix $k(\BX,\BX)$ by some low rank matrices.

Instead of approximating the covariance matrix, we approximate the GP activations in a DGP directly. In the following content, we will construct a reduced-rank approximation called  induced approximation. Induced approximation can bring a more flexible and clear  deep neural network representation of DGPs.

Without loss of generality,  let $\CalG$ defined in \eqref{eq:GP} be a GP with constant mean $\mu$ and covariance function $k$. Inspired by the kriging method \citep{Matheron63,sacks1989designs}, we can approximate GP $\CalG$ defined in \eqref{eq:GP} by the following finite-rank approximation:
\begin{equation}
    \label{eq:finite-rank-GP}
   \hat{\CalG}(\Bx^*)\coloneqq \mu+ k(\Bx^*,\BU)\big[k(\BU,\BU)\big]^{-1}\CalG(\BU)
\end{equation}
where we call $\BU=\{\Bu_i\in\Real^d\}_{i=1}^m$ the inducing points and $\CalG(\BU)$ are the values of $\CalG$ on $\BU$. According to theory in \citep{ritter2000average,WangTuoWu20,ding2020sample}, { the required number of inducing points for achieving the optimal statistical error is much lower than the sample size, provided that the inducing points $\BU$ are well-chosen. Hence, \eqref{eq:finite-rank-GP} can be computed in a highly efficient way.}

 \begin{figure}[t!]
\centering
\includegraphics[width=0.5\textwidth]{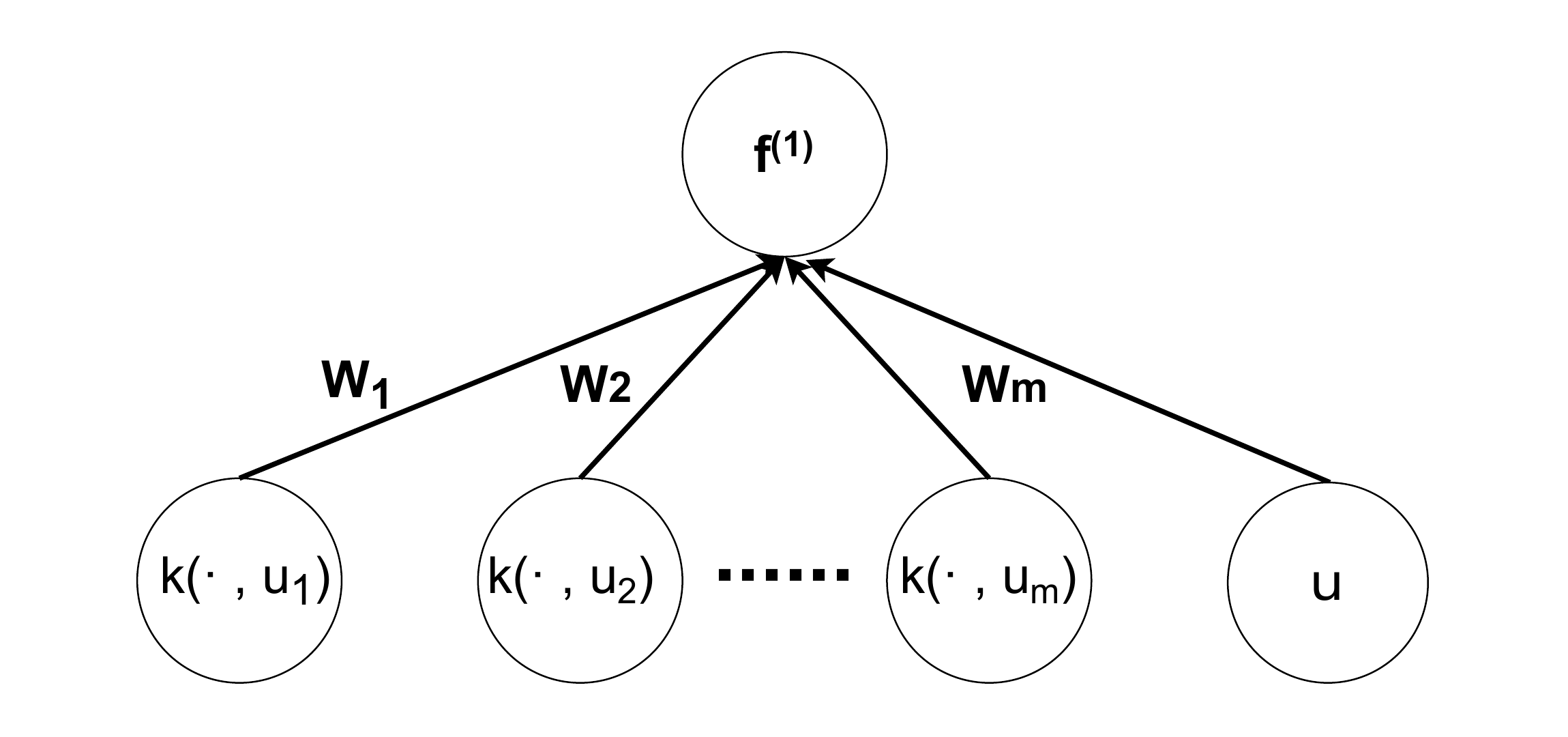}

\caption{Induced approximation $\hat{\CalG}$ can be represented as a one-layer neural network with correlated Gaussian distributed weights $\bold{w}=\big[k(\BU,\BU)\big]^{-1}\CalG(\BU)$ and bias $\mu$. \label{fig:nn-rep-GP}}
\end{figure}

Finite-rank approximation \eqref{eq:finite-rank-GP} can be represented as a one-layer neural network with correlated Gaussian distributed weights $\bold{w}=\big[k(\BU,\BU)\big]^{-1}\CalG(\BU)$ and bias $\mu$ as shown in Figure \ref{fig:nn-rep-GP}. However, the correlated Gaussian distributed weights still make training and inference difficult. In order to write the finite-rank approximation \eqref{eq:finite-rank-GP} in the form of a neural network with independently distributed weights, we can simply apply Cholesky decomposition on kernel matrix $k(\BU,\BU)$, which leads to the following equation:
\begin{equation}
\label{eq:chole-GP}
    \hat{\CalG}(\Bx^*)= \mu+k(\Bx^*,\BU)R^{-1}_{\BU}\BZ\coloneqq\mu+\Bphi^T(\Bx^*)\BZ, 
\end{equation}
where $R_\BU$ is the Cholesky decomposition of matrix $k(\BU,\BU)$ and $\BZ=[R_{\BU}^{T}]^{-1}\CalG(\BU)\in\Real^m$ are i.i.d. distributed standard Gaussian random variables. We call approximation \eqref{eq:chole-GP} induced approximation and Figure \ref{fig:nn-rep-CholGP} shows that induced approximation can be  represented  as a two-layer neural network where functions $\Bphi$ act as an extra hidden layer.
\begin{figure}[t!]
\centering
\includegraphics[width=0.5\textwidth]{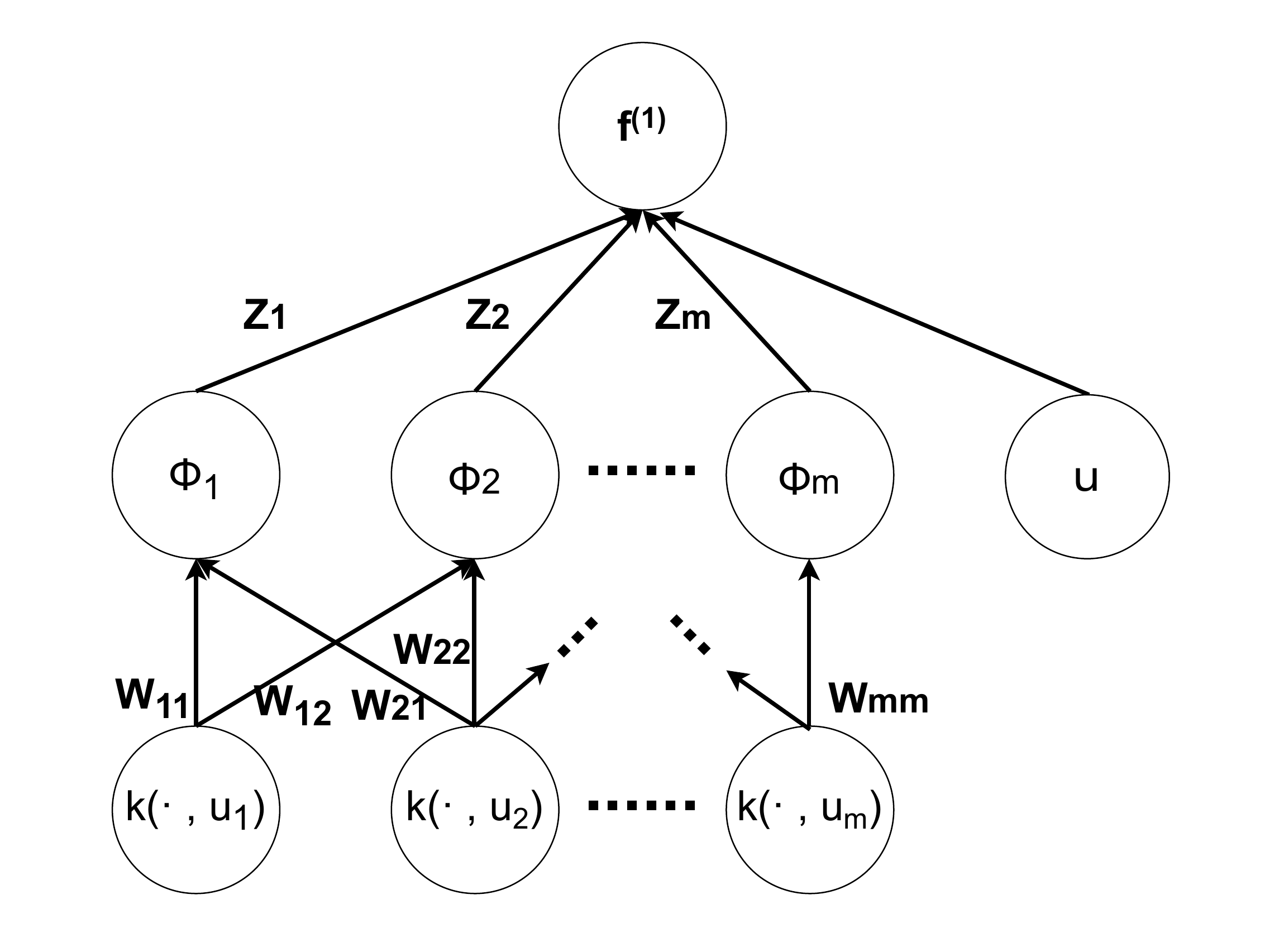}

\caption{ A two-layer neural network representation of $\hat{\CalG}$ with deterministic  weights $\bold{W}=\big[R_{\BU}^T\big]^{-1}$, i.i.d. standard Gaussian weights $\BZ$, and bias $\mu$. \label{fig:nn-rep-CholGP}}
\end{figure}
Because the approximation \eqref{eq:chole-GP} consists of only $m$ Gaussians, the  time and space complexity for its inference are then reduced to $\CalO(m^2n)$ and $\CalO(mn)$, respectively.

The induced approximation of a DGP is to approximate each random activation by corresponding induced approximation. For example, suppose that in a DGP $\Bf^{(H)}$, all activations in the same layer are characterized by the same covariance function $k^{(h)}$. Then, the induced approximation  $\Tilde{\Bf}^{(H)}$ of  $\Bf^{(H)}$ is
\begin{equation}
\label{eq:inducing-var-DGP}
    \begin{aligned}
        &\Tilde{\Bf}^{(H)}(\Bx^*)=\big[\BZ^{(H)}\Bphi^{(H)}(\cdot)+\Bmu^{(H)}\big]\circ\cdots\circ\big[\BZ^{(1)}\Bphi^{(1)}(\Bx^*)+\Bmu^{(1)}\big],\\
        &[\Bphi^{(h)}(\cdot)]^T=k^{(h)}(\cdot,\BU^{(h)})R^{-1}_{{h}}\quad \text{for}\ h=1,\cdots,H
    \end{aligned}
\end{equation}
where, for $h=1,\cdots,H$, $\BZ^{(h)}\in\Real^{W^{(h)}\times m^{(h)}}$ is a  matrix with i.i.d. standard Gaussian entries, $U^{(h)}$ are the total $m^{(h)}$ inducing variables for kernel $k^{(h)}$, $R_h\in\Real^{m^{(h)}\times m^{(h)}}$ is the Cholesky decomposition of the  covariance matrix $k^{(h)}(\BU^{(h)},\BU^{(h)})$, and $\Bmu^{(h)}\in\Real^{W^{(h)}}$ is the mean vector and can be treated as the bias of layer $h$. 

In the next section, we will show that if kernel $k$ is in a class of kernel function called tensor Markov kernel and inducing points $\BU$ are a specific design called sparse grid, then induced approximations \eqref{eq:chole-GP} and \eqref{eq:inducing-var-DGP}  can be further written as a sparse approximation.

\section{Methodology of DTMGP}\label{sec:methodology}
In this section, we first introduce the concept of TMGPs. We then introduce hierarchical expansion of TMGPs and how it leads to mutually orthogonal feature functions with hierarchical supports. Based on hierarchical expansion, we can introduce the implementation and training of DTMGPs.
\subsection{Tensor Markov GPs}
Hierarchical expansion is applied to a class of GPs called tensor Markov GP. In one dimension, a Markov GP is characterized by the following  Lemma in \cite{MarcusRosen06}:
\begin{lemma}[\cite{MarcusRosen06} Lemma 5.1.8]\label{lemma:PD}
Let $I\subseteq \Real$.
Let $\CalG$ be a zero mean GP defined on $I$ with continuous positive definite kernel $k$.
Then, $\CalG$ is a Markov GP and $k$ is Markov kernel if and only if there exist positive functions $p$ and $q$ on $I$ with $p/q$ strictly increasing such that
\begin{equation}\label{eq:Gaus-Markov}
    k(x,  x') = p(\min(x, x')) q(\max(x, x')),\quad x, x' \in I.
\end{equation}
\end{lemma}
For any Markov GP $\CalG$ we have that, conditioned on its  distribution at a point $x$, its distributions at any point $x_l<x$ and any point $x_u>x$ are independent: $\CalG(x_l)\perp \CalG(x_u)\big|\CalG(x)$, whence Markov GP.  Markov GPs  have two advantages in computations. Firstly, given any ordered inducing points $\{u_1<\cdots<u_n\}$, the distribution of $\CalG(x)$ at any  $x$, where $u_i<x<u_{i+1}$,  depends only on its left and right inducing points $\CalG(u_i)$ and $\CalG(u_{i+1})$. Secondly, for any induced approximation,  accuracy of the approximation at $x$ only depends on the distances between $x$ and its left and right neighbors, respectively. Therefore,  evenly distributed inducing variables can achieve an accurate approximation that is also cheap to compute.   

The following purely \textit{additive} and purely \textit{tensor} structure of one-dimensional Markov kernel can both extend Markov GPs to multi-dimension:
\begin{equation}
    \label{eq:multi-dim-Markov-kernel}
    k(\Bx,\Bx')=\sum_{j=1}^dk_j(x_j,x_j'),\quad k(\Bx,\Bx')=\prod_{j=1}^dk_j(x_j,x_j').
\end{equation}

We generalize the additive and tensor form in \eqref{eq:multi-dim-Markov-kernel} to the following GP, which has additive and tensor structure simultaneously:
\begin{definition}\label{def:TMGP}
A GP is called TMGP if and only if it is a zero mean GP with covariance function of the form
\begin{equation}
    \label{eq:add-tensor-Markov-kernel}
    k(\Bx,\Bx')=\sum_{l=1}^s\prod_{j\in \boldsymbol{U}_l}k_{l,j}(x_j,x_j'),
\end{equation}
where $\boldsymbol{U}_l\subseteq\{1,2,\cdots,d\}$  and $k_{l,j}$ is 1-D Markov kernel for any $l,j$. A kernel of the form \eqref{eq:tensor-Markov-kernel} is called tensor Markov kernel (TMK).
\end{definition}
A TMGP $\CalG$ with kernel \eqref{eq:add-tensor-Markov-kernel} can be  factorized as sum of $s$ mutually independent GPs: $\CalG=\sum_{l=1}^s\CalG_l$ where each $\CalG_l$ is with kernel function $\prod_{j\in \boldsymbol{U}_l}^dk_{l,j}(x_j,x_j')$. To achieve an accurate approximation of TMGP using the smallest number of inducing variables, we can only focus on TMGP with a purely tensor structure. By combining mutually independent approximations, we can arrive at a solution. Therefore, in the following content, we only study approximation for TMGP with purely tensor structure:
\begin{equation}
    \label{eq:tensor-Markov-kernel}
    k(\Bx,\Bx')=\prod_{j=1}^dk_{j}(x_j,x_j'),
\end{equation}
and the general TMGP with kernel \eqref{eq:add-tensor-Markov-kernel} is a straightforward extension that involves combining approximations of GPs with purely tensor kernels.

Commonly used TMKs with purely tensor structure include Laplace kernel $k(\Bx,\Bx')=\exp(\sum_{j=1}^d\theta_j|x_j-x_j'|)$ and Brownian sheet kernel $k(\Bx,\Bx')=\prod_{j=1}^d(1+\theta_j \min\{x_j,x_j'\})$. The challenge in extending the inducing approximation of 1-D Markov GP to multiple dimensions lies in properly defining the "left" and "right" neighbors in a multidimensional space, similar to how it's done on the real line. To address this, we introduce the hierarchical expansion method, which is explained in the following section.

\subsection{Hierarchical Expansion}
To construct the hierarchical expansion, we let inducing variables $\BU$ be an experimental design $\BX^{\rm SG}_l$ called level-l \emph{Sparse Grid} (SG) \citep{bungartz_griebel_2004, Plumlee14},  where level $l$ determines the number of inducing variables. Furthermore, the SG inducing variables must be sorted in a specific order. Detailed introduction of SGs and the required order are provided in Appendix \ref{sec:sparse-grid}, and MATLAB codes for generating SGs satisfying the requirement can be found in the \emph{Sparse Grid Designs} package \citep{SGDesign}. Examples of two-dimensional SGs are shown in Figure \ref{fig:SparseGrid}. From the examples, we can see that the incremental points from the next level of a SG exhibit a hierarchical structure -- higher level SG consists of local SGs of smaller levels.
\begin{figure}[ht]
\centering 
\includegraphics[width=0.3\textwidth]{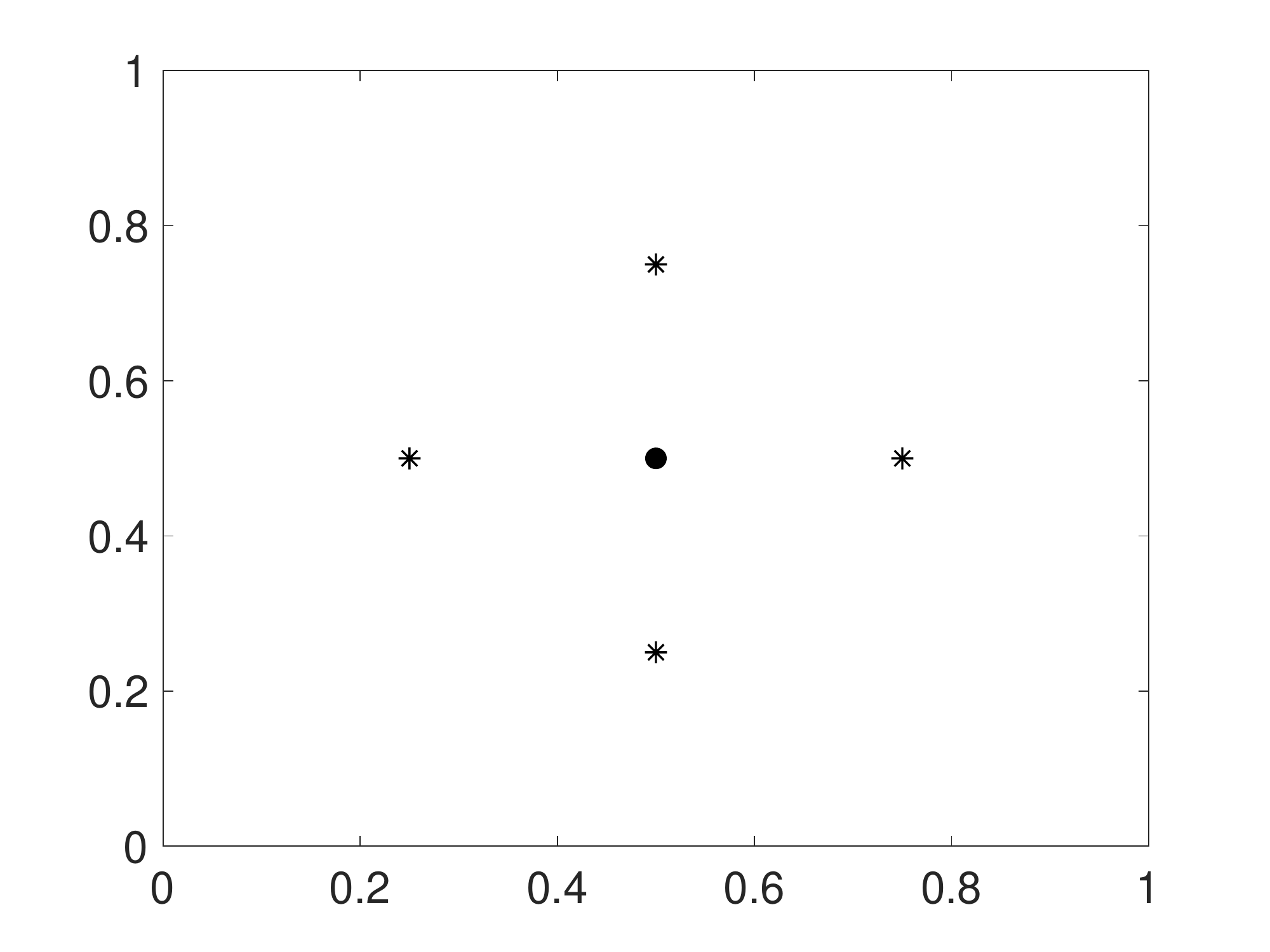} 
\includegraphics[width=0.3\textwidth]{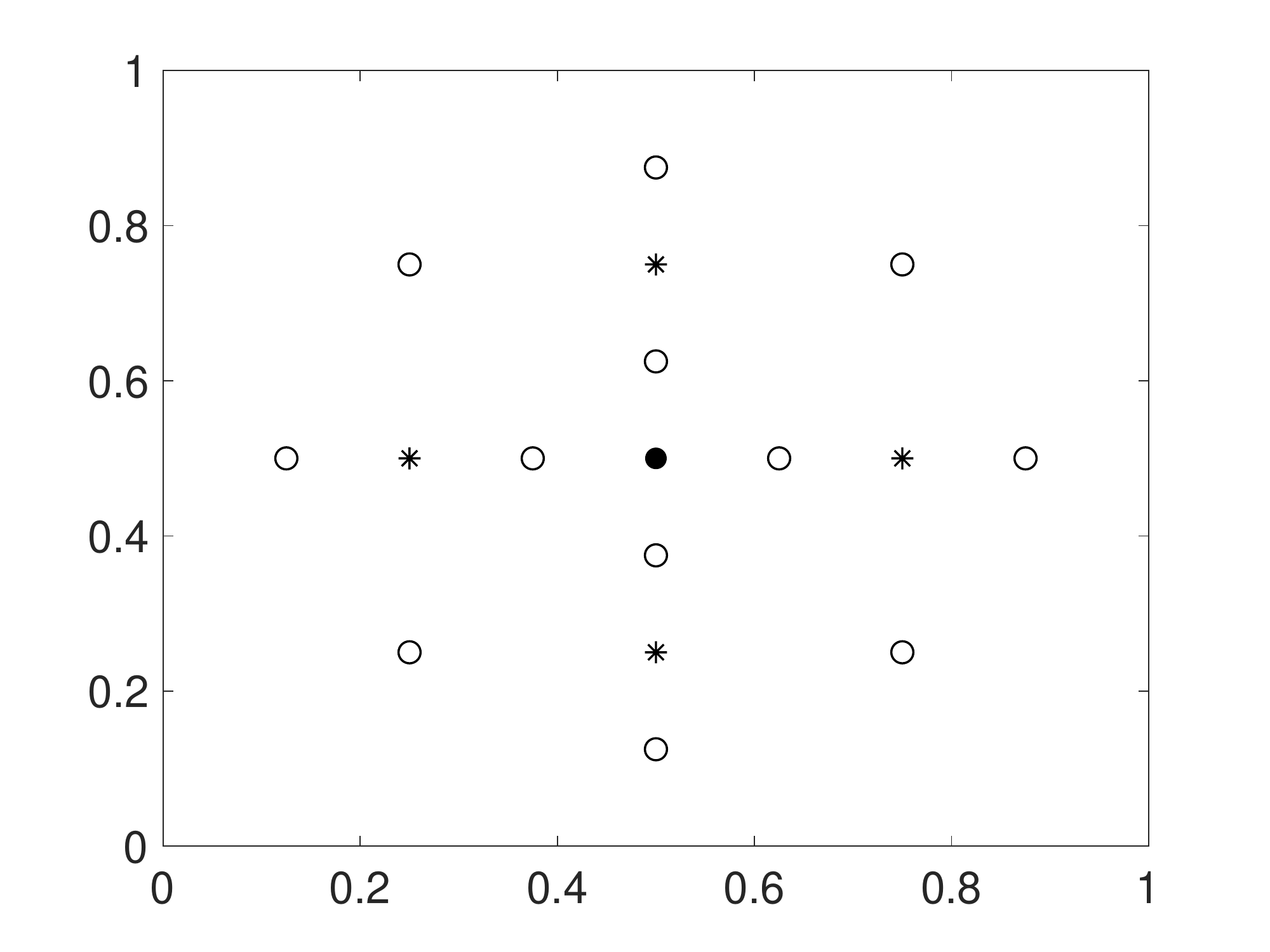} 
\includegraphics[width=0.3\textwidth]{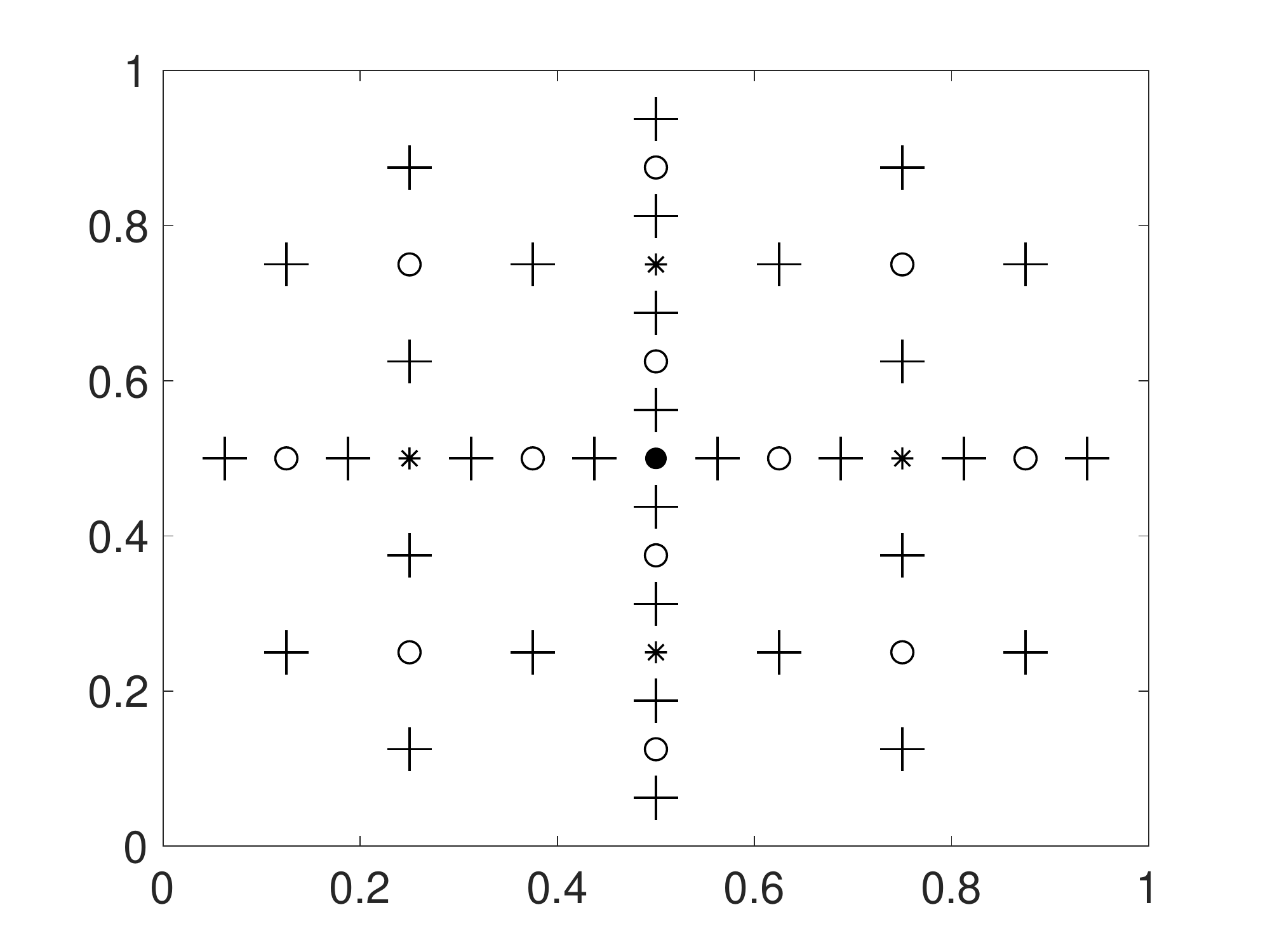}
\caption{\label{fig:SparseGrid}Two-dimensional level-2 SG (left), level-3 SG (middle) and level-4 SG (right) on cube $(0,1)^2$. The incremental points from the second level, third level and fourth level  are labeled by *, o and +, respectively.}
\end{figure}

Hierarchical expansion of a TMGP $\CalG$ with inducing points $\BX^{\rm SG}_l$ is then simply:
\begin{equation}
    \label{eq:hierarchical-expansion}
    \CalG(\Bx^*)\approx k(\Bx^*,\BX^{\rm SG}_l) R_l^{-1}\BZ\coloneqq  \Bphi^T(\Bx^*)\BZ,
\end{equation}
where $k(\Bx^*,\BX^{\rm SG}_l)=[k(\Bx^*,\Bx_1),\cdots,k(\Bx^*,\Bx_{m_l})]$ is the covariance vector, $k(\BX^{\rm SG}_l,\BX^{\rm SG}_l)=[k(\Bx_i,\Bx_j)]_{i,j\leq m_l}$ is the covariance matrix, $R_l$ is the Cholesky decomposition of  $k(\BX^{\rm SG}_l,\BX^{\rm SG}_l)$, $\BZ=[Z_1,\cdots,Z_{m_l}]$ are i.i.d. standard Gaussian random variables and $\Bphi=[\phi_i]_{i=1}^{m_l}$ are called hierarchical features. Hierarchical expansion \eqref{eq:hierarchical-expansion} yields a sparse approximation of TMGP that is easy to compute as shown in the following theorems.
\begin{theorem}
\label{thm:nnz-cholesky}
 The number of non-zero entries on $R_l^{-1}$ is $\CalO(m_l )$ and $R^{-1}_l$ can be computed in  $\CalO(m_l )$ operations. 
\end{theorem}
\begin{theorem}
\label{thm:nnz-hierarchical-feature}
 Given any input point $\Bx^*$, the number of non-zero entries on the vector of hierarchical features $\Bphi(\Bx^*)$ is $\CalO([\log m_l]^{2d-1})$.
\end{theorem}
The algorithm for computing $R^{-1}_l$ in $\CalO(m_l )$ operations is given in Appendix \ref{sec:algorithm}, and proofs for Theorems \ref{thm:nnz-cholesky} and \ref{thm:nnz-hierarchical-feature} are provided in supplementary material. Since we can compute $R^{-1}_l$ in $\CalO(m_l )$ operations and the numbers of non-zero entries on $R^{-1}_l$ is $\CalO(m_l)$,  hierarchical expansion can be computed in $\CalO(m_l )$ operations. Moreover, if matrix $R^{-1}_l$ is given, because the numbers of non-zero entries on $\Bphi(\Bx^*)$ is only $\CalO([\log m_l]^{2d-1})$, the computational time of the hierarchical expansion can be further reduced to $\CalO([\log m_l]^{2d-1})$. The sparsity of hierarchical feature vector $\Bphi$ relies on the fact that the supports of $\{\phi_i\}_{i=1}^{m_l}$ are either nested or disjoint and, hence,  form a hierarchical structure as shown in Figure \ref{fig:supports}.

\begin{figure}[ht]
\centering 
\includegraphics[width=0.24\textwidth]{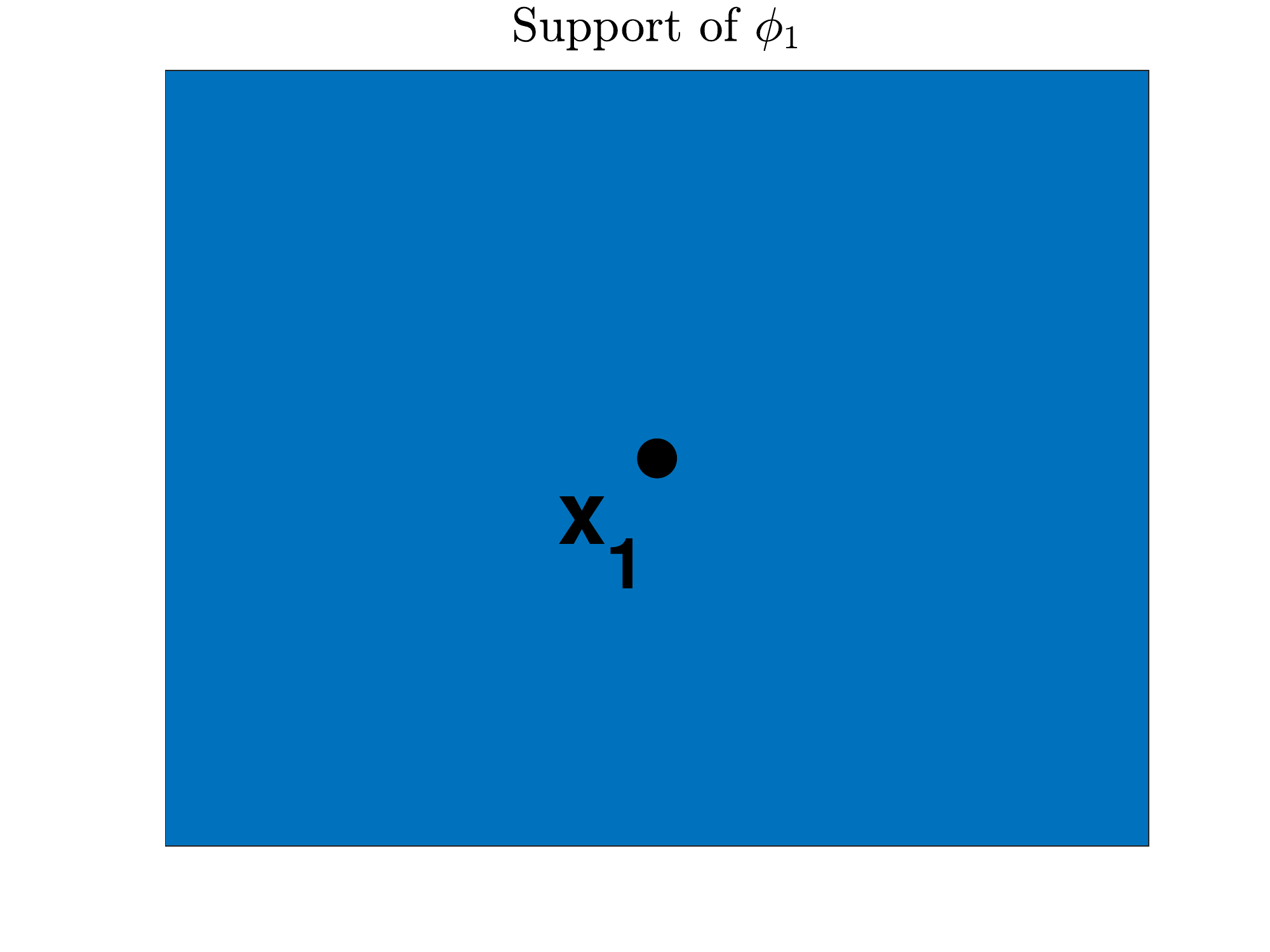} 
\includegraphics[width=0.24\textwidth]{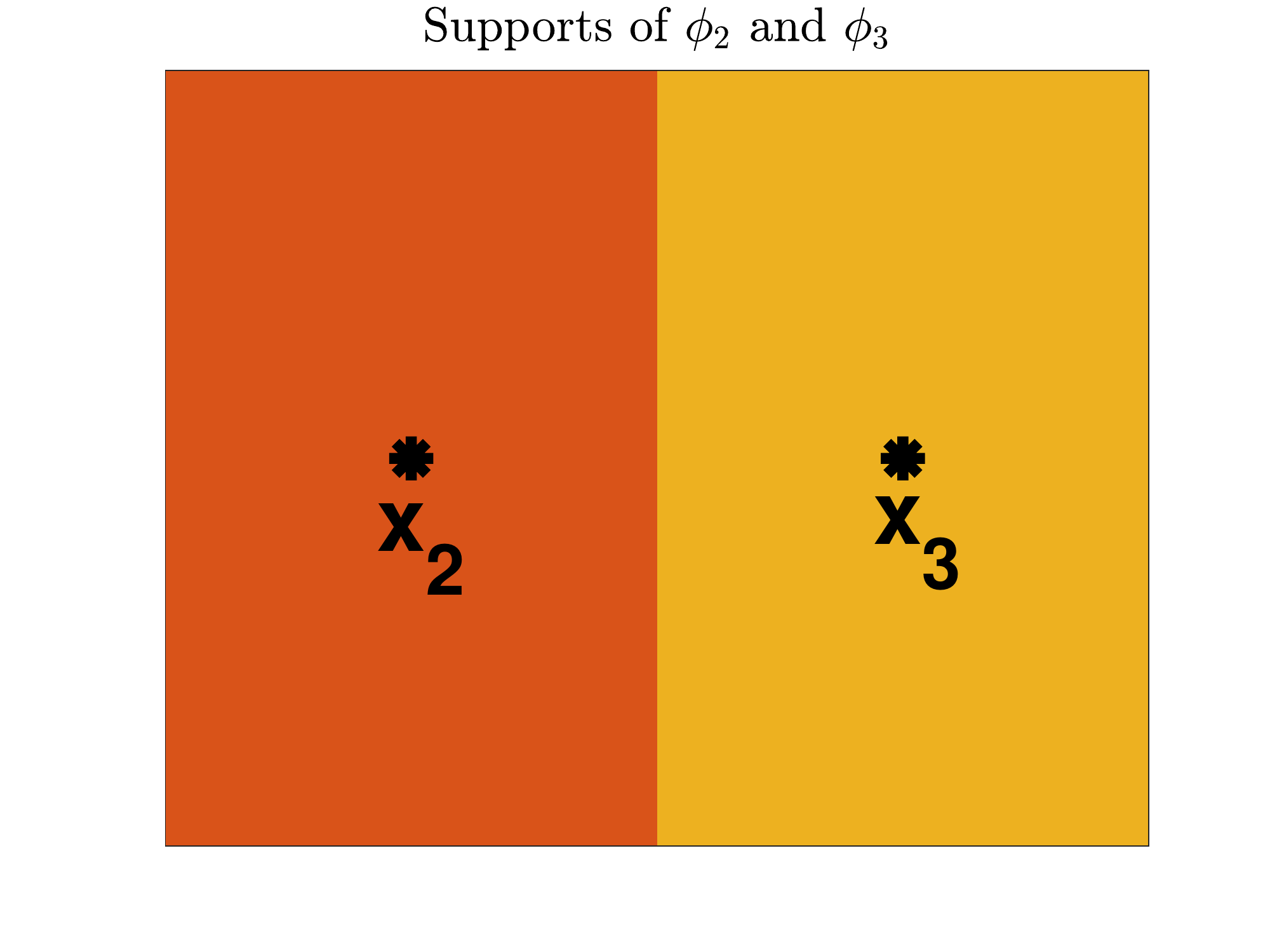} 
\includegraphics[width=0.24\textwidth]{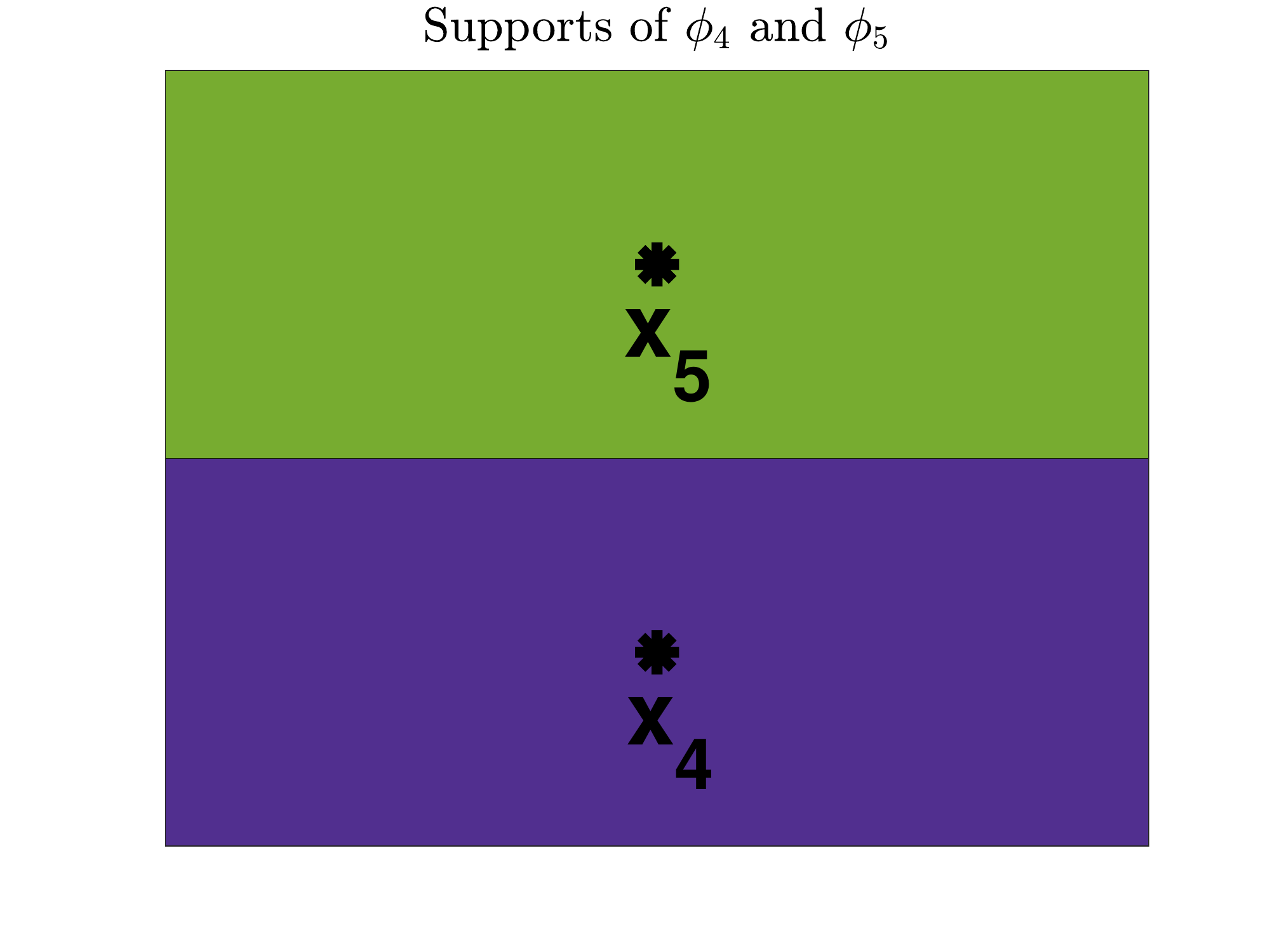}
\includegraphics[width=0.24\textwidth]{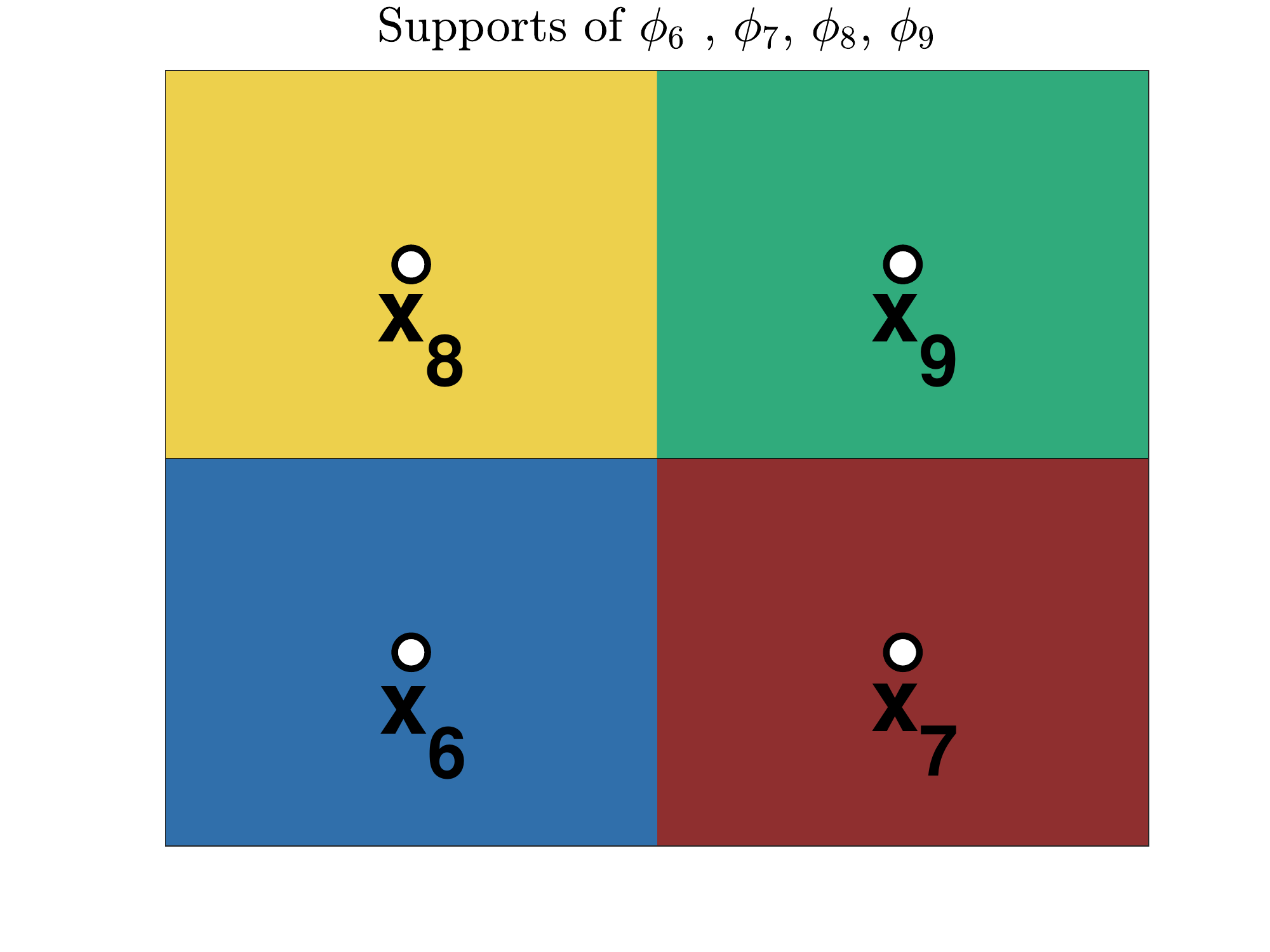}
\caption{\label{fig:supports} Support of hierarchical feature $\phi_1$ corresponding to $\Bx_1$ in level 1 (left); supports of $\{\phi_i\}_{i=2}^5$ corresponding to $\{\Bx_i\}_{i=2}^5$ (middle figures); supports of $\{\phi_i\}_{i=6}^9$ corresponding to $\{\Bx_i\}_{i=6}^9$ in level 3 (right).}
\end{figure}

$\BX^{\rm SG}_l$ is also regarded as the index set of $\Bphi$, because instead of labeling each hierarchical feature by number $i$, we can also label it by the corresponding inducing variable $\Bx_i\in\BX^{\rm SG}_l$. As shown in Figure \ref{fig:phi1andphi23}, for any $i$, $\Bx_i$ is the only point at which $\phi_i$ is not differentiable along all dimensions. Hence,  we call $\Bx_i$ the center of $\phi_i$. From this perspective, we can claim that, given any $\Bx^*$, the distribution of the approximation at  $\Bx^*$ only depends on $\CalO([\log m_l]^{2d-1})$ inducing variables. As a result, the hierarchical expansion  generalizes the one-dimensional Markov property while still remaining a high accuracy approximate. We discuss the approximation error of hierarchical expansion in supplementary material.

\begin{figure}[ht]
\centering 
\includegraphics[width=0.3\textwidth]{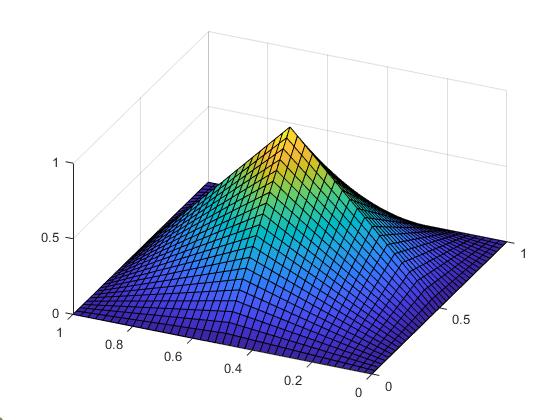} 
\includegraphics[width=0.45\textwidth]{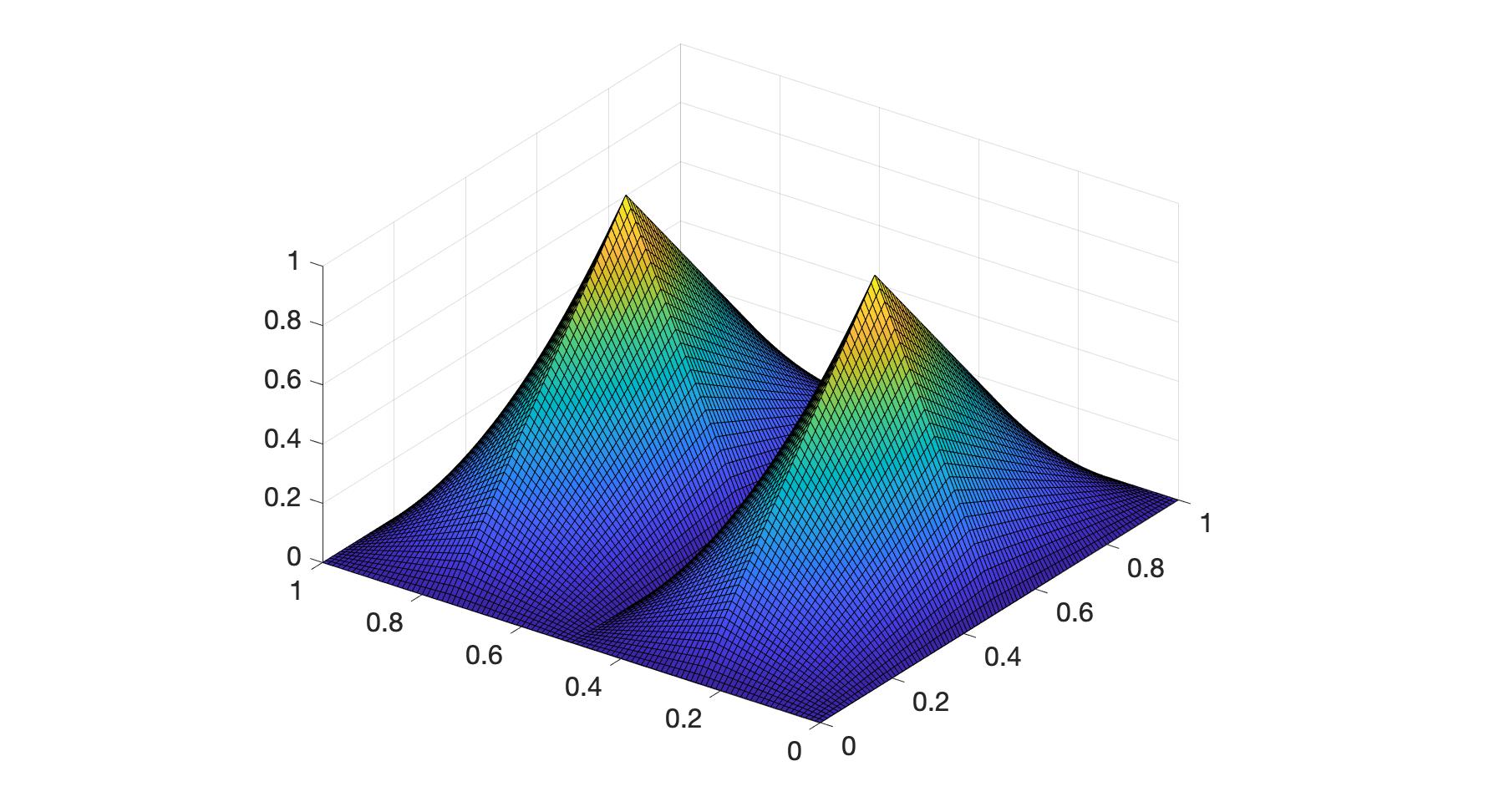} 
\caption{\label{fig:phi1andphi23} Hierarchical features $\phi_1$ (left) and $\phi_2,\phi_3$ (right) generated by the tensor Brownian Bridge kernel $k(\Bx,\Bx')=\prod_{j=1}^2\min\{x_j,x_j'\}(1-\max\{x_j,x_j'\})$. }
\end{figure}
The following numerical example  illustrates the sparsity of hierarchical expansion:
\begin{example}
Let two-dimensional TMK  $k(\Bx,\Bx')=\exp\{-\sum_{j=1}^2|x_j-x_j'|\}$. Let 
$$\BX^{\rm SG}_2=\left[\left(\frac{1}{2},\frac{1}{2}\right),\left(\frac{1}{2},\frac{1}{4}\right),\left(\frac{1}{2},\frac{3}{4}\right),\left(\frac{1}{4},\frac{1}{2}\right),\left(\frac{3}{4},\frac{1}{2}\right)\right],$$ 
be the sorted level-2 SG. Then, we numerically compute the inverse Cholesky decomposition $R_2^{-1}$ and we get
\begin{equation*}
    R^{-1}_2=
    {\tiny
    \begin{bmatrix}
     1.0000  & -1.2416 &  -1.2416 &  -1.2416 &   -1.2416\\
     0       & 1.5942  &  0.0000  &  0.0000  &   0.0000\\
     0       &  0      & 1.5942   &  0.0000  &  0.0000\\
     0       &  0      &      0   &  1.5942  &   0.0000\\
     0       &  0      &      0   &  0       &  1.5942    
    \end{bmatrix}}.
\end{equation*}
Let $\Bx^*_1=(0.8147 ,   0.9058)$, and $\Bx^*_2=(0.2785    , 0.5469)$ be two randomly chosen points.  We numerically compute the hierarchical features at $\Bx^*_1$ and $\Bx^*_2$ and we get
\begin{align*}
    &k(\Bx^*_1,\BX^{\rm SG}_2)R^{-1}_2=\big[0.4865, 0,  0.3918,    0.0000  ,    0.3918\big],\\
    &k(\Bx^*_2,\BX^{\rm SG}_2)R^{-1}_2=\big[0.7646,    0.5291 ,    0     ,    0    ,0.0933\big].
\end{align*}
As we can see,  matrix $R^{-1}_2$ is sparse and each of $k(\Bx^*_1,\BX^{\rm SG}_2)R^{-1}_2$ and $k(\Bx^*_2,\BX^{\rm SG}_2)R^{-1}_2$ has two zero elements. The sparsity is even more visible as the number of inducing points increases.
\end{example}

\subsection{Deep Tensor Markov GPs}
A $H$-layer DTMGP $\CalT^{(H)}$ is defined as the composition of $H$ hierarchical expansions of multi-variate TMGP:
\begin{equation}
    \label{eq:DTMGP}
    \CalT^{(H)}(\Bx^*)= [\BZ^{(H)}\Bphi^{(H)}(\cdot)+\Bmu^{(H)}]\circ\cdots\circ  [\BZ^{(1)}\Bphi^{(1)}(\Bx^*)+\Bmu^{(1)}],
\end{equation}
where, for $h=1,\cdots,H$, $\BZ^{(h)}$ is a $W^{(h)}$-by-$m^{(h)}$ matrix with i.i.d. standard Gaussian entries, $W^{(h)}$ is the dimension of the output (width of the neural network) of layer $h$, $m^{(h)}$ is the number of hierarchical features for approximating a $W^{(h)}$-variate TMGP, $\Bphi^{(h)}$ consists of the $m^{(h)}$ hierarchical features for approximating the TMGP, acting as the activation functions in layer $h$, and $\Bmu^{(h)}\in\Real^{W^{(h)}}$ is the parameterized mean for the TMGP in layer $h$. 

 Let us now focus on layer $h$ of $\CalT^{(H)}$ to compare DTMGP with DNN. Let $\CalT^{(h)}=(\CalT^{(h)}_1,\cdots,\CalT^{(h)}_{W^{(h)}})$ denote the $W^{(h)}$-dimensional output of layer $h$. We then have the following structure:
\begin{equation*}
    \CalT^{(h-1)}\rightarrow \underbrace{\big(\phi^{(h)}_1(\CalT^{(h-1)}),\cdots,\phi^{(h)}_{m^{(h)}}(\CalT^{(h-1)})\big)}_{\text{\# non-zero entries:}\ \CalO\big([\log m^{(h)}]^{2W^{(h-1)}-1}\big)}\rightarrow \bigg[\CalT_j^{(h)}=\mu^{(h)}_j+\sum_{i=1}^{m^{(h)}}Z_{j,i}\phi^{(h)}_i(\CalT^{(h-1)})\bigg]_{j=1}^{W^{(h)}},
\end{equation*}
where $\phi^{(h)}_i(\CalT^{(h-1)})$ is the $i^{\rm th}$ entry of vector $k^{(h)}(\CalT^{(h-1)},\BX^{\rm SG}_{l^{(h)}})[R^{(h)}]^{-1}$, $k^{(h)}$ is the TMK for layer $h$, $\BX^{\rm SG}_{l^{(h)}}$ is a level-$l^{(h)}$ SG of size $m^{(h)}$ for induced approximation, and $R^{(h)}$ is the Cholesky decomposition of covariance matrix $k^{(h)}(\BX^{\rm SG}_{l^{(h)}},\BX^{\rm SG}_{l^{(h)}})$.  This structure is also illustrated in Figure \ref{fig:DTMGP_graph}.

\begin{figure}[t!]
\centering
\includegraphics[width=.7\textwidth]{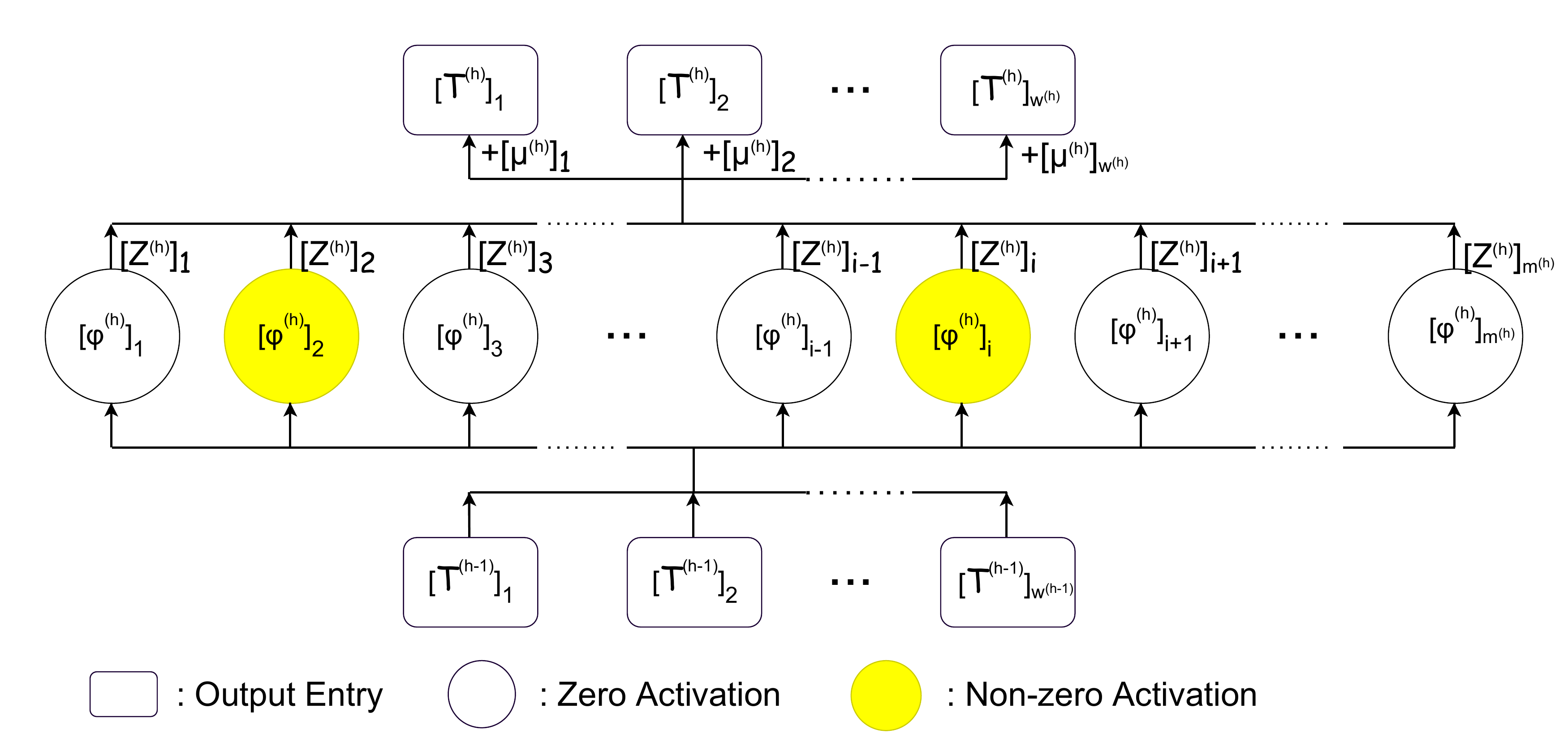}

\caption{Only $\CalO\big([\log m^{(h)}]^{2W^{(h-1)}-1}\big)$ activations are non-zero in layer $h$ of DTMGP $\CalT^{(H)}$. \label{fig:DTMGP_graph}}
\end{figure}
If we treat DTMGP as a DNN, then, for $h=1,\cdots,H$, layer $h$ is equipped with $m^{(h)}$ activations $\{k^{(h)}(\cdot,\Bx):\Bx\in\BX^{\rm SG}_{l^{(h)}}\}$, a sparse linear filter $[R^{(h)}]^{-1}$ which converts the dense input to sparse local hierarchical features, bias $\Bmu^{(h)}$, and i.i.d. standard Gaussian weights. Moreover, sparse linear filters $\{[R^{(h)}]^{-1}\}_{h=1}^H$ are fixed for a specific hierarchical expansion. Therefore,  $\{[R^{(h)}]^{-1}\}_{h=1}^H$ can be  computed and stored in advance. Due to Theorems \ref{thm:nnz-cholesky} and \ref{thm:nnz-hierarchical-feature}, the space complexity for storing $\{[R^{(h)}]^{-1}\}_{h=1}^H$ is linear in the number of inducing points and the time complexity of any operation in layer $h$ of DTMGP is $\CalO([\log m^{(h)} ]^{2W^{(h-1)}-1})$.

\subsection{Variational Inference}
{
The proposed DTMGP is mathematically equivalent to a deep neural network with biases $\{\Bmu^{(h)}\}_{h=1}^H$ and independently and normally distributed weights  $\{\BZ^{(h)}\}_{h=1}^H$. So we can parameterize DTMGP as a BNN with  independently and normally distributed weights. Specifically, we regard the variances and means of all Gaussian weights, and the biases as  parameters of the BNN. A DTMGP can be represented as a stochastic process $f_{\Btheta}$ in the parameterized family $\mathcal{F}_{\BTheta}$:
\begin{align*}
     &\mathcal{F}_{\BTheta}=\bigg\{\left[\big(\BSigma^{(H)}\odot\BZ^{(H)}+\Bm^{(H)}\big)\Bphi^{(H)}+\Bmu^{(H)}\right]\circ\cdots\circ  \left[\big(\BSigma^{(1)}\odot\BZ^{(1)}+\Bm^{(1)}\big)\Bphi^{(1)}+\Bmu^{(1)}\right]:\\
     &\quad\quad\quad  \big\{\BSigma^{(h)},\Bm^{(h)},\Bmu^{(h)}\big\}_{h=1}^H\in\BTheta \bigg\},
\end{align*}
where $\BTheta$ denotes the set of parameters
\[\BTheta\coloneqq\bigg\{\BSigma^{(h)}\in \Real_+^{W^{(h)}\times m^{(h)}},\Bm^{(h)}\in\Real^{W^{(h)}\times m^{(h)}},\Bmu^{(h)}\in\Real^{W^{(h)}},h=1,\cdots,H\bigg\},\]
and $A\odot B$ denotes the entry-wise multiplication of matrices $A$ and $B$, i.e., the Hadamard product. In other words, the family $\mathcal{F}_{\BTheta}$ consists of BNNs with activations $\{\Bphi^{(h)}\}_{h=1}^{H}$, parameterized biases, and normally distributed weights.

Given observations $(\BX,\BY)$, our goal is to search for the parameters $\Btheta^*$ that maximizes the marginal likelihood $\prob(\BY\big|\BX,\Btheta)$. The DTMGP $f_{\Btheta^*}$ can best interpret how $(\BX,\BY)$ is generated. Computing $\prob(\BY|\BX,\Btheta)$ of any DGP involves  intractable integral as shown in \eqref{eq:DGP-integral}, but we can obtain a tractable lower bound using VI. VI first assigns a prior $f_{\tilde{\Btheta}}$ with some $\tilde{\Btheta}=\{\tilde{\BSigma}^{(h)},\tilde{\Bm}^{(h)},\tilde{\Bmu}^{(h)}\}_{h=1}^H$ to the underlying stochastic process generating $(\BX,\BY)$ and then maximizes the following  \emph{evidence lower bound} (ELBO):
\begin{equation}\label{eq:ELBO_origin}
    \mathcal{E}(\Btheta)=\E_{f_{\Btheta}}\bigl[\log \Pr(\BY\big|\BX,\Btheta)\bigr]-\mathcal{D}_{KL}\bigl(f_{\Btheta}\big\|f_{\tilde{\Btheta}}\bigr),
\end{equation}
which is the lower bound of log-likelihood $\log \Pr\big(\BY\big|\BX,\Btheta\big)$.

In our framework, the term $\E_{f_{\Btheta}}\bigl[\log \Pr(\BY\big|\BX,\Btheta)\bigr]$ is called the \emph{negative energy} and can be explicitly written as 
\[\E_{f_{\Btheta}}\bigl[\log \Pr(\BY\big|\BX,\Btheta)\bigr]= \E_{\BZ^{(1)},\cdots,\BZ^{(H)}}\bigl[\log \Pr(\BY\big|\BX,\BZ^{(1)},\cdots,\BZ^{(H)},\Btheta)\bigr].\]
This term now can be efficiently estimated by stochastic optimization based on Monte Carlo methods, such as doubly-stochastic approximation \citep{hensman2014nested,Dai14,Salimbeni17}.  The term $\mathcal{D}_{KL}\bigl(f_{\Btheta}\big\|f_{\tilde{\Btheta}}\bigr)$ is the \emph{KL-divergence} between $f_{\Btheta}$ and the prior $f_{\tilde{\Btheta}}$ and it can be explicitly written
\begin{equation*}
    \mathcal{D}_{KL}\bigl(f_{\Btheta}\big\|f_{\tilde{\Btheta}}\bigr)=\frac{1}{2}\sum_{h=1}^{H}\sum_{i=1}^{W^{(h)}}\sum_{j=1}^{m^{(h)}}\frac{\big|[\Bm^{(h)}]_{i,j}-[\tilde{\Bm}^{(h)}]_{i,j}\big|^2}{[\tilde{\BSigma}^{(h)}]^2_{i,j}}+\frac{[\BSigma^{(h)}]^2_{i,j}}{[\tilde{\BSigma}^{(h)}]^2_{i,j}}-1- \log \frac{[\BSigma^{(h)}]_{i,j}^2}{[\tilde{\BSigma}^{(h)}]_{i,j}^2}. 
\end{equation*}

The computation of $\mathcal{E}$ is efficient at any $\Btheta\in\BTheta$, the training process of DTMGP is simply defined as 
\begin{equation}
    \label{eq:max-elbo}
    \Btheta^*=\arg\max_{\Btheta\in\BTheta}\left\{\E_{f_{\Btheta}}\bigl[\log \Pr(\BY\big|\BX)\bigr]-\mathcal{D}_{KL}\bigl(f_{\Btheta}\big\|f_{\tilde{\Btheta}}\bigr)\right\}.
\end{equation}
This optimization problem can also be treated as a penalized regression problem where the negative energy term is the model fitness level and the KL-divergence term is the penalty of being far from the DTMGP prior. Equation \eqref{eq:max-elbo} can be solved efficiently by using the automatic differentiation technique \citep{neidinger2010introduction,Atilim18}. 

}

\section{Simulation Studies}\label{sec:numerical}
In this section, we run simulations on systems with stochastic outputs to access DTMGP's capacity in modeling stochastic processes. More specifically, in each experiment, output $Y$ of the underlying system at each input $\Bx$ follows an \emph{unknown} distribution $F(\cdot|\Bx)$:
\[Y(\Bx)\sim F(y|\Bx).\]
A training set $(\BX_{\rm train},\BY_{\rm train})$ is independently collected where $\BY_{\rm train}=Y(\BX_{\rm train})$ is the realization of the underlying system on inputs $\BX_{\rm train}$.  We then train competing models  using training set $(\BX_{\rm train},\BY_{\rm train})$.  

Let $\widehat{Y}(\cdot)$ denote a trained model.  We first choose a test set $\BX_{\rm test}$ from the input space. For each input $\Bx\in\BX_{\rm test}$, we sample $100$ independent realizations from the true underlying system and from the trained model to get data sets  $\{{Y}_i(\Bx)\}_{i=1}^{100}$ and $\{\widehat{Y}_i(\Bx)\}_{i=1}^{100}$, respectively.  We then use the  two-sample \emph{Kolmogorov–Smirnov} (KS) statistic:
\[{D}_{\Bx}=\sup_{y}\left|\frac{1}{100}\sum_{i=1}^{100}\bold{1}_{\{Y_i(\Bx)\leq y\}}-\frac{1}{100}\sum_{i=1}^{100}\bold{1}_{\{\widehat{Y}_i(\Bx)\leq y\}}\right|\]
to quantify the similarity between $Y$ and $\widehat{Y}$ on input $\Bx$. The smaller $D_{\Bx}$ is, the closer the two distributions are. We access the overall performance of  $\widehat{Y}(\cdot)$  via the following averaged two-sample KS statistic over test set $\BX_{\rm test}$:
\[{D}=\frac{1}{|\BX_{\rm test}|}\sum_{\Bx\in\BX_{\rm test}}D_{\Bx}.\]
We repeat each experiment $R$ times and call each a macro-replication from which we obtain  ${D}_r$, $r=1,\ldots,R$.
Finally, we report the mean and standard deviation 
\begin{align*}
    &\bar{D}=\frac{1}{R}\sum_{r=1}^DD_r,\\
    &\hat{\sigma}=\biggl[\frac{1}{R}\sum_{r=1}^R \biggl({D}_r - \bar{D} \biggr)^2\biggr]^{1/2} .
\end{align*}

In all experiments, activations of DTMGP are hierarchical features generated from TMGPs with the Laplace covariance function $k(\Bx,\Bx')=\exp\{-\frac{\|\Bx-\Bx'\|_1}{d}\}$. We compare DTMGP with the following existing  models:
\begin{enumerate}
    \item \textbf{GP}: standard Gaussian process with squared exponential covariance function. 
    \item \textbf{BNN-ReLU} \citep{schmidt1992feedforward,Graves11,blundell2015weight}: ReLU deep neural networks \citep{Glorot11} with parameterized Gaussian distributed weights and biases. 
     VI is used for training BNN-ReLU.
    \item \textbf{DGP-RFF} \citep{Cutajar17}: A Random Fourier feature expansion of DGP   with each layer represented by Gaussian processes with the Gaussian kernel $\exp\{-\sum_{j=1}^dw_j|x_j-x'_j|^2\}$. 
    DGP-RFF is trained with VI approximation provided in \cite{Cutajar17}.
    \item \textbf{DGP-VEC} \citep{sauer2022vecchia} DGP where each activation is GP with Mat\'ern-$1/2$ kernel. Posterior of the model is approximated through the incorporation of the Vecchia approximation \citep{katzfuss2021general}.
\end{enumerate}

Note that  all deep models can be represented by the same architecture and the difference among them  lies in their activations. For DTMGP, BNN-ReLU, and DGP-RFF, they are BNN with different deterministic activations and random weights. For DGP-VEC, it is composition of GPs. Therefore, for each data set, we  adopt the same architectures for fair comparisons. In  other words, except for the specific activations, all deep models are with equal number of layers,  widths, weights, and biases.

We conduct experiments with two simulation models: a two-dimensional non-Gaussian random field  in Section~\ref{sec:RandomField}, and 
the expected revenue of a 13-dimensional Stochastic Activity Network problem in Section~\ref{sec:ProdLine}. All the experiments are implemented on a computer with macOS, 3.3 GHz Intel Core i5 CPU, and 8 GB of RAM (2133Mhz).
 
\subsection{A Non-Gaussian Random Field}\label{sec:RandomField}
In this section, we use the following non-Gaussian and non-stationary random field in two dimensions to access the sample and computational efficiency of the proposed methodology:
\begin{align*}
    Y(\Bx)=\frac{1}{1+\exp\{B(\Bx)\}},\quad \Bx\in[0,1]^2
\end{align*}
where $B(\Bx)$ is a \emph{Brownian sheet}, which is defined as a zero-mean GP with covariance function $\prod_{d=1}^2\big(1+\min\{x_d,x_d'\}\big)$.

\begin{figure}[ht]
\centering 
\includegraphics[width=0.35\textwidth]{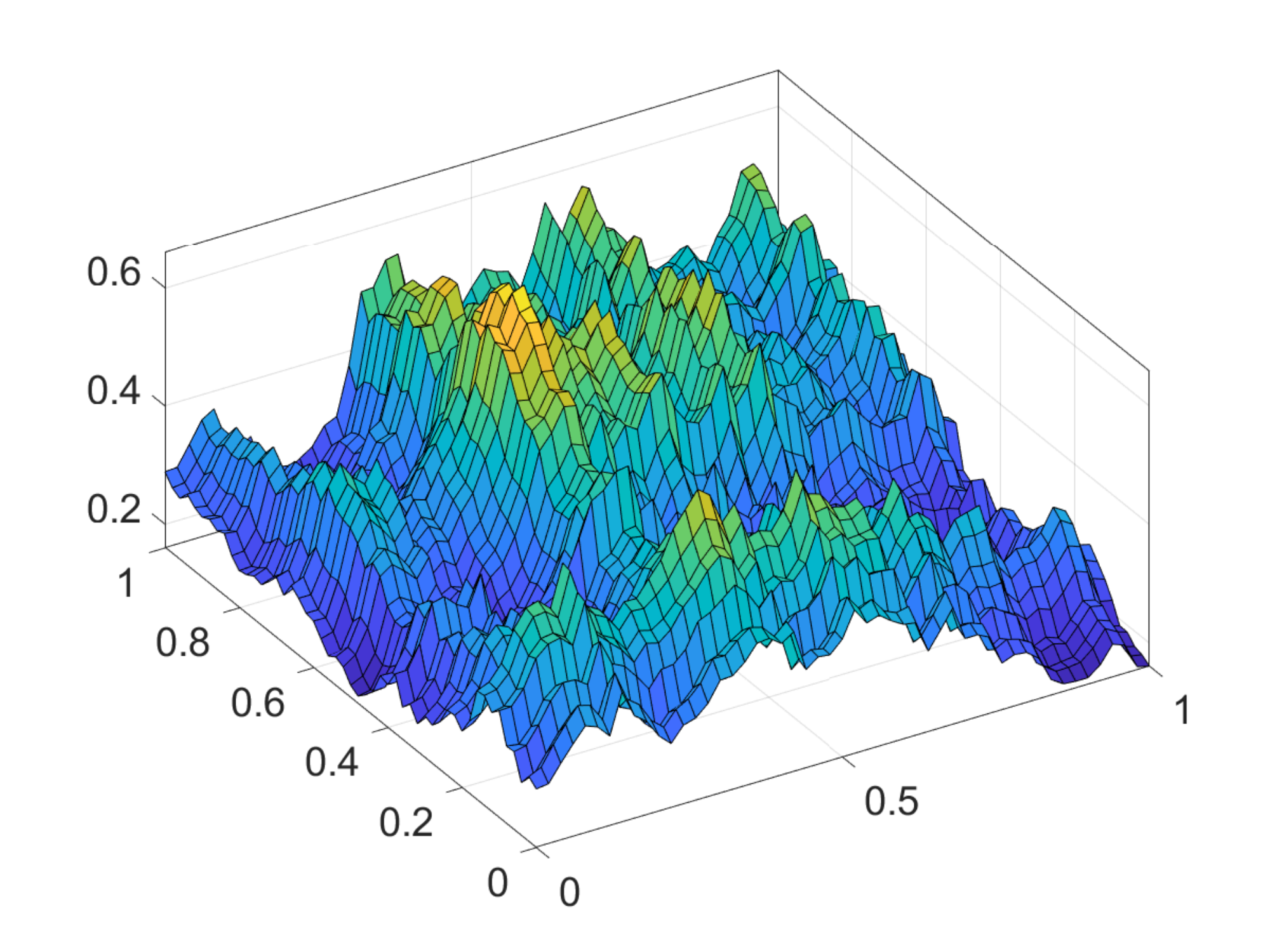} 
\includegraphics[width=0.3\textwidth]{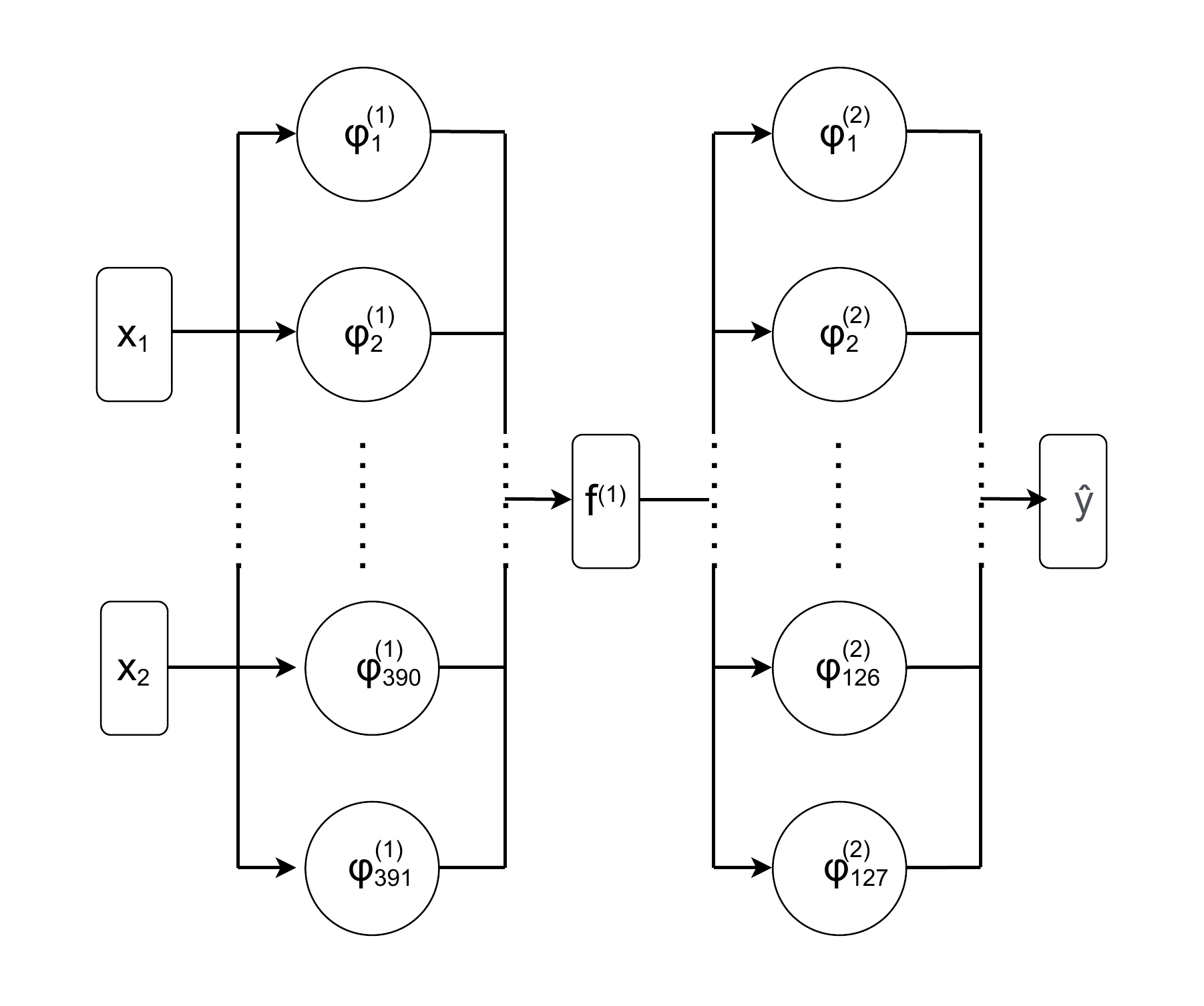}
\caption{\label{fig:RandomField} A sample path of non-Gaussian random field $\frac{1}{1+\exp\{B(\Bx)\}}$. Right: architecture shared by all deep models to capture the random field.}
\end{figure}

In this experiment, $\BX_{\rm train}$ is randomly collected from $[0,1]^2$ and $\BY_{\rm train}$ is from independently sample paths of $B(\Bx)$ on $\BX_{\rm train}$. We investigate the performance  of each method as the sizes of the training set $(\BX_{\rm train},\BY_{\rm train})$  increase.
For each training set, we randomly select $n_{\rm test}=100$ points from $[0,1]^2$ as the test set, denoted as $\BX_{\rm test}$.
We run $R=20$ macro-replications and compute $\hat{D}$ and the associated standard deviation $\hat{\sigma}$ for each method. 

All deep models are with two hidden layers where the first layers consists of 181 activations and 1 output, and the second layer consist of 127 activations and 1 output. Figure \ref{fig:RandomField} shows the realization of the random field and architecture shared by all deep models. We let the i.i.d. prior of each coefficient in DTMGP be normal distributed with mean 0 and variance 1.

Our experiments reveal that DGP-VEC has a significantly longer training time compared to all other models. Specifically, Table \ref{tab:runtime_test_func} demonstrates that while DTMGP has the fastest convergence rate to stable training error, DGP-VEC exhibits the slowest. Consequently, to ensure a fair comparison, we report our results at different training times. First, we report the KS statistics of all models when DTMGP's training error begins to converge, and then we report the KS statistics of all models when DGP-VEC's training error begins to converge.
\begin{table}[]
    \centering
\begin{tabular}{ |p{2cm}|p{1.6cm}|p{1.6cm}|p{1.6cm}|p{1.6cm}|p{1.6cm}|  }
 \hline
 model &n=40& n=80&n=120&n=160&n=200\\
 \hline
 DTMGP&28.7 sec & 53.5 sec &  89.8 sec & 138.1 sec & 166.4 sec\\
 \hline
 DGP-VEC&44.2 sec & 97.9 sec &  255.3 sec & 384.2 sec & 516.7 sec\\
 \hline
\end{tabular}
    \caption{Time Required to Convergence of Training Error}
    \label{tab:runtime_test_func}
\end{table}

\begin{figure}[ht]
\centering 
\includegraphics[width=0.45\textwidth]{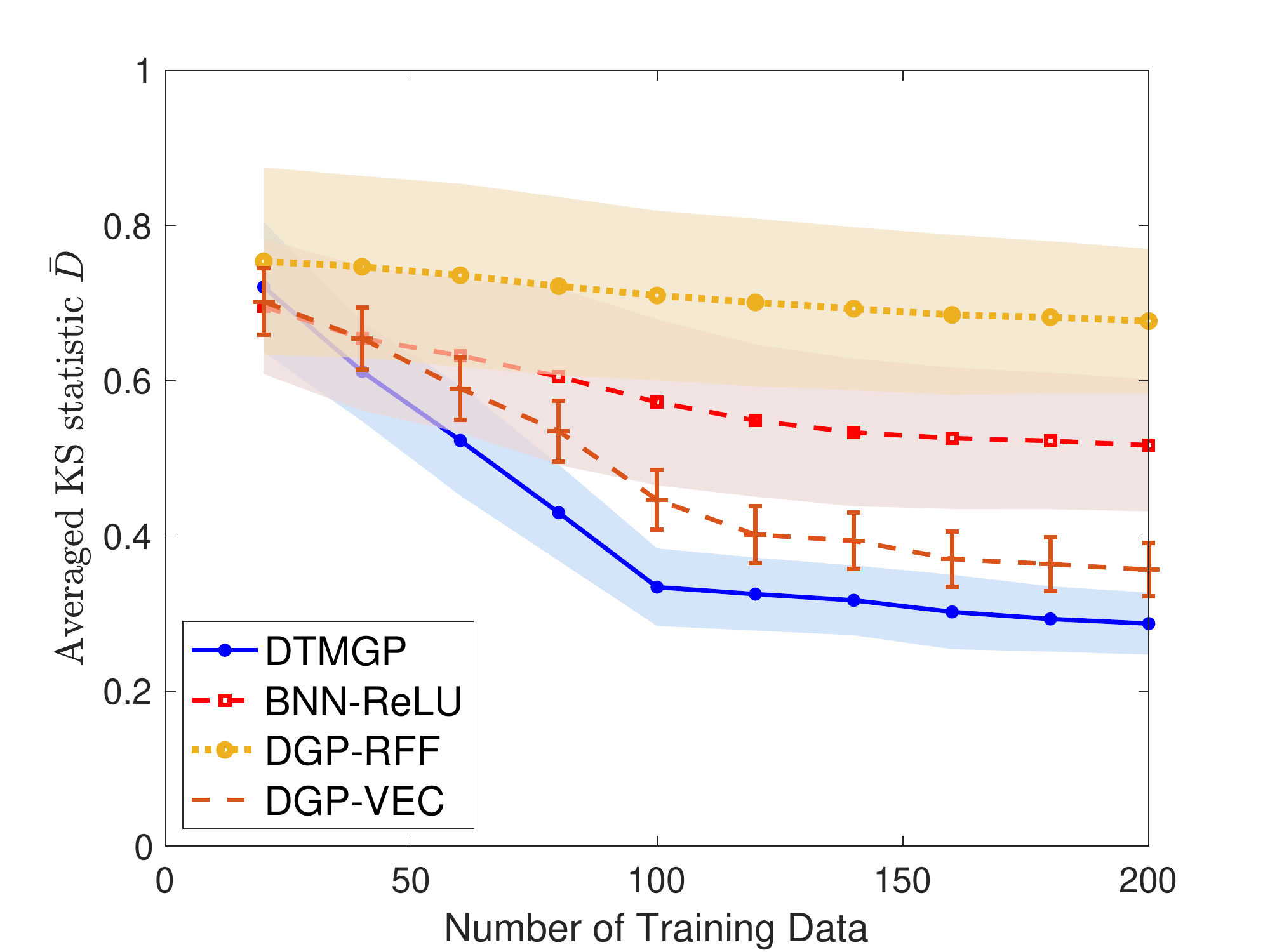} 
\includegraphics[width=0.45\textwidth]{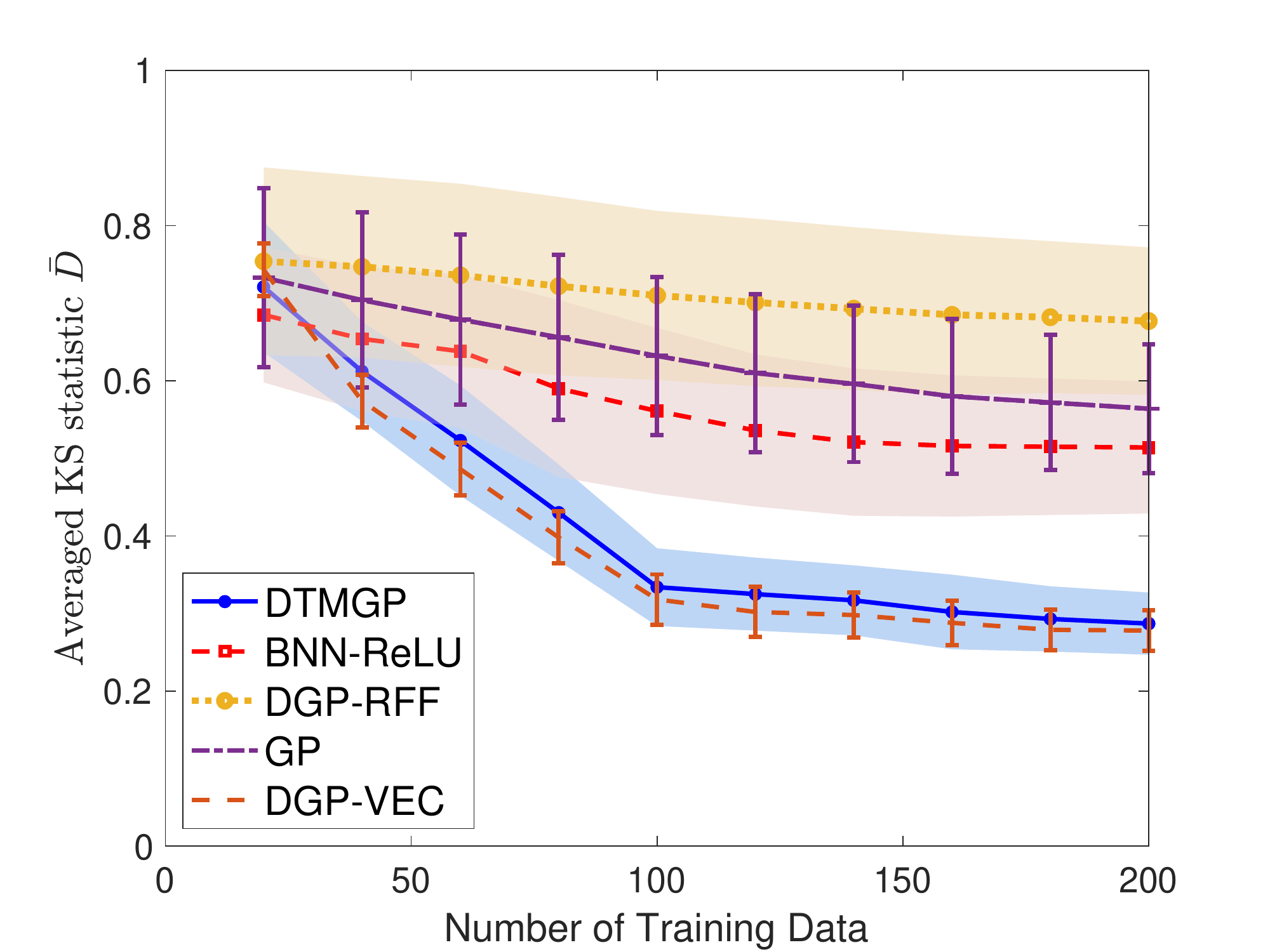} 
\caption{\label{fig:RandomField_result} Left:  $\bar{D}$ of each model, the shaded areas and error bars represent 1 std $\hat{\sigma}$ of all models. Left: results reported when KS statistics of DTMGP begins to converge. Right: results reported when KS statistics of DGP-VEC begins to converge or traing time reaches 4 hours.}
\end{figure}

As shown in Figure \ref{fig:RandomField}, when given enough training time, DGP-VEC outperforms all other models. However, it's worth noting that the performance of DTMGP is still comparable to that of DGP-VEC. If we limit the training time for all models to end once the training error of DTMGP begins to converge, we find that DTMGP - which has the fastest convergence rate - outperforms all other models, including DGP-VEC. This is despite the fact that DGP-VEC requires a training time that is three to four times longer than DTMGP, as it relies on MCMC for training. As it is widely acknowledged that VI is more efficient than MCMC, this serves as an important reminder of the tradeoffs we must consider when selecting a training method.

\subsection{Stochastic Activity Network}\label{sec:ProdLine}
In this subsection, we consider the stochastic activity network (SAN) where the arcs are
labeled from 1 through 13. The detailed explaition of SAN is available in \cite{avramidis1996integrated}. 
As shown in the left of Figure \ref{fig:nn_data} , each arc $i$ in the SAN is associated with
a task with random duration $D_i$ and task durations are mutually independent. Suppose that $D_i$ is exponentially distributed with mean $X_i$ for each $i$. Suppose that we can control $X_i > 0$ for each $i$, but there is an associated cost. In particular, the overall cost is defined as
$$C(\BX)=T(\boldsymbol{\BX}) + f(\BX)$$
where $\BX=(X_1,\cdots,X_{13})$, $T(\BX)$ is the (random) duration of the longest path from a to i, and $f(\BX)=\sum_{i=1}^{13}X_i^{-1}$. Closed form of $C(\BX)$ is unknown but \textsc{MATLAB} simulator of this problem is available in the {SimOpt} library \citep{Simopt}. 

In this experiment,  data are collected from a maximin Latin hypercube design (LHD) that maximizes the minimum distance between points \citep{van2007maximin}. The LHD consists of 5000 sample points from the cube $[0.5, 5]^{13}$. At each sample point, $m$ simulation replications are run with $m=2,4,\cdots,20$. In order to select reasonable prior for the target stochastic process, we first normalize all the output so that all output data in the training set and testing set are distributed on $[0,1]$. We then run $m=10$ replications at each design point during the training process to 
estimate the variance  at each point. Based on the sample estimates, we let the i.i.d. prior of each coefficient in DTMGP be normal distributed with mean 0 and variance 0.04. 

We examine the performance  of each method as the number of replications $m$ increase.
Given $m$, we construct the test set $\BX_{\rm test}$ of size $n_{\rm test}=100$ by random samples from $[0.5,5]^{13}$.
We compute $\hat{D}$ of each method, its associated standard deviation $\hat{\sigma}$ based on $R=20$ macro-replications.

\begin{figure}[ht]
\centering 
\includegraphics[width=0.35\textwidth]{figure/Experiments/SAN.pdf}
\includegraphics[width=0.3\textwidth]{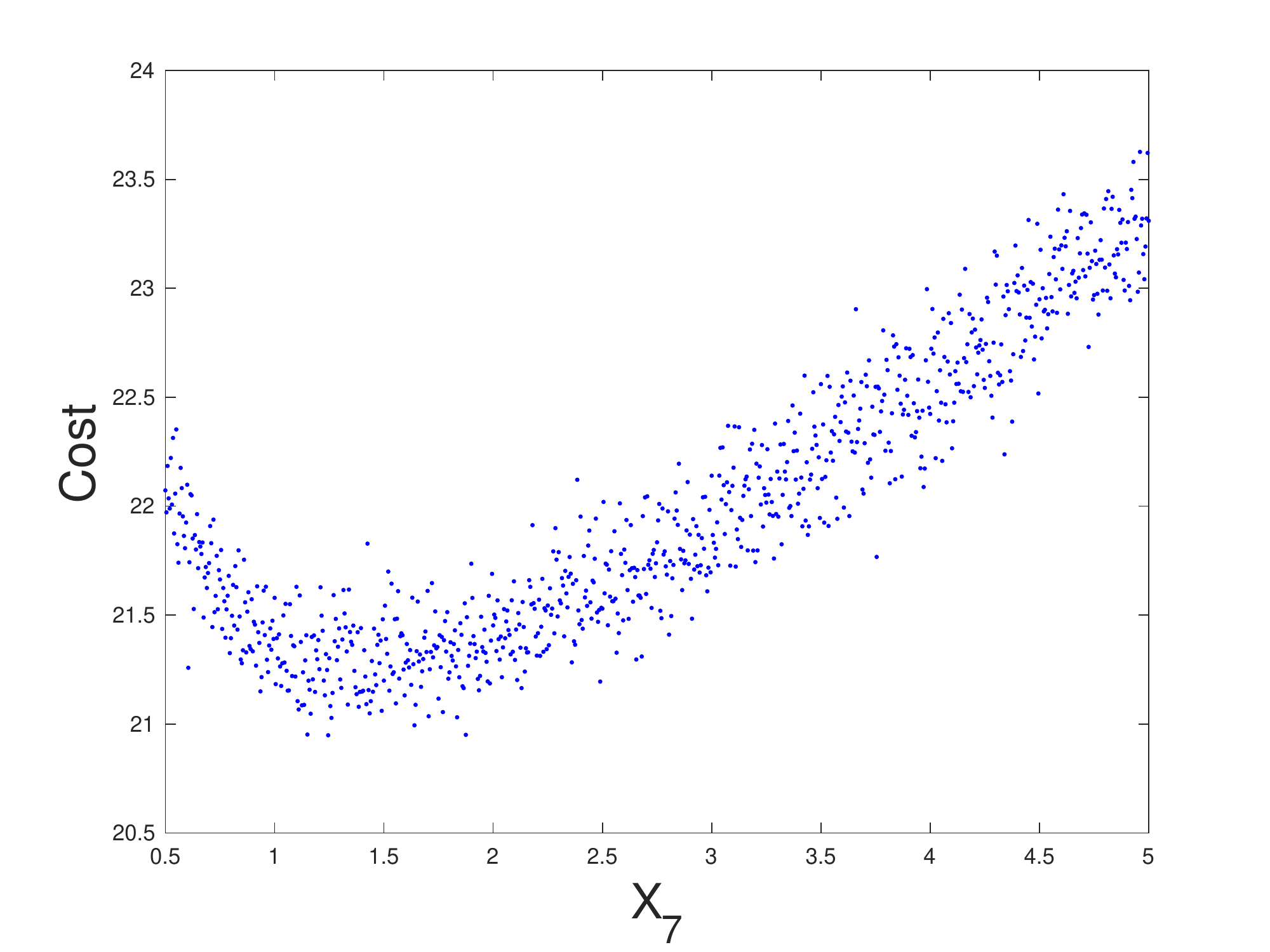} 
\includegraphics[width=0.3\textwidth]{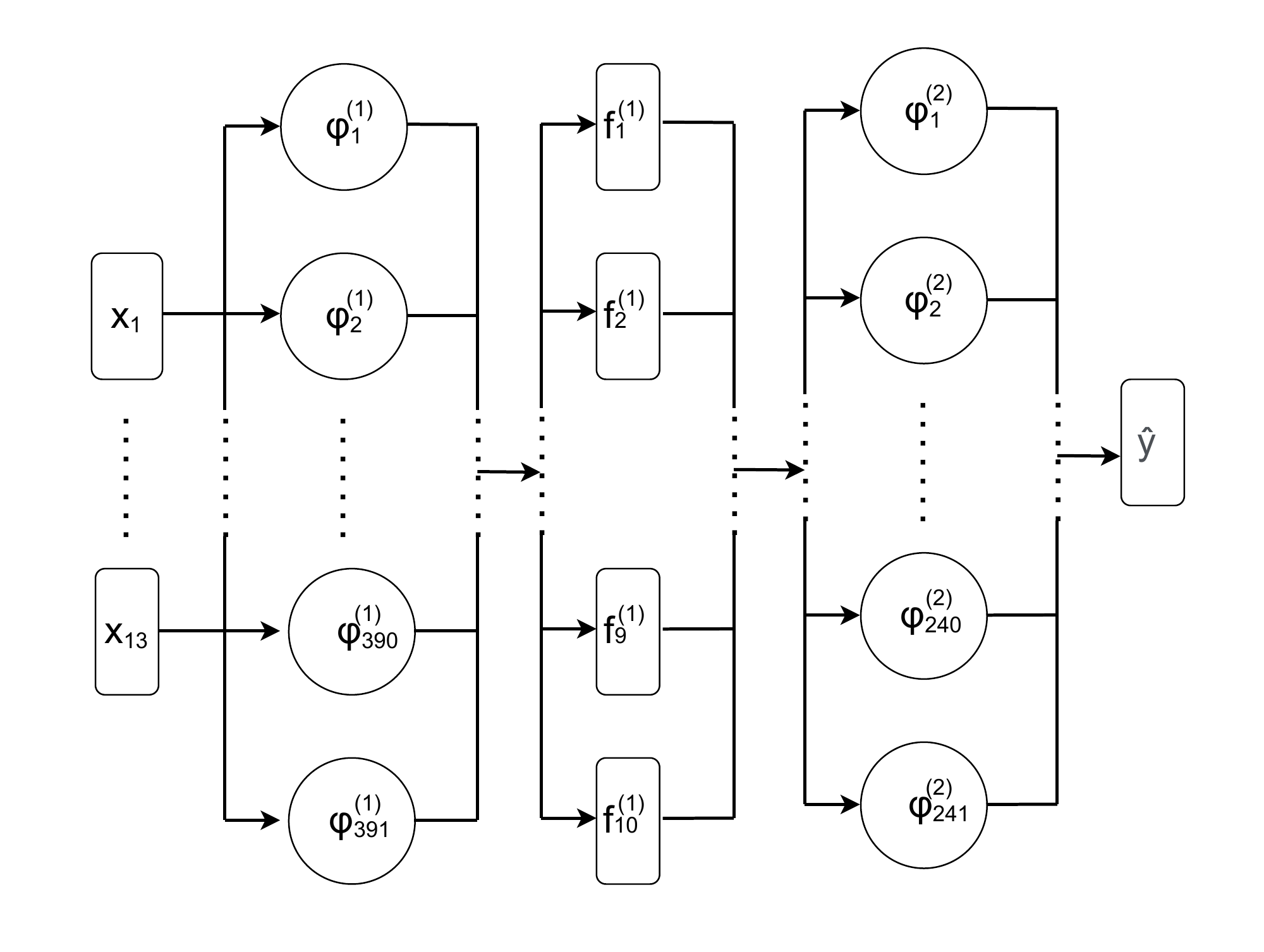} 
\caption{\label{fig:nn_data} Left: structure of SAN. Middle: i.i.d. samples of $C(\BX)$ with $X_i=2.5$, $i\neq 7$ and $X_7\in[0.5,5]$. Right: architecture shared by all competing deep GPs to capture $C(\BX)$. }
\end{figure}

Similar to section \ref{sec:RandomField}, all the competing models in this experiment share the same architecture. All deep models have two hiden layers. The first layer consists of 391 activations and its output is 10-dimensional. The second layer consists of 241 activations and its output is 1-dimensional. Such an architecture is flexible enough to capture the random process. 

\begin{table}[]
    \centering
\begin{tabular}{ |p{2cm}|p{1.6cm}|p{1.6cm}|p{1.6cm}|p{1.6cm}|p{1.6cm}|  }
 \hline
 model &m=4& m=8&m=12&m=16&m=20\\
 \hline
 DTMGP&9.5 min & 17.8 min &  24.3 min & 36.7 min & 46.2 min\\
 \hline
 DGP-VEC&1.8 hr & 3.2 hr &  $>4$ hr & $>4$ hr & $>4$ hr\\
 \hline
\end{tabular}
    \caption{Time Required to Convergence of Training Error}
    \label{tab:runtime_san}
\end{table}

\begin{figure}[ht]
\centering 
\includegraphics[width=0.45\textwidth]{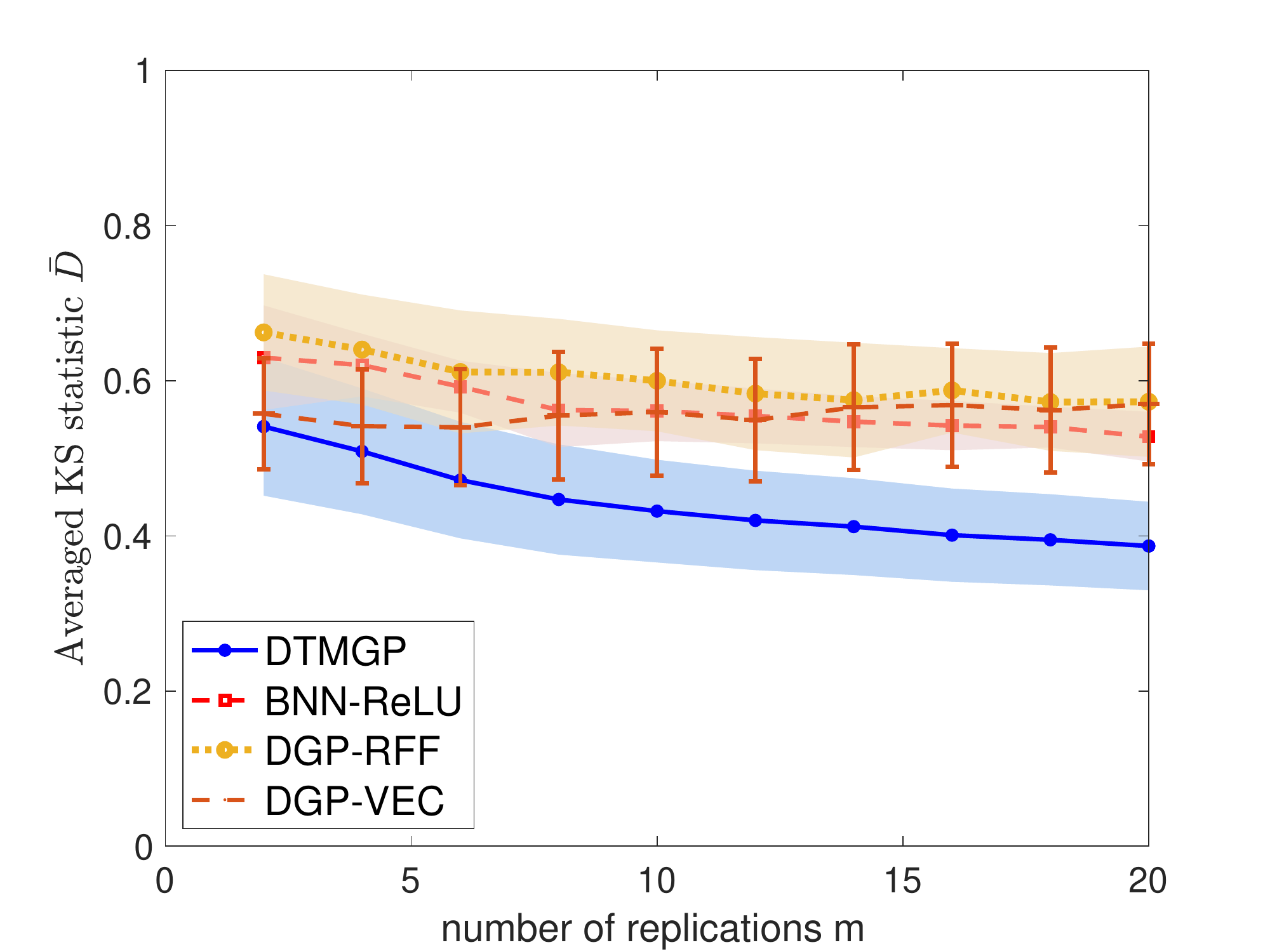} 
\includegraphics[width=0.45\textwidth]{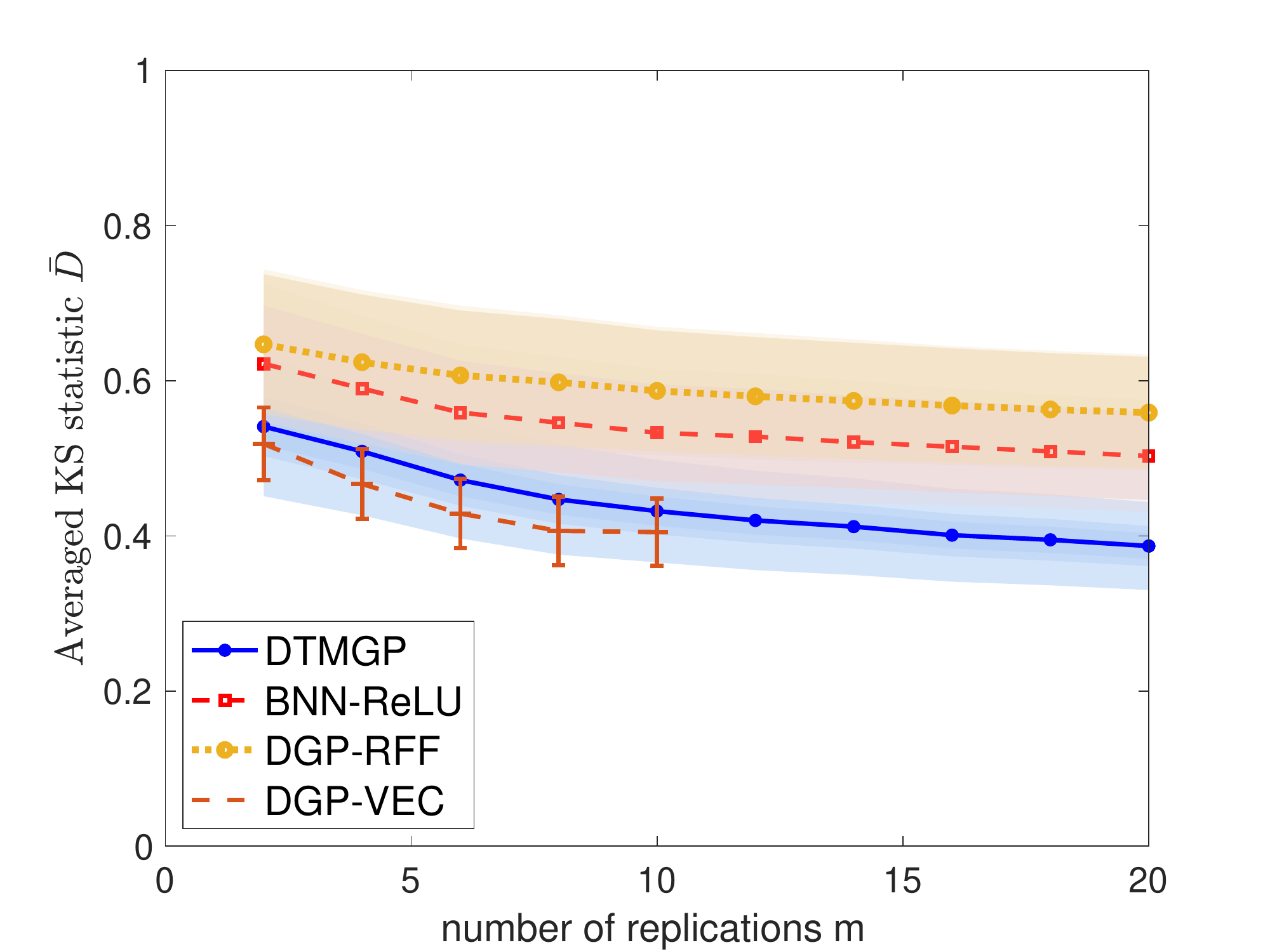} 
\caption{\label{fig:SAN_result}  $\bar{D}$ of each model, the shaded areas and error bars represent 1 std $\hat{\sigma}$ of all models. Left: results reported when KS statistics of DTMGP begins to converge. Right: results reported when KS statistics of DGP-VEC begins to converge.}
\end{figure}

As before, the training time of DGP-VEC is much longer than other models. When $m>10$, the training time required for DGP-VEC is longer than 4 hours, which is unaccetable compared with DTMGP. Therefore, to ensure a fair comparison, we also report our results at different training times.
First, we report the KS statistics of all models when DTMGP’s training error begins to
converge, and then we report the KS statistics of all models when DGP-VEC’s training
error begins to converge or the training time reaches 4 hours.

As demonstrated in Figure \ref{fig:SAN_result}, DGP-VEC outperforms all other models when given sufficient training time. However, DGP-VEC's training time exceeds 4 hours when $m>10$ or when the total training data size is greater than 50000. Therefore, we only present the results of DGP-VEC for $m\leq 10$. On the other hand, DTMGP demonstrates comparable performance, with a training time of roughly one-tenth that required by DGP-VEC. Similarly to Section \ref{sec:RandomField}, when we restrict the training time for all models to stop once the training error of DTMGP starts to stabilize, we observe that DTMGP outperforms all other models, and DGP-VEC performs even worse than BNN-ReLU. This is because the training time is not sufficient for MCMC.

\section{Experiment on Real Data}\label{sec:RC-49}

This section aims to evaluate the performance of the proposed methodology on a real dataset. To this end, we utilize the \textit{RC-49}  dataset \citep{ding2021ccgan}, which is a synthetic collection of 49 3-D chair models rendered at yaw angles ranging from 0\degree to 90\degree. The RC-49 dataset consists of $n=176400$ data pairs $(\BX,\BY)=\{(x_i,\By_i)\}_{i=1}^n$ where, for any $i=1,\ldots,n$, input $x_i \in[0,90)$ is a specific yaw angles and outputs $\By_i$ is a $64\times 64$ image of a chair rendered. The objective is to use the Bayesian formula to train a generative model, which is conditioned on the input variable $x$. When provided with input yaw angle $x$, the trained model should generate a random matrix output that is similar to the data $\BY$ at the same angle. In essence, the generative problem is a stochastic process $Y(x)\in\Real^{64\times 64}$ where $x\in[0,90)$, and our aim is to employ a deep GP model to reconstruct $Y(\cdot)$ from observed data.

  We compare DTMGP with BNN-ReLU and DGP-RFF introduced in Section \ref{sec:numerical}. Other deep GP models are omitted due to their limited capacity for handling the large data size and complex model architecture.  All competing deep GP models share the following architecture. We first apply the embedding technique on  input $x$, which uses a linear transformation $A$ to map the single variable input $x\in[0,90)$ to a 100-dimensional vector $\Bx'\in\Real^{100}$.  We then adopt a four-layer BNN architecture where $\Bx'$ is the input layer, the first hidden layer consists of 201 activations and a 256-dimensional output, the second hidden layer consists of 513 activations and a 512-dimensional output, and the third hidden layer consists of 1025 activations and the final output is a $64\times 64$ random matrix. Specifically, the architecture can be written as follows:
\begin{align*}
    &f^{(1)}_j=\sum_{i=1}^{201}Z_{i,j}^{(1)}\phi_i^{(1)}(A\Bx),\quad j=1,\cdots,256,\ Z_{i,j}^{(1)}\sim\mathcal{N}(0,1), A\in\Real^{100\times 1},\\
    &f^{(2)}_j=\sum_{i=1}^{513}Z_{i,j}^{(2)}\phi^{(2)}_{i}(f^{(1)}_1,\cdots,f^{(1)}_{256}),\quad j=1,\cdots,512,\ Z_{i,j}^{(2)}\sim\mathcal{N}(0,1),\\
    &\hat{Y}_{l,j}=\sum_{i=1}^{1025}Z_{i,l,j}^{(3)}\phi_i^{(3)}(f^{(2)}_1,\cdots,f^{(2)}_{512}),\quad l,j=1,\cdots,28,\ Z_{i,l,j}^{(3)}\sim\mathcal{N}(0,1).
\end{align*}
Figure \ref{fig:RC-49_nn} provides an illustration of the above architecture.

\begin{figure}[t!]
\centering
\includegraphics[width=.8\textwidth]{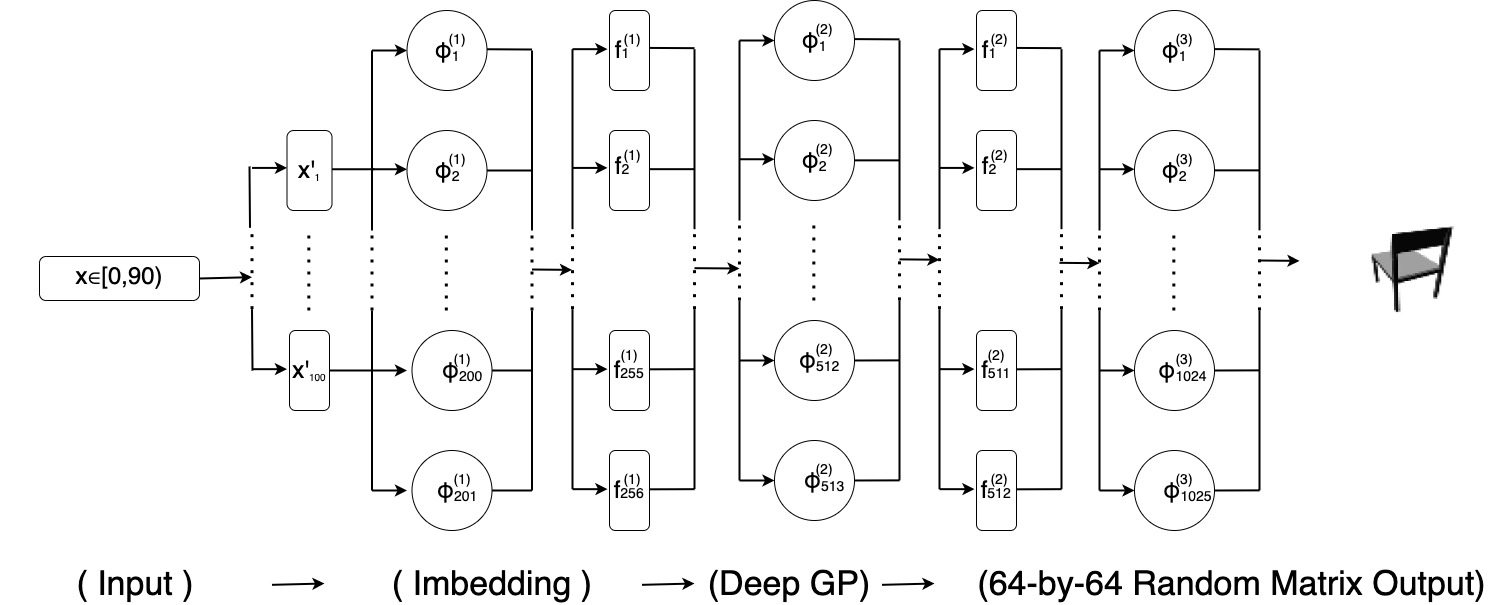}
\caption{ All competing deep GPs share the same architecture. \label{fig:RC-49_nn}}
\end{figure}

We use 80000 samples from the RC-49 dataset to train DTMGP and its competitors BNN-RELU and DGP-RFF. We train the models by mini-batch SGD and, in each epoch, we  sample their random outputs at input $x$, for $x\in[0,90)$.  Because the output in this experiment is a $64\times 64$ random matrix and we are unable to sample data from the true distribution of RC-49, the previous KS statistics cannot be used to access the accuracy. Instead, we directly examine the output images of each model to evaluate their performances.

 \begin{figure}[t!]
\centering
{\includegraphics[width=.6\textwidth]{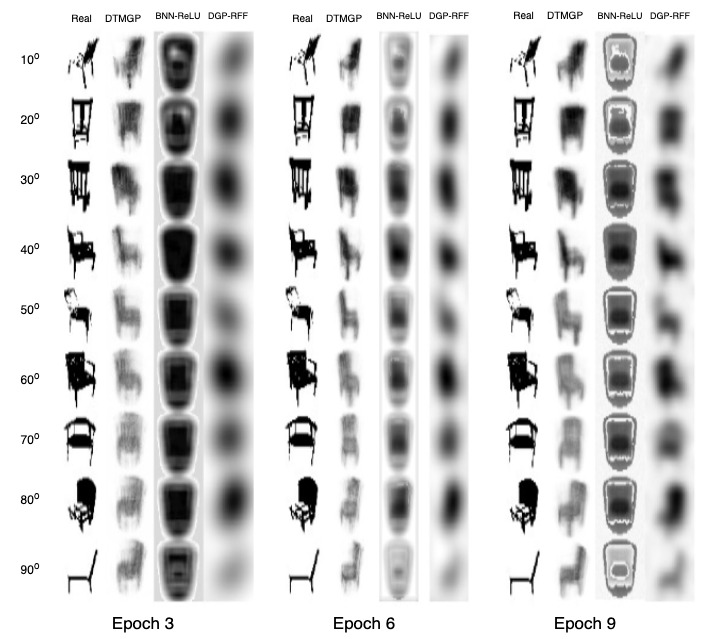}}

\caption{\label{fig:RC-49} Results at the three epoch (left), six epoch (middle), and nine epoch (right) during training with input $x=10,20,\cdots,90$.  }
\end{figure}

\begin{figure}[t!]
\centering
\includegraphics[width=.6\textwidth]{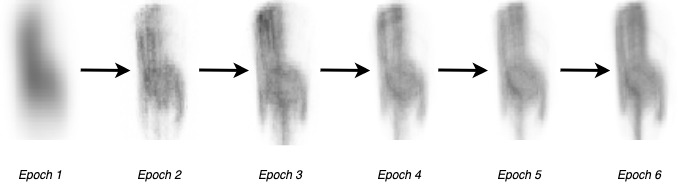}

\caption{\label{fig:RC-49_DTMGP} Generated images of DTMGP with x=60}
\end{figure}

\begin{table}[]
    \centering
\begin{tabular}{ |p{2.4cm}|p{2.4cm}|p{2.4cm}|p{2.4cm}|p{2.4cm}|  }
 \hline
 Model& 1 Epoch & 3 Epoch &6 Epoch &9 Epoch\\
 \hline
DTMGP & 1099.3 sec &  3115.2 sec&  6117.6 sec&  8953.1 sec  \\
 \hline
 BNN-ReLU & 971.5 \ sec& 2511.9 sec & 4997.2 sec  & 7301.1 sec  \\
 \hline
 DGP-RFF & 1081.4 sec  & 2987.7 sec sec & 5823.4 sec & 8682.8 sec \\
 \hline
\end{tabular}
    \caption{Training Time }
    \label{tab:RC-49_run_time}
\end{table}

Conditional samples generated by DTMGP, BNN-ReLU, and DGP-RFF at the $3^{\rm rd}$, $6^{\rm th}$ and $9^{\rm th}$ epochs are shown in  Figure \ref{fig:RC-49}. The results demonstrate that DTMGP generates samples that are closer to the actual data and performs the best, while other models fail to reconstruct the conditional samples. This is because, as observed in previous experiments, the hierarchical structure of DTMGP can effectively capture the local features while the ReLU or cosine activations of BNN-ReLU or DGP-RFF lack the ability to do so.

As depicted in Figure \ref{fig:RC-49_DTMGP}, DTMGP exhibits a faster convergence rate towards stability with more consistent generated results. The rapid convergence suggests that the model can be trained in few epochs, while the consistent performance  implies that the model is reliable. This may be attributed to the sparsity introduced by our expansion approach, which facilitates feature detection and speeds up parameter determination.

Table \ref{tab:RC-49_run_time} demonstrates that the training time per epoch of both BNN-ReLU and DGP-RFF is faster than that of DTMGP. This difference could be primarily attributed to the fact that BNN-ReLU and DGP-RFF are built using built-in functions and packages in the MATLAB, while our code for DTMGP does not take this advantage. Nonetheless, the training time per epoch of DTMGP is still comparable to that of BNN-ReLU and DGP-RFF.

\section{Conclusions and Discussion}\label{sec:conclusion}
We utilize hierarchical features to expand DGPs, which comprise TMGPs. Our expansion is known as DTMGP and is computationally efficient, resulting in a sparse representation of DGPs. The hierarchical nature of the features means that their supports are either nested or disjoint, enabling a poly-logarithmic number of activations throughout a DTMGP to have non-zero output conditioned on any input or operation. Furthermore, hierarchical features can effectively capture local features from inputs, resulting in high performance in prediction and generative problems, as demonstrated in our experiments. In comparison to existing DGP models, DTMGP's sparsity leads to efficient inference and training, while generated instances from DTMGP are also relatively accurate under the KS test.

The current paper can potentially be extended in several ways. Firstly,  how to apply hierarchical expansion on DGPs with more general structures, such as DGPs with M\'atern covariance functions, can be studied in future research. Secondly,  { it was pointed out by \cite{stein2014limitations} that, approximating the likelihood function of GP by low-rank methods, such as inducing point and random features, may have an adverse effect on the performance. How large this effect is for the proposed DGP models should be studied in further investigation. }

\section*{Acknowledgements}
The authors are grateful to the editors and referees for very helpful comments.

\section*{Funding}
Tuo's research is supported by NSF DMS-1914636 and CCF-1934904 and 2022 Texas A\&M Institute of Data Science Career Initiation Fellow Program. Part of Ding's work was conducted when he was a postdoctoral researcher at Texas A\&M University and was partially supported by the Texas A\&M Institute of Data Science (TAMIDS) Postdoctoral Fellowships.
 
\appendix
\begin{center}
    \LARGE\textbf{Appendix}
\end{center}
\section{Sparse Grid Designs with Required Order}\label{sec:sparse-grid}
The class of SG adopted in this work is called \emph{Hyperbolic-cross} SG, which is the union of full grids (FG) with dyadic structures. Without loss of generality, we can assume that all design points are collected from the hypercube $(0,1)^d$, and we start constructing SGs satisfying the required order with the one-dimensional case, which is also called dyadic point set.

A one-dimensional level-$l$ dyadic  point set $\BX_l$ with increasing order is simply defined as $\BX_l=\{2^{-l},2\cdot2^{-l},\cdots,1-2^{-l}\}$. However, this order cannot result in a sparse representation of hierarchical expansion. We need to sort the points according to the their levels. Firstly, given a level $l\in\NatInt$, we define the following set consisting only of odd numbers:
\[\Brho(l)=\{1, 3,5,\cdots,2^l-1\}.\] 
Then given a level-$l$ dyadic  point set $\BX_{l}$ and a level-$(l-1)$ dyadic  point set  $\BX_{l-1}$ , we can define the sorted incremental set 
\[\BD_l=\BX_{l}-\BX_{l-1}=\big\{i 2^{-l}: i\in\Brho(l)\big\}, \] 
with $\BD_0=\emptyset$. It is straightforward to check that
\begin{align*}
    \BX_l=\bigcup_{\ell=1}^l\BD_{\ell},\quad \BD_l\bigcap\BD_k=\emptyset,\ \text{if}\ l\neq k.
\end{align*}
For any incremental set $\BD_l$, we can label any point in $\BD_l$ by $x_{l,i}$, $i\in\rho(l)$ so that the label is unique for any point: $x_{l,i}\neq x_{l',i'}$ if and only if $(l,i)\neq (l',i')$. Now we can define the following dyadic set with the required order:
\[\BX_l^*=\big[\BD_1,\BD_2,\cdots,\BD_l\big]=[x_{\ell,i}:\ell\leq l,i\in\Brho(\ell)].\]
For example, $\BX^*_3=[\frac{1}{2},\frac{1}{4},\frac{3}{4},\frac{1}{8},\frac{3}{8},\frac{5}{8},\frac{7}{8}]$ with $\BD_1=[\frac{1}{2}]$, $\BD_2=[\frac{1}{4},\frac{3}{4}]$, and $\BD_3=[\frac{1}{8},\frac{3}{8},\frac{5}{8},\frac{7}{8}]$, respectively.

We also define the Cartesian product of one-dimensional sorted $\BX^*_l$'s for later use in Algorithms and proofs. We call this sorted  Cartesian product full grid (FG) $\BX^*_{\Bl}$ labeled by $\Bl=(l_1,\cdots,l_d)\in\NatInt^d$:
\[\BX_{\Bl}^*=\bigtimes_{j=1}^d\BX^*_{l_j}=\big[(x_{\ell_1,i_1},x_{\ell_2,i_2},\cdots,x_{\ell_d,i_d}): {\ell_j\leq l_j}, i_j\in\Brho(\ell_j),j=1,\cdots,d\big].\]
For simplicity, we define the Cartesian product $\Brho(\Bl)=\times_{j=1}^d\Brho(l_j)$ for any $\Bl\in\NatInt^d$ so that any points in $\BX^*_{\Bl}$ can be represented as $\Bx_{\Bl,\Bi}=(x_{l_1,i_1},x_{l_2,i_2},\cdots,x_{l_d,i_d})$ for any $\Bl\in\NatInt^d$ and $\Bi\in\Brho(\Bl)$. Then we have a more compact representation of $\BX^*_{\Bl}$:
\[\BX_{\Bl}^*=\big[\Bx_{\boldsymbol{\ell},\Bi}: \boldsymbol{\ell}\leq\Bl,\Bi\in\Brho(\bold{\ell})\big].\]
For example, the sorted FG $\BX^*_{(1,2)}$ is
\[\BX^*_{(1,2)}=[   (\frac{1}{2},\frac{1}{2}),(\frac{1}{2},\frac{1}{4}),(\frac{1}{2},\frac{3}{4}) ].\]

Finally, a level-$l$ SG satisfying our required order is the union of sorted FGs as follows:
\[\BX^{\rm SG}_l=\bigcup_{|\Bl|=l+d-1}\BX^*_\Bl=\big[\Bx_{\Bl,\Bi}: |\Bl|\leq l,\Bi\in\Brho(\Bl)\big],\]
where $|\Bl|$ denote the $l_1$ norm $\sum_{j=1}^d|l_j|$. To be more specific, a point $\Bx_{\Bl,\Bi}\in\BX^{\rm SG}_l$ must be ahead of any $\Bx_{\Bl'}$ with $|\Bl'|>|\Bl|$. For example, the two-dimensional sorted SG $\BX^{\rm SG}_2$ is
\[\BX^{\rm SG}_2=\big[\underbrace{(\frac{1}{2},\frac{1}{2})}_{\text{level}\ |(1,1)|=2},\underbrace{(\frac{1}{2},\frac{1}{4}),(\frac{1}{2},\frac{3}{4})}_{\text{level}\ |(1,2)|=3},\underbrace{(\frac{1}{4},\frac{1}{2}),(\frac{3}{4},\frac{1}{2}}_{\text{level}\ |(2,1)|=3}) \big].\]

Unlike FG, whose size increases exponentially in dimension $d$, the size of SGs increases relatively mildly in $d$. Lemma~3.6 in \cite{bungartz_griebel_2004} stipulates that the sample size of a $d$-dimensional level-$l$ $\BX_{l}^{\rm SG}$ is given by
\begin{equation}\label{eq:SSGNumpt}
    |\BX_{l}^{\rm SG}|=\sum_{\ell=0}^{l-1}2^\ell \binom{\ell+d-1}{d-1} = 2^l\cdot\left(\frac{l^{d-1}}{(d-1)!}+\CalO(l^{d-2})\right)=\CalO( 2^l l^{d-1}).
\end{equation}

\section{Algorithms}\label{sec:algorithm}
We first introduce how to compute $R^{-1}$, with $R$ the Cholesky decomposition of $k(\BX^*_l,\BX^*_l)$, $k$ is a one-dimensional Markov kernel and $\BX^*_l$ is a one-dimensional sorted level-$l$ dyadic point set. To present the algorithm for computing the inverse of Cholesky decomposition, we also label the entries on matrices by points in $\BX^{*}_l$. For example, $[R^{-1}]_{x,x'}$ represents entry with row index corresponding to point $x\in\BX^*_l $ and column index corresponding to point $x'\in\BX^*_l$.

\begin{algorithm}[ht]
\SetAlgoLined
\DontPrintSemicolon
\SetKwInOut{Input}{Input}
\SetKwInOut{Output}{Output}
\Input{Markov kernel $k$,  level-$l$ dyadic points  $\BX_l^*$ }
\Output{$R^{-1}$}
\BlankLine
Initialize $R^{-1}\leftarrow \bold{0} \in\Real^{(2^l-1)\times (2^l-1)}$, $\bold{N}=\{-\infty,\infty\}$, define $k(\pm\infty,x)=0,\forall x$\;
\For{$\ell \leftarrow 1$ \KwTo $l$}{
    \For{$i \in\Brho(\ell) $}{
        search the closest left neighbor $x_{left}$ and right neighbor $x_{right}$ of $x_{\ell,i}$ in $\bold{N}$\;
        Solve $c_1,c_2$, and $c_3$ for the following system:
        \vspace{-2ex}
        \begin{equation}
            \label{eq:1-D-cholesky-system}
        \begin{aligned}
            &c_1k(x_{left},x_{left})+c_2k(x_{\ell,i},x_{left})+c_3k(x_{right},x_{left})=0,\\
            &c_1k(x_{left},x_{right})+c_2k(x_{\ell,i},x_{right})+c_3k(x_{right},x_{right})=0,\\
            &\mathbb{E}\big[\big(c_1\CalG(x_{left})+c_2\CalG(x_{\ell,i})+c_3\CalG(x_{right})\big)^2\big]=1
        \end{aligned}
        \end{equation}\;
        \vspace{-6ex}
        \textbf{If} $x_{left}\neq -\infty$, $[R^{-1}]_{x_{left},x_{\ell,i}}=c_1$\; 
        \textbf{If} $x_{right}\neq \infty$, $[R^{-1}]_{x_{right},x_{\ell,i}}=c_3$\;
        Let $[R^{-1}]_{x_{\ell,i},x_{\ell,i}}=c_2$, $x_{\ell,i}\rightarrow \bold{N} $

    }
}
Return $R^{-1}$ \;
\caption{Computing $R^{-1}$ for Markov kernel $k$ and point set $\BX_l^*$}\label{alg:Cholesky-1D}
\end{algorithm}

Now let $k=\prod_{j=1}^dk_j$ be a $d$-dimensional TMK and $\BX^*_\Bl=\bigtimes_{j=1}^d \BX^*_{l_j}$ be a sorted FG. Then, $R^{-1}_{\Bl}$, with $R_{\Bl}$ the Cholesky decomposition of $k(\BX^*_\Bl,\BX^*_\Bl)$ can be directly calculated as
\begin{equation}
    \label{eq:cholesky-tensor}
    R_{\Bl}^{-1}=\bigotimes_{j=1}^dR^{-1}_{l_j},
\end{equation}
where $\bigotimes$ denotes the Kronecker product between matrices and $R_{l_j}^{-1}$ is the inverse Cholesky decomposition yielded by Algorithm \ref{alg:Cholesky-1D} with input $k_j$ and $\BX^*_{l_j}$.

 Let $R_{\BX^*_\Bl,\BX^*_\Bl}$ represent a sub-matrix of $R$ consisting of entries $R_{\Bx,\By}$, $\Bx,\By\in \BX^*_\Bl\subset \BX^{\rm SG}_l$. Now we can present the algorithm of constructing $R_l^{-1}$
\begin{algorithm}[ht]
\SetAlgoLined
\DontPrintSemicolon
\SetKwInOut{Input}{Input}
\SetKwInOut{Output}{Output}
\Input{TMK $k$,  level-$l$ SG  $\BX_l^{\rm{SG}}$ }
\Output{$R_l^{-1}$}
\BlankLine
Initialize $R^{-1}\leftarrow \bold{0} \in\Real^{m_l\times m_l}$\;
\For{all $\Bl\in\NatInt^d$ with $ l\leq |\Bl| \leq l+d-1$ }{
Compute $R_{\Bl}^{-1}$ associated  to $(k,\BX^*_\Bl)$  via Algorithm~\ref{alg:Cholesky-1D} and \eqref{eq:cholesky-tensor}\;
Update $R_l^{-1}$ via
\vspace{-2ex}
\begin{align}
[R_{l}^{-1}]_{\BX^*_{\Bl},\BX^*_{\Bl}} \leftarrow{}& [R_{l}^{-1}]_{\BX^*_{\Bl},\BX^*_{\Bl}} + (-1)^{l+d-1-|{\Bl}|} \binom{d-1}{l+d-1-|{\Bl}|} R_{\Bl}^{-1} \label{eq:update-Rl}
\end{align}
\vspace{-2ex}
}
Return $R_l^{-1}$ \;
\caption{Computing $R_l^{-1}$ for TMK $k$ and level-$l$ SG $\BX^{\rm SG}_l$\label{alg:Cholesky-SG}}
\end{algorithm}

We will prove the correctness of Algorithm~\ref{alg:Cholesky-1D} and  Algorithm~\ref{alg:Cholesky-SG} in the supplementary material.

\bibliographystyle{chicago}
\spacingset{1}
\bibliography{IISE-Trans}

\end{document}


		\def\spacingset#1{\renewcommand{\baselinestretch}%
			{#1}\small\normalsize} \spacingset{1}
		
		\if0\blind
		{
			\title{\bf Supplementary Material for ``A Sparse Expansion For Deep Gaussian Processes''}
			\author{Liang Ding $^{a1}$, Rui Tuo $^{a2}$ and Shahin Shahrampour$^b$ \\
			$^a$ Industrial \& Systems Engineering, Texas A\&M University, College Station, TX \\
             $^b$ Mechanical \& Industrial Engineering, Northeastern University, Boston, MA }
			\date{}
			\maketitle
		} \fi
		
		\if1\blind
		{
            \title{\bf Supplementary Material for ``A Sparse Expansion For Deep Gaussian Processes''}

			\date{}
			\maketitle
			
		} \fi
		\bigskip

In the first and second sections, we provide detailed discussions on Algorithm 1 and Algorithm 2, respectively. We then present the proofs for Theorem 1 and Theorem 2 in Section \ref{sec:thm1_pf} and Section \ref{sec:thm2_pf}, respectively. At the end, we discuss the approximation capacity by introducing two more theorems regarding the convergence rates of DTMGP in Section \ref{sec:HE_approx} and Section \ref{sec:DTMGP_approx}, respectively. 

\section{Discussion on Algorithm 1}
Before presenting the essential idea of Algorithm 1 in Appendix B, we first need to prove the following lemma and theorem.
\begin{lemma}\label{lem:compact_supp}
Let $k$ be a TMK defined on interval $U$ and $\CalG$ be the GP generated by $k$. For any $x<y<z$ in $U$, suppose $c_1,c_2,c_3$ satisfies the following conditions:
 \begin{align*}
            &c_1k(x,x)+c_2k(y,x)+c_3k(z,x)=0,\\
            &c_1k(x,z)+c_2k(y,z)+c_3k(z,z)=0,\\
            &\mathbb{E}\big[\big(c_1\CalG(x)+c_2\CalG(y)+c_3\CalG(z)\big)^2\big]=1.
        \end{align*}
Then function $\phi(\cdot)=c_1k(\cdot,x)+c_2k(\cdot,y)+c_3k(\cdot,z)$ is compactly supported on $[x,z]$ and  $\langle \phi,\phi\rangle_{k}=1$ where $\langle\cdot,\cdot\rangle_k$ is the inner product generated by $k$. 
\end{lemma}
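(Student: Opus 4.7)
The plan is to exploit two structural features that a TMK kernel $k$ must enjoy: (i) a Markov-type factorization of $k(s,u)$ through any intermediate point $t$, and (ii) the reproducing property of its RKHS. The first will give compact support, the second will give the norm identity.

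First, I would rewrite the two linear hypotheses using symmetry of $k$: they say exactly $\phi(x)=0$ and $\phi(z)=0$. Then, for any $w\in U$ with $w<x$, I would invoke the Markov factorization of $k$ at the intermediate point $x$, namely $k(w,t)=k(w,x)\,k(x,t)/k(x,x)$ for every $t\in\{x,y,z\}$ (this should be either part of the definition of a TMK or a direct consequence of the fact that $\CalG$ is a Gauss--Markov process, via $\mathbb{E}[\CalG(w)\CalG(t)]=\mathbb{E}\big[\mathbb{E}[\CalG(w)\mid\CalG(x)]\CalG(t)\big]$ for $w\leq x\leq t$). Plugging these factorizations into the definition of $\phi$ yields
\[
\phi(w)=\frac{k(w,x)}{k(x,x)}\bigl(c_1 k(x,x)+c_2 k(x,y)+c_3 k(x,z)\bigr)=0,
\]
so $\phi$ vanishes identically on $U\cap(-\infty,x)$. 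A symmetric argument, this time factorizing through the point $z$, gives $\phi(w)=\frac{k(w,z)}{k(z,z)}\bigl(c_1 k(x,z)+c_2 k(y,z)+c_3 k(z,z)\bigr)=0$ for every $w>z$. Combined with $\phi(x)=\phi(z)=0$, this establishes the compact support claim on $[x,z]$.

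For the norm, I would simply use the reproducing identity $\langle k(\cdot,s),k(\cdot,t)\rangle_k=k(s,t)$ and expand bilinearly:
\[
\langle \phi,\phi\rangle_k=\sum_{i,j=1}^3 c_i c_j\, k(p_i,p_j)=\mathbb{E}\Bigl[\bigl(c_1\CalG(x)+c_2\CalG(y)+c_3\CalG(z)\bigr)^2\Bigr]=1,
\]
with $(p_1,p_2,p_3)=(x,y,z)$, where the middle equality is the standard identification between the RKHS Gram matrix and the covariance matrix of the evaluations of $\CalG$.

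The only nontrivial step is justifying the Markov-type factorization of $k$ through an intermediate point; I expect this either to be built into the TMK definition or to follow in one line from conditional independence for Gauss--Markov processes. Everything else is a direct computation, so I anticipate no further obstacle.
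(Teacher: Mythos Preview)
Your proof is correct and follows essentially the same approach as the paper's. The paper uses the explicit TMK representation $k(s,t)=p(s\wedge t)\,q(s\vee t)$ directly, writing $\phi(s)=p(s)\bigl(c_1q(x)+c_2q(y)+c_3q(z)\bigr)=0$ for $s<x$ (and symmetrically for $s>z$), which is exactly your Markov factorization $k(w,t)=k(w,x)k(x,t)/k(x,x)$ after dividing through by $k(x,x)=p(x)q(x)$; the norm argument via the reproducing property and the covariance matrix is identical.
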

\proof
From the definition of TMK in Lemma 1 in the main paper, the first two conditions can be written as
\begin{align*}
    &p(x)\big(c_1q(x)+c_2q(y)+c_3q(z)\big)=0\Rightarrow \quad c_1q(x)+c_2q(y)+c_3q(z) =0\\
    &q(z)\big(c_1p(x)+c_2p(y)+c_3p(z)\big)=0\Rightarrow \quad c_1p(x)+c_2p(y)+c_3p(z)=0.
\end{align*}
So  for any $s<x$
\begin{align*}
     \phi(s)&=c_1p(x\wedge s)q(x\vee s)+c_2p(y\wedge s)q(y\vee s)+c_3p(z\wedge s)q(z\vee s)\\
     &=p(s)\big(c_1q(x)+c_2q(y)+c_3q(z)\big)\\
     &=0,
\end{align*}
and for any $s>z$
\begin{align*}
    \phi(s)=q(s)\big(c_1p(x)+c_2p(y)+c_3p(z)\big)=0.
\end{align*}
This proves that the support of $\phi$ is $[x,z]$.

Let $\bold{C}^T$ denote vector $[c_1,c_2,c_3]$, $\bold{G}^T$ denote vector $[\CalG(x),\CalG(y),\CalG(z))]$, $\boldsymbol{\gamma}^T$ denote vector $[k(\cdot,x),k(\cdot,y),k(\cdot,z)]$ and $\bold{K}$ denote the covariance matrix on $x,y,z$. Then the third condition reads
\begin{align*}
    \mathbb{E}\big[\bold{C}^T\bold{G}\bold{G}^T\bold{C}\big]=\bold{C}^T\bold{K}\bold{C}=1.
\end{align*}
Therefore, we have 
\begin{align*}
    \langle\phi,\phi\rangle_k&=\langle \bold{C}^T\boldsymbol{\gamma}, \boldsymbol{\gamma}^T\bold{C}\rangle_k=\bold{C}^T\bold{K}\bold{C}=1.
\end{align*}
\endproof

We can notice that equation (14) in Algorithm 1 is exactly the three conditions listed in Lemma \ref{lem:compact_supp}. The purpose of Algorithm 1 is, in fact, generate compactly supported orthonormal basis functions in the reproducing kernel Hilbert space \citep{statlearnbook} generated by $k$ as we show in the following theorem:
\begin{theorem}\label{thm:orthor_basis}
The hierarchical features $$\boldsymbol{\phi}=R^{-1}k(\BX^*_l,\cdot)=\big[\phi_{\ell,i}\big]_{x_{\ell,i}\in\BX^*_l}$$ 
where $R^{-1}$ generated by Algorithm 1 are orthonormal under inner product $\langle\cdot,\cdot\rangle_k$:
\[\langle\phi_{\ell,i},\phi_{\ell',i'}\rangle_k=\bold{1}_{\{(\ell,i)=(\ell',i')\}}.\]
\end{theorem}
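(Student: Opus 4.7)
\noindent\emph{Proof sketch.}
The plan is to reduce the claim to two ingredients already at hand: the reproducing property of $k$ and the three identities attached to each $\phi_{\ell,i}$ by Lemma~\ref{lem:compact_supp}. The diagonal part is essentially free, since $\langle\phi_{\ell,i},\phi_{\ell,i}\rangle_{k}=1$ is precisely the third condition of that lemma. So only the off-diagonal cases need work.

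For cross terms I would write each basis function as $\phi_{\ell,i}=\sum_{j=1}^{3} c^{(\ell,i)}_{j}\,k\!\left(\cdot,s^{(\ell,i)}_{j}\right)$ with the three support nodes $s^{(\ell,i)}_{1}<s^{(\ell,i)}_{2}<s^{(\ell,i)}_{3}$ supplied by Algorithm 1, and apply the reproducing property to convert the inner product into a three-term point evaluation
\begin{equation*}
    \langle\phi_{\ell,i},\phi_{\ell',i'}\rangle_{k} \;=\; \sum_{j=1}^{3} c^{(\ell,i)}_{j}\,\phi_{\ell',i'}\!\left(s^{(\ell,i)}_{j}\right).
\end{equation*}
Two geometric regimes then arise. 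When $\ell=\ell'$ but $i\neq i'$, the intervals $[s^{(\ell,i)}_{1},s^{(\ell,i)}_{3}]$ and $[s^{(\ell',i')}_{1},s^{(\ell',i')}_{3}]$ live in distinct cells of the level-$(\ell-1)$ grid and can meet only at a shared grid node, where both basis functions vanish by the endpoint computation inside the proof of Lemma~\ref{lem:compact_supp}; every evaluation is therefore zero. When $\ell'<\ell$, the nesting $\BX^{*}_{\ell'}\subset\BX^{*}_{\ell-1}$ forces the middle node $s^{(\ell',i')}_{2}$ to sit outside the open interval $(s^{(\ell,i)}_{1},s^{(\ell,i)}_{3})$, so all three evaluation points lie weakly on a single side of $s^{(\ell',i')}_{2}$. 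On that side, substituting the TMK factorization $k(s,t)=p(s\wedge t)q(s\vee t)$ into $\phi_{\ell',i'}$ and using one of the three-term identities satisfied by $\phi_{\ell',i'}$ (to eliminate the coefficient attached to the opposite endpoint) collapses $\phi_{\ell',i'}(w)$ to the pure form $\alpha\,p(w)+\beta\,q(w)$ for constants $\alpha,\beta$ depending only on $\phi_{\ell',i'}$. Plugging this in gives
\begin{equation*}
    \langle\phi_{\ell,i},\phi_{\ell',i'}\rangle_{k} \;=\; \alpha\sum_{j=1}^{3} c^{(\ell,i)}_{j}\,p\!\left(s^{(\ell,i)}_{j}\right) \;+\; \beta\sum_{j=1}^{3} c^{(\ell,i)}_{j}\,q\!\left(s^{(\ell,i)}_{j}\right) \;=\; 0,
\end{equation*}
because both sums vanish by the first two identities of Lemma~\ref{lem:compact_supp} applied to $\phi_{\ell,i}$.

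The hardest step will be the geometric reduction in the cross-level case---justifying that the three nodes of the finer basis function always lie weakly on a single side of the coarser function's center. This uses only the nested refinement structure of the $\BX^{*}_{\ell}$'s, but attention is needed for the limiting instance in which some $s^{(\ell,i)}_{j}$ coincides with $s^{(\ell',i')}_{2}$; the $\alpha\,p+\beta\,q$ expression for $\phi_{\ell',i'}$ remains valid at that point by continuity of $p$ and $q$, so no separate argument is required.
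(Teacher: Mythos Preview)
Your route is essentially the paper's: Lemma~\ref{lem:compact_supp} for the unit norm, interior-disjoint supports for same-level pairs, and for cross-level pairs the TMK factorization together with the two linear relations $\sum_j c_j\,p(s_j)=\sum_j c_j\,q(s_j)=0$. The paper packages the cross-level computation as the $3\times3$ matrix product $\bold{C}_{\ell,i}^{T}k(\BX_{\ell,i},\BX_{\ell',i'})\bold{C}_{\ell',i'}$; your point-evaluation sum via the reproducing property is the same calculation in different notation.

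There is one imprecision to close. The representation $\phi_{\ell',i'}(w)=\alpha\,p(w)+\beta\,q(w)$ holds only on the closed half of the support, say $\bigl[s_{1}^{(\ell',i')},\,s_{2}^{(\ell',i')}\bigr]$, not on the whole half-line $\{w\le s_{2}^{(\ell',i')}\}$: for $w<s_{1}^{(\ell',i')}$ one has $\phi_{\ell',i'}(w)=0$ while $\alpha\,p(w)+\beta\,q(w)$ need not vanish. Hence ``all three nodes weakly on one side of $s_{2}^{(\ell',i')}$'' is not quite enough to justify your last display. The paper handles this via a sharper dichotomy (its Claim~2): either the two supports are disjoint (orthogonality is then trivial since every evaluation $\phi_{\ell',i'}(s_{j}^{(\ell,i)})$ is zero), or the finer support lies entirely inside one closed half of the coarser one, where your $\alpha p+\beta q$ formula is valid. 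You already have the mechanism to prove this dichotomy---the endpoint $s_{1}^{(\ell',i')}$ also belongs to $\BX^{*}_{\ell-1}$ and therefore cannot fall in the open interval $(s_{1}^{(\ell,i)},s_{3}^{(\ell,i)})$, by the same nesting argument you gave for $s_{2}^{(\ell',i')}$---so state and use it explicitly.
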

\proof
This theorem is equivalent to Lemma EC. 4 in \cite{Ding2020sample}. Here, we prove it in a different method which has less calculation. The theorem is a consequence of the way we sort the dyadic point set $\BX^*_l$ and Lemma \ref{lem:compact_supp}. Remind that $\BX^*_l$ has the following multi-resolution structure:
\[\BX^*_l=\bigcup_{\ell\leq l}\bold{D}_\ell=\bigcup_{\ell\leq l}\big\{i 2^{-\ell}: i\in\boldsymbol{\rho}(\ell)\big\}=\bigcup_{\ell\leq l}\big\{x_{\ell,i}: i\in\boldsymbol{\rho}(\ell)\big\}.\]
We first prove the following claim:
\begin{claim}
    \textbf{Supports of functions $\{\phi_{\ell,i}: i\in\boldsymbol{\rho}(\ell)\}$ are mutually disjoint for any fixed $\ell$.}
\end{claim}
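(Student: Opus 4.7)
The plan is to identify, for each $x_{\ell,i}\in\BD_\ell$, the triple of dyadic points that Algorithm 1 uses to build $\phi_{\ell,i}$, invoke Lemma \ref{lem:compact_supp} to pin down its support, and then exploit the dyadic spacing of $\BD_\ell$ to see that these supports do not overlap in their interiors.

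First I would recall the refinement structure of $\BX^*_l$. For $\ell\geq 1$, the index set $\Brho(\ell)$ consists of odd integers, so each newly added point $x_{\ell,i}=i2^{-\ell}$ lies strictly between the two points $(i-1)2^{-\ell}$ and $(i+1)2^{-\ell}$, both of which belong to the coarser grid $\BX^*_{\ell-1}$. Following the sorting prescribed for $\BX^*_l$, these two parent-level neighbors together with $x_{\ell,i}$ itself are the triple $(x,y,z)$ plugged into equation (14) of Algorithm 1, and, as noted in the paragraph after Lemma \ref{lem:compact_supp}, that equation coincides with the three hypotheses of the lemma. Applying Lemma \ref{lem:compact_supp} therefore yields
\[
    \mathrm{supp}(\phi_{\ell,i})\subseteq\big[(i-1)2^{-\ell},\,(i+1)2^{-\ell}\big].
\]

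Second I would compare supports for two distinct $i,i'\in\Brho(\ell)$. Since both indices are odd, $|i-i'|$ is a positive even integer, hence at least $2$, so the intervals $[(i-1)2^{-\ell},(i+1)2^{-\ell}]$ and $[(i'-1)2^{-\ell},(i'+1)2^{-\ell}]$ have disjoint interiors, meeting at most at a single endpoint where both $\phi_{\ell,i}$ and $\phi_{\ell,i'}$ vanish. This is the precise sense of ``mutually disjoint'' needed downstream, since any inner-product computation involving the shared endpoint picks up measure zero.

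The main obstacle I foresee is making the first step fully rigorous without reproducing Algorithm 1: one must invoke the sorting/selection rule from Algorithm 1 to confirm that the triple associated with $x_{\ell,i}$ is indeed its minimal enclosing pair of parent-level neighbors rather than some other three collinear dyadic points. Once this identification is secured, the rest is elementary. A minor technicality is the coarsest level $\ell=0$, where $\BD_0$ consists only of boundary nodes with no coarser enclosing pair; this degenerate case should be dispatched by direct inspection using whatever initialization Algorithm 1 prescribes for boundary nodes, and in any event $|\BD_0|$ is small enough to handle by hand.
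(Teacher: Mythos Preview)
Your proposal is correct and follows essentially the same route as the paper: identify the left/right neighbors of $x_{\ell,i}$ as $(i\pm 1)2^{-\ell}\in\BX^*_{\ell-1}$, invoke Lemma~\ref{lem:compact_supp} to obtain $\mathrm{supp}(\phi_{\ell,i})=[(i-1)2^{-\ell},(i+1)2^{-\ell}]$, and then use the fact that distinct odd indices differ by at least $2$ to conclude disjointness. The paper handles exactly the obstacle you flag by computing $\min_{x\in\BX^*_{\ell-1}}|x-x_{\ell,i}|=2^{-\ell}$ versus $\min_{x_{\ell,i'}\in\BD_\ell}|x_{\ell,i}-x_{\ell,i'}|=2\cdot 2^{-\ell}$, so even if some points of $\BD_\ell$ are already in $\bold N$ when $x_{\ell,i}$ is processed, its nearest neighbors in $\bold N$ are still the coarser-level points $(i\pm 1)2^{-\ell}$; your concern about $\ell=0$ is moot since in this setup the levels begin at $\ell=1$.
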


 In Algorithm 1, for any pair of points $x_{\ell,i}$ and $x_{\ell',i'}$ with $\ell>\ell'$, $x_{\ell',i'}$ must be  processed ahead of $x_{\ell,i}$  because  the outer for loop will run through all the points in $\bold{D}_{\ell'}$ before processing any point in $D_{\ell}$. Moreover, the left and right neighbors of any $x_{\ell,i}$ in the iteration for processing $x_{\ell,i}$ must be in $\bold{D}_{\ell'}$ and $\bold{D}_{\ell''}$ for some $\ell',\ell''<\ell$. 

In iteration $(\ell,i)$, Lemma \ref{lem:compact_supp} tells that the support of $\phi_{\ell,i}$ is $[x_{left},x_{right}]$ because
\[\phi_{\ell,i}=c_1k(\cdot,x_{left})+c_2k(\cdot,x_{\ell,i})+c_3k(\cdot,x_{right})\]
 where $c_1,c_2,c_3$ solve equation (14) in the iteration for processing $x_{\ell,i}$.  When the outer loop is in the $\ell$ iteration, all points in $\BX^*_{\ell-1}$ have already been added to $\bold{N}$. Remind that if we sort points in $\BX^*_{\ell-1}$ in increasing order, then $\BX^*_{\ell-1}=\{2^{-(\ell-1)},2\cdot2^{-(\ell-1)},\cdots,1-2^{-(\ell-1)}\}$. So we have the following relation regarding the distance between $x_{\ell,i}=i2^{-\ell}$ and  $\BX^*_{\ell-1}$:
 \[\{i2^{-\ell}-2^{-\ell},i2^{+\ell}-2^{-\ell}\}=\arg\min_{x\in\BX^*_{\ell-1}}|x-x_{\ell,i}|=\arg\min_{x\in\BX^*_{\ell-1}}|x-i2^{-\ell}|.\]
 On the other hand, for any other point in $\bold{D}_{\ell}$ , we have:
 \[\min_{x_{\ell,i'}\in\bold{D}_{\ell}}|x_{\ell,i}-x_{\ell,i'}|=\min_{i'\in\Brho(\ell)}|i2^{-\ell}-i'2^{-\ell}|= 2\cdot2^{-\ell}.\]
So both $x_{left}$ and $x_{right}$ are  must be from  $\bold{X}^*_{\ell-1}$ such that
 \[x_{left}=i2^{-\ell}-2^{-\ell},\quad x_{right}=i2^{-\ell}+2^{-\ell}. \]
Therefore, the supports of hierarchical features $\{\phi_{\ell,i}: i\in\boldsymbol{\rho}(\ell)\}$ are
\[\big\{[i2^{-\ell}-2^{-\ell},i2^{-\ell}+2^{-\ell}]:i=1,3,\cdots,2^{\ell}-1\big\}\]
and they are mutually disjoint. We finish the proof of  claim 1.

Disjoint supports of hierarchical features with the same index $\ell$ indicates that they are also mutually orthogonal under $\langle\cdot,\cdot\rangle_k$.

We now prove $\phi_{\ell',i'}$ and $\phi_{\ell,i}$ are also orthogonal for any $\ell'<\ell$. Without loss of generality, we can assume that $x_{\ell,i}<x_{\ell',i'}$. As a direct consequence of the fact that support of $\phi_{\ell,i}$ is $[x_{\ell,i}-2^{-\ell},x_{\ell,i}+2^{-\ell}]$ and support of $\phi_{\ell',i'}$ is $[x_{\ell',i'}-2^{-\ell'},x_{\ell',i'}+2^{-\ell'}]$, we have the following claim:
\begin{claim}
    \textbf{For any $x_{\ell,i}<x_{\ell',i'}$ with $\ell'<\ell$ and $i\in\Brho(\ell)$, $i'\in\Brho(\ell')$, there are only two possible cases regarding the supports of $\phi_{\ell',i'}$ and $\phi_{\ell,i}$:
    \begin{enumerate}[]
        \item[(a)] $[x_{\ell,i}-2^{-\ell},x_{\ell,i}+2^{-\ell}]$ and $[x_{\ell',i'}-2^{-\ell'},x_{\ell',i'}+2^{-\ell'}]$ are disjoint;\label{itr:a}
        \item[(b)] $[x_{\ell,i}-2^{-\ell},x_{\ell,i}+2^{-\ell}]\subset [x_{\ell',i'}-2^{-\ell'},x_{\ell',i'}]$. 
    \end{enumerate}
    }
\end{claim}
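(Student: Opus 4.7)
The plan is to observe that each hierarchical feature's support is itself a dyadic interval, after which the dichotomy follows from the standard nesting property of the dyadic grid. Because $i\in\boldsymbol{\rho}(\ell)$ forces $i$ to be odd, the integers $(i-1)/2$ and $(i+1)/2$ are consecutive, so
\[
[x_{\ell,i}-2^{-\ell},x_{\ell,i}+2^{-\ell}]=\bigl[\tfrac{i-1}{2}\cdot 2^{-(\ell-1)},\tfrac{i+1}{2}\cdot 2^{-(\ell-1)}\bigr],
\]
a single level-$(\ell-1)$ dyadic interval. The same rewriting shows that $\mathrm{supp}(\phi_{\ell',i'})$ is a level-$(\ell'-1)$ dyadic interval whose midpoint is $x_{\ell',i'}$, and it splits at that midpoint into the two level-$\ell'$ sub-intervals $[x_{\ell',i'}-2^{-\ell'},x_{\ell',i'}]$ and $[x_{\ell',i'},x_{\ell',i'}+2^{-\ell'}]$.

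Next I would invoke the standard fact that any dyadic interval at level $m$ is contained in a unique dyadic interval at each coarser level $\leq m$. Since $\ell-1\geq\ell'$, the level-$(\ell-1)$ interval $\mathrm{supp}(\phi_{\ell,i})$ lies inside a unique level-$\ell'$ dyadic interval $J$. Among all level-$\ell'$ intervals, only the two halves of $\mathrm{supp}(\phi_{\ell',i'})$ exhibited above intersect $\mathrm{supp}(\phi_{\ell',i'})$ in more than one point; every other level-$\ell'$ interval meets $\mathrm{supp}(\phi_{\ell',i'})$ in at most a shared endpoint. So either $J$ is one of the two halves and $\mathrm{supp}(\phi_{\ell,i})\subset\mathrm{supp}(\phi_{\ell',i'})$, or $\mathrm{supp}(\phi_{\ell,i})$ is disjoint from $\mathrm{supp}(\phi_{\ell',i'})$, which is case (a).

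Finally I would use the ordering hypothesis $x_{\ell,i}<x_{\ell',i'}$ to rule out $J=[x_{\ell',i'},x_{\ell',i'}+2^{-\ell'}]$: containment in that half would force $x_{\ell,i}\geq x_{\ell',i'}$, and equality cannot occur because $x_{\ell,i}=i\cdot 2^{-\ell}$ is an odd multiple of $2^{-\ell}$ while $x_{\ell',i'}=(i'\cdot 2^{\ell-\ell'})\cdot 2^{-\ell}$ is an even one. Thus the only surviving containment is $\mathrm{supp}(\phi_{\ell,i})\subset[x_{\ell',i'}-2^{-\ell'},x_{\ell',i'}]$, which is exactly case (b).

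The main obstacle is really just the bookkeeping in the first step, i.e.\ confirming that the supports line up with the dyadic grid in this clean way; but the identification of $x_{left}$ and $x_{right}$ with the nearest neighbors of $x_{\ell,i}$ in $\BX^*_{\ell-1}$, already established in the proof of Claim 1, makes this immediate. Once both supports are exhibited as dyadic intervals, the dichotomy is a purely combinatorial consequence of the dyadic nesting, and the ordering hypothesis cleanly picks out the left half.
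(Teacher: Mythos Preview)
Your argument is correct. The paper itself does not supply a proof of this claim: it simply asserts that the dichotomy is ``a direct consequence of the fact that support of $\phi_{\ell,i}$ is $[x_{\ell,i}-2^{-\ell},x_{\ell,i}+2^{-\ell}]$ and support of $\phi_{\ell',i'}$ is $[x_{\ell',i'}-2^{-\ell'},x_{\ell',i'}+2^{-\ell'}]$'' and moves on. Your proposal is therefore not a different route so much as an explicit justification of what the paper leaves implicit. Recasting each support as a single dyadic interval at level $\ell-1$ (resp.\ $\ell'-1$), invoking the nesting property to force containment in a unique level-$\ell'$ cell $J$, and then using the ordering $x_{\ell,i}<x_{\ell',i'}$ together with the parity of $i,i'$ to exclude the right half is exactly the combinatorics underlying the paper's one-line assertion.

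One minor remark: in the ``otherwise'' branch you conclude the two supports are disjoint, but strictly speaking the level-$\ell'$ cell $J$ can abut $\mathrm{supp}(\phi_{\ell',i'})$ at a single endpoint (e.g.\ $\ell=3,i=3$ and $\ell'=2,i'=3$ give supports $[1/4,1/2]$ and $[1/2,1]$). This matches the slight looseness in the claim's own wording; for the purposes of the ensuing orthogonality argument, interiors being disjoint is what is actually used, and your reasoning delivers that.
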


For case \textbf{(\textit{a})} supports of $\phi_{\ell',i'}$ and $\phi_{\ell,i}$ are disjoint,  the inner product $\langle\phi_{\ell',i'},\phi_{\ell,i}\rangle_k=0$.

To analyze case \textbf{(\textit{b})}, we first use the following identity:
\[\langle \phi_{\ell',i'},  \phi_{\ell,i}\rangle_k=\bold{C}^T_{\ell',i'}k(\BX_{\ell',i'},\BX_{\ell,i})\bold{C}_{\ell,i}.\]
In the above equation, point set $\BX_{\ell,i}$ consists of the left and right neighbors of $x_{\ell,i}$, and $x_{\ell,i}$ itself. $\bold{C}_{\ell,i} = [c_1,c_2,c_3]^T$  where $c_1,c_2,c_3$ are the solution of equation (14) in the iteration for process $\phi_{\ell,i}$. $\BX_{\ell',i'}$ and $\bold{C}_{\ell',i'}$ are defined in the same manner. 

Because $[x_{\ell,i}-2^{-\ell},x_{\ell,i}+2^{-\ell}]\subset [x_{\ell',i'}-2^{-\ell'},x_{\ell',i'}]$,  and TMK $k(x,y)=p(x\wedge y)q(x\vee y)$, we have:
 \begin{align*}
     &\quad\langle \phi_{\ell,i},  \phi_{\ell',i'}\rangle_k\\
     &=\bold{C}^T_{\ell,i}k(\BX_{\ell,i},\BX_{\ell',i'})\bold{C}_{\ell',i'}\\
     &=\bold{C}^T_{\ell,i}
     \footnotesize{
     \begin{bmatrix}
     p(x_{\ell',i'}-2^{-\ell'})q(x_{\ell,i}-2^{-\ell}) &p(x_{\ell,i}-2^{-\ell})q(x_{\ell',i'}) &p(x_{\ell,i}-2^{-\ell})q(x_{\ell',i'}+2^{-\ell'}) \\
     p(x_{\ell',i'}-2^{-\ell'})q(x_{\ell,i}) &p(x_{\ell,i})q(x_{\ell',i'}) &p(x_{\ell,i})q(x_{\ell',i'}+2^{-\ell'}) \\
     p(x_{\ell',i'}-2^{-\ell'})q(x_{\ell,i}+2^{-\ell}) &p(x_{\ell,i}+2^{-\ell})q(x_{\ell',i'}) & p(x_{\ell,i}+2^{-\ell})q(x_{\ell',i'}+2^{-\ell'})
     \end{bmatrix}
     }
    \bold{C}_{\ell',i'}\\
    &=0
 \end{align*}
where the last line can be derived directly from the first two conditions in Lemma \ref{lem:compact_supp} imposed on $\bold{C}_{\ell,i}$ and $\bold{C}_{\ell',i'}$.

\endproof

Lemma \ref{lem:compact_supp} and Theorem \ref{thm:orthor_basis} show that the matrix $R$ generated by Algorithm 1 is indeed the Cholesky decomposition of  the covariance matrix:
\begin{corollary}
$R^{-1}$  is a upper triangular matrix and $R^{-T}k(\BX^*_l,\BX^*_l)R^{-1}=\bold{I}$.
\end{corollary}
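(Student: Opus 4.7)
The plan is to derive both claims directly from Theorem \ref{thm:orthor_basis} together with the reproducing property of the RKHS generated by $k$. Triangularity is a structural consequence of the order in which Algorithm 1 processes the points of $\BX^*_l$, while the matrix identity is essentially a matrix-form restatement of the orthonormality of the hierarchical features.

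For the triangularity, I would recall from the proof of Theorem \ref{thm:orthor_basis} that when Algorithm 1 reaches iteration $(\ell,i)$, the left and right neighbors $x_{left}, x_{right}$ used to build $\phi_{\ell,i}$ necessarily lie in $\BX^*_{\ell-1}$, i.e., they were processed during strictly earlier levels. Thus $\phi_{\ell,i}$ is a three-term combination involving only $k(\cdot,x_{left})$, $k(\cdot,x_{\ell,i})$, and $k(\cdot,x_{right})$, so the coefficients multiplying any $k(\cdot, x_{\ell'',i''})$ with $(\ell'',i'')$ processed strictly after $(\ell,i)$ vanish. Ordering the rows and columns of $R^{-1}$ consistently with Algorithm 1's processing order therefore forces $R^{-1}$ to be triangular; the orientation used in the paper (rows/columns indexed so that later-processed points come first) gives the upper triangular form.

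For the identity, I would invoke the reproducing property $\langle k(\cdot,x),k(\cdot,y)\rangle_k = k(x,y)$ to expand
\[
\bigl[\langle \phi_{\ell,i}, \phi_{\ell',i'}\rangle_k\bigr]_{(\ell,i),(\ell',i')} = R^{-T}\, k(\BX^*_l,\BX^*_l)\, R^{-1}.
\]
Theorem \ref{thm:orthor_basis} asserts that the left-hand side equals $\bold{I}$, yielding the claimed factorization. Combined with the triangularity of $R^{-1}$, this identifies $R$ as the unique upper triangular Cholesky factor of $k(\BX^*_l,\BX^*_l)$.

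The main obstacle is purely bookkeeping: aligning the row/column ordering with Algorithm 1's iteration order so that the structural zeros land above (or below) the diagonal as advertised, and tracking transposes carefully so that the matrix product reads $R^{-T} K R^{-1}$ rather than $R^{-1} K R^{-T}$ (the latter would be inconsistent with $R^{-1}$ being upper triangular, since it would force $R$ to be a lower Cholesky factor). Once the indexing convention matches the order in which Algorithm 1 adds points, both assertions follow in a few lines from Theorem \ref{thm:orthor_basis} and the reproducing property.
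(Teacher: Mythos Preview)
Your proposal is correct and follows essentially the same route as the paper: triangularity comes from the fact that the neighbors $x_{left},x_{right}$ of $x_{\ell,i}$ lie in earlier levels, and the identity $R^{-T}k(\BX^*_l,\BX^*_l)R^{-1}=\bold{I}$ is the orthonormality of Theorem~\ref{thm:orthor_basis} rewritten via the reproducing property. One small slip: the paper orders $\BX^*_l=[\BD_1,\ldots,\BD_l]$ so that \emph{earlier}-processed points come first, and it is this orientation (not the reverse you stated in your parenthetical) that puts the three nonzero entries of each column on or above the diagonal and yields upper triangularity.
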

\proof
From our analysis in Theorem \ref{thm:orthor_basis}, we know that in the iteration for processing any $x_{\ell,i}$, $x_{left}$ and $x_{right}$ are from $\bold{D}_{\ell'}$ and $\bold{D}_{\ell''}$ with some $\ell',\ell''<\ell$. According to how we sort $\BX^*_l$, entries $[R^{-1}]_{x_{left},x_{\ell,i}}$, $[R^{-1}]_{x_{\ell,i},x_{\ell,i}}$, and $[R^{-1}]_{x_{right},x_{\ell,i}}$ are in the upper triangular part of $R^{-1}$. In Algorithm 1, operation in iteration for processing $x_{\ell,i}$ is assigning values to three entries of $R^{-1}$, namely, $[R^{-1}]_{x_{left},x_{\ell,i}}$, $[R^{-1}]_{x_{\ell,i},x_{\ell,i}}$, and $[R^{-1}]_{x_{right},x_{\ell,i}}$ and let any other value on column $[R^{-1}]_{:,x_{\ell,i}}$ equal 0. Therefore, the returned matrix $R^{-1}$ is a upper triangular matrix.

From Theorem \ref{thm:orthor_basis}, we know that $\boldsymbol{\phi}=k(\cdot,\BX^*_l)R^{-1}$ are orthonormal basis functions under $\langle\cdot,\cdot\rangle_k$. Therefore,
\[\bold{I}=\langle\boldsymbol{\phi},\boldsymbol{\phi}\rangle_k=R^{-T}k(\BX^*_l,\BX^*_l)R^{-1}.\]
\endproof

\section{Discussion on Algorithm 2}

Algorithm 2 in our main paper is a special case of \cite{Plumlee14}[Algorithm 2].  \cite{Plumlee14}[Algorithm 2] is for computing the matrix $\bold{A}\big[k(\BX_l^{\rm{SG}},\BX_l^{\rm{SG}})\big]^{-1}$  where  $k$ is any kernel in tensor product form $k(\Bx,\By)=\prod_{j=1}^dk_j(x_j,y_j)$ and $\bold{A}$ is any matrix. \cite{Plumlee14}[Algorithm 2]  is written as follows:

\begin{algorithm}[ht]
\SetAlgoLined
\DontPrintSemicolon
\SetKwInOut{Input}{Input}
\SetKwInOut{Output}{Output}
\Input{tensor kernel $k$,  level-$l$ SG  $\BX_l^{\rm{SG}}$, matrix $\bold{A}$ }
\Output{$\bold{A}\big[k(\BX_l^{\rm{SG}},\BX_l^{\rm{SG}})\big]^{-1}$}
\BlankLine
Initialize $\tilde{\bold{A}}\leftarrow \bold{0} \in\Real^{m_l\times m_l}$\;
\For{all $\Bl\in\NatInt^d$ with $ l\leq |\Bl| \leq l+d-1$ }{

Update $\tilde{\bold{A}}$ via
\vspace{-2ex}
\begin{align}
&[\tilde{\bold{A}}]_{:,\BX^*_{\Bl}} \leftarrow{} [\tilde{\bold{A}}]_{:,\BX^*_{\Bl}} + (-1)^{l+d-1-|{\Bl}|} \binom{d-1}{l+d-1-|{\Bl}|} \bold{A}_{:,\BX^*_l}\big[k(\BX^*_{\Bl},\BX^*_{\Bl})\big]^{-1}
\label{eq:plumlee}\tag{P}\\
&\text{\footnotesize{($[\tilde{\bold{A}}]_{:,\BX^*_{\Bl}}$ denotes the matrix with columns that correspond to $\BX^*_{\Bl}$ and all rows of $\bold{A}$)}}\nonumber
\end{align}
\vspace{-2ex}
}
Return $\tilde{\bold{A}}$ \;
\caption{Computing $\bold{A}\big[k(\BX_l^{\rm{SG}},\BX_l^{\rm{SG}})\big]^{-1}$ for tensor kernel $k$, level-$l$ SG $\BX^{\rm SG}_l$, and any matrix $\bold{A}$}
\end{algorithm}

If we let the argument $\bold{A}$ be $R^{-1}_lk(\BX_l^{\rm{SG}},\BX_l^{\rm{SG}})$ then, obviously, the output $\tilde{\bold{A}}$ must be $R^{-1}_l$. To show that Algorithm 2 in our paper is correct, we only need to show that the matrix $\bold{A}_{:,\BX^*_l}\big[k(\BX^*_{\Bl},\BX^*_{\Bl})\big]^{-1}$ in \eqref{eq:plumlee} equals to $R^{-1}_{\Bl}$ on rows associated to $\BX^*_{\Bl}$ and equals 0 on any other row.

If $\bold{A} =R^{-1}_lk(\BX_l^{\rm{SG}},\BX_l^{\rm{SG}})$, then $\bold{A}_{:,\BX^*_{\Bl}}\big[k(\BX^*_{\Bl},\BX^*_{\Bl})\big]^{-1}=R^{-1}_lk(\BX_l^{\rm{SG}},\BX^*_{\Bl})\big[k(\BX^*_{\Bl},\BX^*_{\Bl})\big]^{-1}$. From the definition of hierarchical features (9), we have $$\bold{A}_{:,\BX^*_{\Bl}}\big[k(\BX^*_{\Bl},\BX^*_{\Bl})\big]^{-1}=R^{-1}_lk(\BX_l^{\rm{SG}},\BX^*_{\Bl})\big[k(\BX^*_{\Bl},\BX^*_{\Bl})\big]^{-1}=\Bphi(\BX^*_{\Bl})\big[k(\BX^*_{\Bl},\BX^*_{\Bl})\big]^{-1}.$$ 
Based on the this identity, we can use the following theorem to immediately get the  result:  
\begin{theorem}
$\Bphi(\BX^*_{\Bl})$ equals $R^{-1}_{\Bl}k(\BX^*_{\Bl},\BX^*_{\Bl})$ on rows associated to $\BX^*_{\Bl}$ and equals 0 on any other row.
\end{theorem}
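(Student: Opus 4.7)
My plan is to exploit two ingredients: the compact-support property of 1D hierarchical features established in Lemma \ref{lem:compact_supp}, and the tensor-product structure of both the kernel $k$ and the hierarchical features $\Bphi$. Recall that $\BX^*_{\Bl}=\prod_{j=1}^{d}\BX^*_{l_j}$ and that $R^{-1}_{\Bl}k(\BX^*_{\Bl},\BX^*_{\Bl})$ factors as the tensor product of the 1D matrices $R^{-1}_{l_j}k_j(\BX^*_{l_j},\BX^*_{l_j})$. Each row of $\Bphi$ corresponds to a sparse-grid point $\Bx'\in\BX^{\rm SG}_l$ with hierarchical multi-index $(\boldsymbol{\ell}',\bold{i}')$, and the associated feature factors as $\phi_{\Bx'}(\Bx)=\prod_{j=1}^{d}\phi^{(j)}_{\ell'_j,i'_j}(x_j)$.

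First I would upgrade Lemma \ref{lem:compact_supp} to the statement: for each 1D hierarchical feature $\phi^{(j)}_{\ell,i}$ we have $\phi^{(j)}_{\ell,i}(y)=0$ for every $y\in\BX^*_{\ell-1}$. Indeed, Lemma \ref{lem:compact_supp} confines the support to $[x_{\ell,i}-2^{-\ell},x_{\ell,i}+2^{-\ell}]$ and, through its first two conditions, forces the value at the endpoints to vanish. Since consecutive points of $\BX^*_{\ell-1}$ are exactly $2^{-(\ell-1)}$ apart, the only points of $\BX^*_{\ell-1}$ that can lie in the closed support interval are the two endpoints $x_{\ell,i}\pm 2^{-\ell}$ themselves, at which $\phi^{(j)}_{\ell,i}$ is already zero. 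By monotonicity, $\phi^{(j)}_{\ell,i}$ vanishes on every $\BX^*_{\ell'}$ with $\ell'\leq\ell-1$.

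The zero-row claim then follows immediately. Any row not associated with $\BX^*_{\Bl}$ is indexed by $\Bx'\in\BX^{\rm SG}_l\setminus\BX^*_{\Bl}$ whose level $\boldsymbol{\ell}'$ satisfies $\ell'_{j_0}\geq l_{j_0}+1$ for some coordinate $j_0$. For any $\Bx\in\BX^*_{\Bl}$ the coordinate $x_{j_0}$ lies in $\BX^*_{l_{j_0}}\subseteq\BX^*_{\ell'_{j_0}-1}$, and the preceding step gives $\phi^{(j_0)}_{\ell'_{j_0},i'_{j_0}}(x_{j_0})=0$; the tensor product is then zero across the whole row. For rows associated with $\BX^*_{\Bl}$, I would invoke the locality of Algorithm 1: the coefficients $c_1,c_2,c_3$ produced at iteration $(\ell,i)$ depend only on $x_{\ell,i}$ and its two immediate neighbors in $\BX^*_{\ell-1}$, so $\phi^{(j)}_{\ell_j,i_j}$ with $\ell_j\leq l_j$ is identical whether Algorithm 1 is run on $\BX^*_{l_j}$ or on the $j$-th projection of the enveloping sparse grid. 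Consequently, for any $\Bx'\in\BX^*_{\Bl}$ with multi-index $(\boldsymbol{\ell}',\bold{i}')$ satisfying $\boldsymbol{\ell}'\leq\Bl$, the row $\phi_{\Bx'}(\BX^*_{\Bl})$ coincides with the tensor product of the 1D row vectors $\phi^{(j)}_{\ell'_j,i'_j}(\BX^*_{l_j})$, and the 1D identity $\Bphi^{(j)}(\BX^*_{l_j})=R^{-1}_{l_j}k_j(\BX^*_{l_j},\BX^*_{l_j})$ identifies this with the $(\boldsymbol{\ell}',\bold{i}')$-row of $R^{-1}_{\Bl}k(\BX^*_{\Bl},\BX^*_{\Bl})$.

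The main obstacle will be the locality argument: one must verify carefully that restricting Algorithm 1 to a smaller dyadic grid preserves both the processing order and the identity of the nearest left and right neighbors used in iteration $(\ell,i)$, so that the same triple $(c_1,c_2,c_3)$, and hence the same function $\phi^{(j)}_{\ell,i}$, is recovered on either grid. Once this invariance is granted, all remaining steps are bookkeeping through the tensor-product structure of $k$, $R^{-1}$, and $\Bphi$.
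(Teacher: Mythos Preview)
Your proposal is correct and follows essentially the same route as the paper: both rely on the tensor factorization $\phi_{\boldsymbol{\ell}',\bold{i}'}=\prod_{j}\phi^{(j)}_{\ell'_j,i'_j}$, identify the rows in $\BX^*_{\Bl}$ with $R^{-1}_{\Bl}k(\BX^*_{\Bl},\BX^*_{\Bl})$ via the Kronecker structure, and kill the remaining rows by finding a coordinate $j_0$ with $\ell'_{j_0}>l_{j_0}$ and invoking the compact support of $\phi^{(j_0)}_{\ell'_{j_0},i'_{j_0}}$. The only difference is that the paper imports the tensor factorization from \cite{Ding2020sample}[Lemma EC.1], whereas you supply an explicit locality argument (Algorithm~1 produces the same $(c_1,c_2,c_3)$ on any grid containing $\BX^*_{\ell}$) and the observation that the feature vanishes at the endpoints of its support; these are useful details the paper leaves implicit.
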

\proof
Remind that a level-$l$ sparse grid is defined as $\BX^{\rm SG}_l=\{\Bx_{\Bl,\Bi}:|\Bl|\leq l,\Bi\in\Brho(\Bl)\}$,  we can label hierarchical feature by its associated design point $\Bx_{\Bl',\Bi}$ in the sparse grid $\BX^{\rm SG}_l$: $[\phi_{\Bl',\Bi}]_{\Bx_{\Bl',\Bi}\in\BX^{\rm SG}_l}$. \cite{Ding2020sample}[Lemma EC.1] states that
\[\phi_{\Bl',\Bi}=\prod_{j=1}^d\phi_{\ell_j',i_j}\]
where, for $j=1,\cdots,d$, $\phi_{\ell'_j,i_j}$ is the one-dimensional hierarchical feature associated to point $x_{\ell'_j,i_j}$ in one-dimensional set $\bold{D}_{\ell'_j}$ as we defined in Lemma \ref{lem:compact_supp} and Theorem \ref{thm:orthor_basis}.

Similarly, we can use Kronecker product technique to derive that
\begin{align*}
    R^{-1}_{\Bl}k(\BX^*_{\Bl},\cdot)&=\bigg[\bigotimes_{j=1}^dR^{-1}_{l_j}\bigg]\bigg[\bigotimes_{j=1}^d k_j(\BX^*_{\ell_j},\BX^*_{\ell_j})\bigg]=\bigotimes_{j=1}^d\big[\phi_{\ell'_j,i_j}\big]_{x_{\ell'_j,i_j}\in\BX^*_{\ell_j}}=\big[\phi_{\boldsymbol{\ell}',\bold{i}}\big]_{\Bx_{\boldsymbol{\ell}',\Bi}\in\BX^*_{\Bl}}.
\end{align*}

We can immediately see that $\big[\phi_{\boldsymbol{\ell}',\bold{i}}\big]_{\Bx_{\boldsymbol{\ell}',\Bi}\in\BX^*_{\Bl}}$ is a sub-vector of  $[\phi_{\Bl',\Bi}]_{\Bx_{\Bl',\Bi}\in\BX^{\rm SG}_l}$ with rows associated to $\BX^*_{\Bl}$.  As a result, it is obvious that $\big[\phi_{\boldsymbol{\ell}',\bold{i}}(\BX^*_{\Bl})\big]_{\Bx_{\boldsymbol{\ell}',\Bi}\in\BX^*_{\Bl}}$ equals  $[\phi_{\Bl',\Bi}(\BX^*_{\Bl})]_{\Bx_{\Bl',\Bi}\in\BX^{\rm SG}_l}$ on rows associated to  $\BX^*_{\Bl}$ because these two vectors of functions have same entries of functions on these rows.

Let $\bold{D}=\BX^{\rm SG}_l-\BX^*_{\Bl}$ denote the difference set. The only thing left is to prove  $[\phi_{\Bl',\Bi}(\BX^*_{\Bl})]_{\Bx_{\Bl',\Bi}\in\bold{D}}$ equals zero. For any point $\Bx_{\Bl',\Bi}\in \bold{D}$, there muse be some dimension $j$ such that $\ell_j'>\ell_j$ for  $\Bx_{\Bl',\Bi}\in \BX^*_{\Bl}$ otherwise. From our discussion in Theorem \ref{thm:orthor_basis}, we know that the support of $\phi_{\ell_j',i_j}$ is $[i_j2^{-\ell'}-2^{-\ell'},i_j2^{-\ell'}+2^{-\ell'}]$ where $i_j\in\Brho(\ell'_j)$ is odd. As a result, for any point $\Bx_{\Bl'',\Bi''}\in\BX^*_{\Bl}$, its $j^{\rm th}$ entry $x_{\ell''_j,i_j''}$ must satisfy $\ell''_j<\ell'_j$ and $i_j''\in\Brho(\ell''_j)$ is odd. Therefore, it is straightforward to derive that
\[x_{\ell'_j,i_j''}=i_j''2^{-\ell''}\not\in (i_j2^{-\ell'}-2^{-\ell'},i_j2^{-\ell'}+2^{-\ell'})\quad \text{for any $i_j\in\Brho(\ell'_j)$ odd}.\]
This leads to $\phi_{\ell'_j,i_j}(x_{\ell_j'',i''_j})=0$ and, hence, $\phi_{\Bl',\Bi}(\Bx_{\ell'',\Bi''})=0$.
\endproof

\section{Proof of Theorem 1:}\label{sec:thm1_pf}
\proof
We first count the number of nuo-zero entries on $R^{-1}_l$. From Appendix B, Algorithm 2, we can see that $R^{-1}_l$ is the summation of matrices of the form $R^{-1}_{\Bl}=\bigotimes_{j=1}^d R^{-1}_{l_j}$. So we first count the number of non-zero entries on these matrices. Because,  $R^{-1}_{l_j}$ is a $(2^{l_j}-1)$-by-$(2^{l_j}-1)$ sparse matrix with $\CalO(2^{l_j})$ non-zero entries as shown in Appendix B-Algorithm 1, the total number of non-zero entries on $R^{-1}_{\Bl}$ is $\CalO(2^{|\Bl|})$. Then the total number of non-zero entries of $R^{-1}_l$ in Appendix B, Algorithm 2 should be bounded by the following sum:
\begin{align*}
    \sum_{l\leq|\Bl|\leq l+d-1}\CalO(2^{|\Bl|})=\CalO(\sum_{\ell=0}^{l-1}2^\ell \binom{\ell+d-1}{d-1})=\CalO(|\BX^{\rm SG}_l|)=\CalO(m_l),
\end{align*}
where the last equality is exactly equation (13) in Appendix.

We now count the time complexity of Appendix B, Algorithm 2. In each iteration, the time complexity for constructing $R^{-1}_{\Bl}$ is
\[\big(\sum_{j=1}^d \CalO(2^{l_j})\big)+ \CalO(2^{|\Bl|})=\CalO(2^{|\Bl|}),\]
where the first term on the left-hand side is the time complexity for constructing each component $R^{-1}_{l_j}$, $j=1,\cdots,d$, in the Kronecker product and the second term is the time complexity for computing the Kronecker product of upper triangular matrices. 

Similar to our counting for its non-zero entries, the total time complexity is then
\begin{align*}
    \CalO(\sum_{l\leq|\Bl|\leq l+d-1}2^{|\Bl|})=\CalO(m_l).
\end{align*}
\endproof

\section{Proof of Theorem 2:}\label{sec:thm2_pf}
\begin{proof} To prove this theorem, we need two more algorithms for computing hierarchical features. The first algorithm is developed from Appendix B-Algorithm 1:
\begin{algorithm}[ht] 
\SetAlgoLined
\DontPrintSemicolon
\SetKwInOut{Input}{Input}
\SetKwInOut{Output}{Output}
\Input{Markov kernel $k$,  level-$l$ dyadic points  $\BX_l^*$, and input $x^*$ }
\Output{$[R^{T}]^{-1}k(\BX_\Bl^*,x^*)$}
\BlankLine
Initialize $\Bphi \leftarrow \bold{0} \in\Real^{(2^l-1)}$, $\bold{N}=\{-\infty,\infty\}$, define $k(\pm\infty,x)=0,\forall x$\;
\For{$\ell \leftarrow 1$ \KwTo $l$}{
    \For{$i \in\Brho(\ell) $}{
        search the closest left neighbor $x_{left}$ and right neighbor $x_{right}$ of $x_{\ell,i}$ in $\bold{N}$\;
        Solve $c_1,c_2$, and $c_3$ associated to $x_{left}$, $x_{\ell,i}$ ,and $x_{right}$ in equation (14) in Appendix B\;
        Update $[\Bphi]_{x_{\ell,i}}$ via
        \vspace{-2ex}
        \begin{equation} \tag{s.1}
            \label{eq:update-feature}
            [\Bphi]_{x_{\ell,i}}=c_1k(x_{left},x^*)+c_2k(x_{\ell,i},x^*)+c_3k(x_{right},x^*)
        \end{equation}\;
        \vspace{-6ex}
        $x_{\ell,i}\rightarrow\bold{N}$
    }
}
Return $[R^{T}]^{-1}k(\BX_\Bl^*,x^*)=\Bphi$ \;
\caption{Computing $[R^{T}]^{-1}k(\BX_\Bl^*,x^*)$ for Markov kernel $k$ and point set $\BX_l^*$}\label{alg:compute-phi-1D}
\end{algorithm}

We first count the number of non-zero entries on the output of Algorithm \ref{alg:compute-phi-1D}. Given any $x^*$, we first fix a $\ell$ in the outer for-loop. From Lemma \ref{lem:compact_supp}, we can derive that  the only non-zero $[\Bphi]_{x_{\ell,i}}$ is the $({\ell,i})$ satisfying the requirement $x_{left}<x^*<x_{right}$. 

Because of the way we sort $\BX^*_l=[\BD_1 \BD_2 \cdots \BD_l]$, all the points $\{x_{\ell,i},i\in\Brho(\ell)\}$ in the inner for-loop are in $\BD_\ell$ and none of their closet neighbor is in $\BD_\ell$. As a result, for any fixed $\ell$, there is only one $(\ell,i)$ satisfies $x_{left}<x^*<x_{right}$ in the inner-loop and, hence, there is only one non-zero entry is added to $\Bphi$ in Algorithm \ref{alg:compute-phi-1D} in each otter iteration. This shows that the number of non-zero on the output of Algorithm \ref{alg:compute-phi-1D} is $l$.

\begin{algorithm}[ht]
\SetAlgoLined
\DontPrintSemicolon
\SetKwInOut{Input}{Input}
\SetKwInOut{Output}{Output}
\Input{TMK $k$,  level-$l$ SG  $\BX_l^{\rm{SG}}$, input $\Bx^*$ }
\Output{$\Bphi(\Bx^*)$}
\BlankLine
Initialize $\Bphi(\Bx^*)\leftarrow \bold{0} \in\Real^{ m_l}$\;
\For{all $\Bl\in\NatInt^d$ with $ l\leq |\Bl| \leq l+d-1$ }{
Compute $\{[R^{T}]^{-1}_{l_j}k_j(\BX^*_{l_j},x_j^*)\}_{j=1}^d$ via Algorithm \ref{alg:compute-phi-1D} \;
Compute $\Bphi_{\Bl}(\Bx^*)=[R^{T}]^{-1}_{\Bl}k(\BX^*_{\Bl},\Bx^*)$ via
\vspace{-2ex}
\begin{equation}\tag{s.2}
    \label{eq:tensor-phi}
    \Bphi_{\Bl}(\Bx^*)=\bigotimes_{j=1}^{d} [R^{T}]^{-1}_{l_j}k_j(\BX^*_{l_j},x_j^*).
\end{equation}\;
\vspace{-6ex}
Update $\Bphi(\Bx^*)$ via
\vspace{-2ex}
\begin{align}\tag{s.3}
[\Bphi(\Bx^*)]_{\BX^*_{\Bl}} \leftarrow{}& [\Bphi(\Bx^*)]_{\BX^*_{\Bl}} + (-1)^{l+d-1-|{\Bl}|} \binom{d-1}{l+d-1-|{\Bl}|} \Bphi_{\Bl}(\Bx^*) \label{eq:update-Rl}
\end{align}
\vspace{-6ex}
}
Return $\Bphi(\Bx^*)$ \;
\caption{Computing $\Bphi(\Bx^*)$ for TMK $k$, level-$l$ SG $\BX^{\rm SG}_l$ and input $\Bx^*$\label{alg:compute-phi}}
\end{algorithm}
We then propose an algorithm to compute $\Bphi(\Bx^*)=R_l^{-T}k(\BX^{\rm SG}_l,\Bx^*)$ as shown in Algorithm \ref{alg:compute-phi}. Algorithm \ref{alg:compute-phi} is a simple extension of Appendix B-Algorithm 2. Based on this algorithm, we can estimate the number of non-zero entries on $\Bphi(\Bx^*)$.

We first estimate the number of non-zero entries for $\Bphi_{\Bl}(\Bx^*)$ in each iteration. 
Because the number of non-zero entries on each $[R^{T}]^{-1}_{l_j}k_j(\BX^*_{l_j},x_j^*)$ is $l_j$, the total number of non-zero entries on  tensor $\Bphi_{\Bl}(\Bx^*)$ is $\CalO(\prod_{j=1}^dl_j)$.

Because only a constant amount of entries are flipped from zero to non-zero in each iteration in Algorithm \ref{alg:compute-phi}, the total number of non-zero entries of $\Bphi(\Bx^*)$ is 
\[\CalO(\sum_{l\leq |\Bl|\leq l+d-1}\prod_{j=1}^dl_j) =\CalO(\sum_{\ell=0}^{l-1} \ell^d \binom{\ell+d-1}{d-1})=\CalO(l^{2d-1})=\CalO([\log m_l]^{2d-1}).\]
\end{proof}

\section{Approximation Error of Hierarchical Expansion}\label{sec:HE_approx}
Compared with the original DGP proposed in \cite{Damianou13}, every output unit in a DTMGP is, in fact, the finite-rank hierarchical expansion of a tensor Markov GP given that output unit is linear combination of finitely many random hierarchical features. Therefore, we first analyze how well hierarchical expansion can approximate a tensor Markov GP. This leads to the following theorem:

\begin{theorem}\label{thm:convergence}\label{sec:DTMGP_approx}
Let $\CalG$ be a TMGP with TMK $k$. Given any level-$l$ SG $\BX^{\rm SG}_l$, we have the following $L^2$ distance between $\CalG$ and its hierarchical expansion with inducing variables $\BX^{\rm SG}_l$
\[\max_{\Bx\in (0,1)^D}\mathbb{E}\bigl[\bigl|\CalG(\Bx)-\Bphi^T(\Bx)\BZ\bigr|^2\bigr]=\CalO\left(\frac{l^{2d-2}}{2^l}\right)=\CalO\left(\frac{[\log m_l]^{2d-2}}{m_l}\right),\]
where $\BZ=[R_l^{T}]^{-1}\CalG(\BX^{\rm SG}_l)$ is a vector of i.i.d. standard Gaussian random variables and  $\CalG(\BX^{\rm SG}_l)$ are samples of $\CalG$ on $\BX^{\rm SG}_l$.
\end{theorem}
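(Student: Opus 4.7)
The plan is to reduce the theorem to bounding the Kriging power function on the sparse grid, and then bound that power function via the Plumlee combination technique together with a 1D Markov estimate. By Theorem II the hierarchical features are orthonormal in the RKHS of $k$, which is equivalent to $R_l^{-T}k(\BX^{\rm SG}_l,\BX^{\rm SG}_l)R_l^{-1}=\bold{I}$. Hence
\begin{equation*}
\Bphi^T(\Bx)\BZ = k(\Bx,\BX^{\rm SG}_l)R_l^{-1}R_l^{-T}\CalG(\BX^{\rm SG}_l) = k(\Bx,\BX^{\rm SG}_l)\bigl[k(\BX^{\rm SG}_l,\BX^{\rm SG}_l)\bigr]^{-1}\CalG(\BX^{\rm SG}_l),
\end{equation*}
which is exactly the conditional mean of $\CalG(\Bx)$ given $\CalG(\BX^{\rm SG}_l)$. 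The Gaussian conditional-variance identity then yields
\begin{equation*}
\mathbb{E}\bigl[|\CalG(\Bx)-\Bphi^T(\Bx)\BZ|^2\bigr] = k(\Bx,\Bx) - k(\Bx,\BX^{\rm SG}_l)\bigl[k(\BX^{\rm SG}_l,\BX^{\rm SG}_l)\bigr]^{-1}k(\BX^{\rm SG}_l,\Bx) =: P_l^2(\Bx),
\end{equation*}
so the problem becomes purely deterministic: bound the sparse-grid Kriging power function $P_l^2$.

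To control $P_l^2$, I would exploit the tensor-product structure $k(\Bx,\By)=\prod_j k_j(x_j,y_j)$ together with the combination-technique identity used in Algorithm 2, which expresses the sparse-grid Kriging predictor as a signed sum of tensor-product Kriging predictors on full subgrids $\BX^*_{\Bl}$ with $l\le|\Bl|\le l+d-1$. On each full subgrid the tensor-product power function factorizes as
\begin{equation*}
P^2_{\Bl}(\Bx) = \prod_{j=1}^d k_j(x_j,x_j) - \prod_{j=1}^d\bigl(k_j(x_j,x_j)-\sigma_{l_j}^2(x_j)\bigr),
\end{equation*}
where $\sigma_{l_j}^2(x_j)$ is the 1D Kriging variance on the dyadic grid $\bold{D}_{l_j}$. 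For a 1D Markov kernel $k_j(x,y)=p_j(x\wedge y)q_j(x\vee y)$, the Markov property makes conditioning on the two bracketing dyadic neighbors of $x_j$ equivalent to conditioning on all of $\bold{D}_{l_j}$; a short closed-form two-point calculation together with Lipschitz continuity of $p_j,q_j$ then gives $\sigma_{l_j}^2(x_j)\le C\cdot 2^{-l_j}$ uniformly in $x_j$. Substituting this 1D bound into the factorization and aggregating the combination-technique signed contributions will then produce an estimate of the form
\begin{equation*}
\sup_{\Bx} P_l^2(\Bx) \le C\sum_{\ell=0}^{l-1}\binom{\ell+d-1}{d-1}^2 2^{-\ell} = \CalO\bigl(l^{2d-2}\,2^{-l}\bigr),
\end{equation*}
and the restatement in terms of $m_l$ follows from $m_l=\CalO(2^l l^{d-1})$.

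The main obstacle is obtaining the correct polynomial factor in $l$. A naive count gives $\binom{\ell+d-1}{d-1}=\CalO(\ell^{d-1})$ index tuples at level $\ell$ and per-tuple error $\CalO(2^{-\ell})$, which would produce only $\CalO(l^{d-1} 2^{-l})$. The extra $l^{d-1}$ enters through cross terms in the squared residual coming from distinct combination-technique summands; bounding these requires exploiting the partial RKHS-orthogonality of the hierarchical features (orthogonal across levels but in general not across tuples sharing a coordinate), and this is the technical step I expect to demand the most care.
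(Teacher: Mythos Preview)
Your reduction to the sparse-grid power function $P_l^2(\Bx)=k(\Bx,\Bx)-k(\Bx,\BX^{\rm SG}_l)\bold K^{-1}k(\BX^{\rm SG}_l,\Bx)$ is correct and is exactly how the paper begins. The 1D Markov estimate $\sigma_{l_j}^2\le C\,2^{-l_j}$ and the appeal to the Plumlee/combination identity are also the right ingredients. What is missing is the step that actually controls the signed combination sum, and your sketch of that step does not work as written.

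Concretely, the combination formula gives $P_l^2(\Bx)=\sum_{l\le|\Bl|\le l+d-1}c_{\Bl}\,P_{\Bl}^2(\Bx)$ with alternating signs $c_{\Bl}$, but your ``per-tuple error $\CalO(2^{-\ell})$'' is false: $P_{\Bl}^2(\Bx)=\prod_j k_j-\prod_j(k_j-\sigma_{l_j}^2)$ is of order $\max_j\sigma_{l_j}^2$, which is $\CalO(1)$ whenever some $l_j=1$. Hence a triangle-inequality bound over $\Bl$ gives nothing; the cancellations between positive and negative $c_{\Bl}$ are essential, and the displayed estimate with $\binom{\ell+d-1}{d-1}^2 2^{-\ell}$ has no derivation behind it. The paper resolves this not by any cross-term or orthogonality argument but by rewriting the combination sum in terms of the one-dimensional increments $\Delta_j^{(m)}=\hat k_j^{(m)}-\hat k_j^{(m-1)}$ (which \emph{are} $\CalO(2^{-m})$), telescoping in the last coordinate, and inducting on the dimension $d$. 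That telescoping/induction step is the heart of the proof and is absent from your proposal.

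Your final paragraph also rests on a misconception: the hierarchical features are \emph{fully} orthonormal in the RKHS (Theorem~\ref{thm:orthor_basis} for $d=1$, tensorized to general $d$), not merely ``orthogonal across levels but not across tuples sharing a coordinate.'' There are therefore no cross terms in the hierarchical-basis variance decomposition $P_l^2(\Bx)=\sum_{|\Bl|>l}|\phi_{\Bl,\Bi}(\Bx)|^2$, and the extra $l^{d-1}$ you are trying to account for is not forced by cross terms at all; the stated $l^{2d-2}$ is simply the bound the paper's inductive bookkeeping delivers.
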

\begin{proof}[Proof of Theorem \ref{thm:convergence}]
We prove by induction on dimension $D$. For the base case $D=1$, notice that sparse grid $\BX^{\rm SG}_{l}$ is then reduced to a dyadic grid $\{2^{-l},2\cdot2^{-l},\cdot,1-2^{-l}\}$ and TMGP $\CalG$ is a Markov process. From the Markov property of $\CalG$, we have
\begin{align*}
    \max_{x\in (0,1)}\mathbb{E}\bigl[\bigl|\CalG(x)-\Bphi^T(x)\BZ\bigr|^2\bigr]&=\max_{x\in (0,1)}\mathbb{E}\bigl[\bigl|\CalG(x)-k(x,\BX^{\rm SG}_l)\bold{K}^{-1}\CalG(\BX^{\rm SG}_l)\bigr|^2\bigr]\\
    &=\max_{i=1,\cdots,2^{l}}\max_{x\in\big((i-1)2^{-l},i2^{-l}\big)}\mathbb{E}\bigl[\bigl|\CalG(x)-k(x,\BX_i)\bold{K}_i^{-1}\CalG(\BX_i)\bigr|^2\bigr]
\end{align*}
where $\bold{K}=k(\BX^{\rm SG}_l,\BX^{\rm SG}_l)$, $\BX_i=\{(i-1)2^{-l},i2^{-l}\}$ for $1<i<2^l$, $\BX_1=\{2^{-l}\}$,  $\BX_{2^{l}}=\{1-2^{-l}\}$ , and $\bold{K}_i=k(\BX_i,\BX_i)$. The last line of the above equations is from the following reasoning. $\CalG(x)-k(x,\BX^{\rm SG}_l)\bold{K}^{-1}\CalG(\BX^{\rm SG}_l)$ can be treated as a Markov GP which equals 0 on $\BX^{\rm SG}_l$ and, therefore,  the distribution of $\CalG(x)-k(x,\BX^{\rm SG}_l)\bold{K}^{-1}\CalG(\BX^{\rm SG}_l)$ is independent of $\CalG(y)-k(y,\BX^{\rm SG}_l)\bold{K}^{-1}\CalG(\BX^{\rm SG}_l)$ for any pair of $x,y$ not sharing the same left and right neighboring points in $\BX^{\rm SG}_l$.

Now for any $i$ and $x\in\big((i-1)2^{-l},i2^{-l}\big)$, the posterior $M_i(x)\coloneqq \CalG(x)-k(x,\BX_i)\bold{K}_i^{-1}\CalG(\BX_i)$ can be treated again as a Markov GP with fixed boundary at $x_{i-1}$ and $x_i$. Because $M_i$ is a Markov GP defined on interval of length $2^{-l}$ with fixed boundary condition, it is straightforward to derive that $\max_{x}\mathbb{E}\big[|M_i(x)|^2\big]= \CalO(2^{-l})$. This holds for any $i$ so we can get when dimension $D=1$, 
\[ \max_{x\in (0,1)}\mathbb{E}\bigl[\bigl|\CalG(x)-\Bphi^T(x)\BZ\bigr|^2\bigr]= \max_{i=1,\cdots,2^{l}}\max_{x\in\mathcal{I}_i}\mathbb{E}\bigl[\bigl|\CalG(x)-k(x,\BX_i)\bold{K}_i^{-1}\CalG(\BX_i)\bigr|^2\bigr]=\CalO(2^{-l}).\]
where $\mathcal{I}_i$ denote the inverval $\big((i-1)2^{-l},i2^{-l}\big)$. 

Now suppose our theorem holds for $D=d-1$. We now prove  the case $D=d$. Let $\prod_{j=1}^dk_{j}$ denote TMK $k(\Bx,\Bx)$. For any $\Bx$, the $L^2$ distance has the following identity
\begin{align*}
    &\quad\mathbb{E}\bigl[\bigl|\CalG(\Bx)-\Bphi^T(\Bx)\BZ\bigr|^2\bigr]\\
    &=\mathbb{E}\bigl[\bigl|\CalG(\Bx)-k(\Bx,\BX^{\rm SG}_l)\bold{K}^{-1}\CalG(\BX^{\rm SG}_l)\bigr|^2\bigr]\\
    &=k(\Bx,\Bx)-k(\Bx,\BX^{\rm SG}_l)\bold{K}^{-1}k(\BX^{\rm SG}_l,\Bx)\\
    &=\prod_{j=1}^dk_j-\Bphi^T(\Bx)\Bphi(\Bx).
\end{align*}
 Let   $\hat{k}^{(l)}_j$ denote $k_j(x,\BX^*_{l_j})[k_j(\BX^*_{l_j},\BX^*_{l_j})]^{-1}k(\BX^*_{l_j},x)$ where $\BX^*_{l_j}$ is the dyadic point set defined in the proof of theorem 2. As a direct result of Algorithm \ref{alg:compute-phi}, we have
\begin{align*}
    &\quad\Bphi^T(\Bx)\Bphi(\Bx)\\
    &=\sum_{|\Bl|=l}^{l+d-1}(-1)^{l+d-1-|\Bl|} \binom{d-1}{l+d-1-|{\Bl}|}\Bphi^T_{\Bl}(\Bx)\Bphi_{\Bl}(\Bx)\\
    &=\sum_{|\Bl|=l}^{l+d-1}(-1)^{l+d-1-|\Bl|} \binom{d-1}{l+d-1-|{\Bl}|}\bigg[\bigotimes_{j=1}^{d} [R^{T}]^{-1}_{l_j}k_j(\BX^*_{l_j},x_j^*)\bigg]^T\bigg[\bigotimes_{j=1}^{d} [R^{T}]^{-1}_{l_j}k_j(\BX^*_{l_j},x_j^*)\bigg]\\
    &=\sum_{|\Bl|=l}^{l+d-1}(-1)^{l+d-1-|\Bl|} \binom{d-1}{l+d-1-|{\Bl}|}\prod_{j=1}^d\hat{k}^{(l_j)}_j.
\end{align*}
The last line of the above equations can be derive by induction on the following identity of Kronecker product:
\begin{align*}
    \big[\big(\bold{M}_2v_2\big)\bigotimes\big(\bold{M}_1v_1\big)\big]^T\big[\big(\bold{M}_2v_2\big)\bigotimes\big(\bold{M}_1v_1\big)\big]=\bigg(v_2^T\bold{M}_2^T\bold{M}_2v_2\bigg)\bigg(v_1^T\bold{M}_1^T\bold{M}_1v_1\bigg)
\end{align*}
where $\bold{M}_i$ is any $m$-by-$m$ matrix and $v_i$ is any $m$-by-1 vector for $i=1,2$. 

Let $\Delta_j^{(l)}$ denote $\hat{k}^{(l)}_j-\hat{k}^{(l-1)}_j$ and $\Bl_{-d}$ denote the vector $(l_1,\cdots,l_{d-1})$. We have the following identity:
\begin{align*}
    &\quad \sum_{|\Bl|=l}^{l+d-1}(-1)^{l+d-1-|\Bl|} \binom{d-1}{l+d-1-|{\Bl}|}\prod_{j=1}^d\hat{k}^{(l_j)}_j\\
    &=\sum_{|\Bl|\leq l+d-1}\prod_{j=1}^d\Delta^{(l_j)}_j\\
    &=\sum_{i=d}^{l+d-1}\sum_{|\Bl|=i}\prod_{j=1}^d\Delta^{(l_j)}_j\\
    &=\sum_{i=d}^{l+d-1}\sum_{|\Bl_{-d}|=d-1}^{i-1}(\hat{k}^{(i-|\Bl_{-d}|)}_d-\hat{k}^{(i-1-|\Bl_{-d}|)}_d)\prod_{j=1}^{d-1}\Delta^{(l_j)}_j\\
    &=\sum_{|\Bl_{-d}|=d-1}^{l+d-2}\hat{k}^{(l+d-1-|\Bl_{-d}|)}_d\prod_{j=1}^{d-1}\Delta^{(l_j)}_j.
\end{align*}
where the last line is from telescoping sum. Hence, the $L^2$ distance becomes
\begin{align}
    &\quad \prod_{j=1}^dk_j-\sum_{|\Bl|=l}^{l+d-1}(-1)^{l+d-1-|\Bl|} \binom{d-1}{l+d-1-|{\Bl}|}\prod_{j=1}^d\hat{k}^{(l_j)}_j\nonumber\\
    &=\prod_{j=1}^dk_j-\sum_{|\Bl_{-d}|=d-1}^{l+d-2}\hat{k}^{(l+d-1-|\Bl_{-d}|)}_d\prod_{j=1}^{d-1}\Delta^{(l_j)}_j\nonumber\\
    &=k_d\big(\prod_{j=1}^{d-1}k_j-\sum_{|\Bl_{-d}|=d-1}^{l+d-2}\prod_{j=1}^{d-1}\Delta^{(l_j)}_j\big)+\sum_{|\Bl_{-d}|=d-1}^{l+d-2}\big(k_d-\hat{k}^{(l+d-1-|\Bl_{-d}|)}_d\big)\prod_{j=1}^{d-1}\Delta^{(l_j)}_j\nonumber\\
    &=\CalO(\frac{l^{2(d-2)}}{2^l})+\CalO\big(\sum_{|\Bl_{-d}|=d-1}^{l+d-2} 2^{-l} \big)\label{eq:TMGP_rate_l}\tag{s.4}
\end{align}
where the last line is from our induction hypothesis. From direct calculation, we then get
\begin{align}\label{eq:TMGP_rate_sum_l}\tag{s.5}
    \sum_{|\Bl_{-d}|=d-1}^{l+d-2} 2^{-l}&=2^{-l}\sum_{i=d-1}^{l+d-2}\binom{i-2}{d-2}=\CalO\big(\frac{l^{2(d-1)}}{2^l}\big).
\end{align}
Lemma~3.6 in \cite{bungartz_griebel_2004} stipulates that the sample size of $\BX_{l}^{\rm{SG}}$ is given by
\begin{equation}\label{eq:SG_size}\tag{s.6}
    m_l=\bigg|{\BX_{l}^{\rm {SG}}}\bigg|=\sum_{\ell=0}^{l-1}2^\ell \binom{\ell+d-1}{d-1} = 2^l\cdot\left(\frac{l^{d-1}}{(d-1)!}+\CalO(l^{d-2})\right)=\CalO( 2^l l^{d-1}).
\end{equation}
Putting \eqref{eq:TMGP_rate_l},\eqref{eq:TMGP_rate_sum_l},\eqref{eq:SG_size}, we can get the final result:
\[ \prod_{j=1}^dk_j-\sum_{|\Bl|=l}^{l+d-1}(-1)^{l+d-1-|\Bl|} \binom{d-1}{l+d-1-|{\Bl}|}\prod_{j=1}^d\hat{k}^{(l_j)}_j=\CalO\big(\frac{l^{2(d-1)}}{2^l}\big)=\CalO\left(\frac{[\log m_l]^{2d-2}}{m_l}\right).\]

\end{proof}

\section{Approximation Error of DTMGP}
The sparse property of hierarchical features in each layer yields efficient training and inference of DTMGP. Furthermore, hierarchical features are nearly the optimal choice  of features in recovering the underlying TMGP as shown in \cite{Ding20}. The sparsity and optimality of $\{\Bphi^{(h)}\}_{h=1}^H$ leads to the following  theorem regarding the approximation capacity of DTMGP:
\begin{theorem}
\label{thm:DTMGP-convergence}
Let $f^{(H)}:\Real^{W^{(0)}}\to\Real^{W^{(H)}}$ be a layer-$H$ DGP such that layer $h$ of $f^{(H)}$ consists of a $W^{(h)}$-variate TMGP, for $h=1,\cdots,d$. Let DTMGP $\CalT^{(H)}$ be the hierarchical expansion of $f^{(H)}$ such that $\CalT^{(H)}$ uses $m^{(h)}$ hierarchical features to approximate the $W^{(h)}$-variate TMGP in layer $h$ of $f^{(H)}$. We then have the following $L^2$ distance between $\CalT^{(H)}$ and $f^{(H)}$:
\begin{align*}
    & \quad  \max_{\Bx}\bigg(\E\big[\big\|f^{(H)}(\Bx)-\CalT^{(H)}(\Bx)\big\|^2\big]\bigg)^{\frac{1}{2}}\\
    &\leq C\sum_{h=1}^H \bigg({\frac{W^{(h)}(\log m^{(h)})^{2W^{(h-1)}-2}}{m^{(h)}}}\bigg)^{\frac{1}{2^{H-h+1}}}\prod_{\gamma=h+1}^{H}\bigg({W^{(\gamma)}}\bigg)^{\frac{1}{2^{H-\gamma+1}}},
\end{align*}
where $C$ is some constant independent of $\{m^{(h)}\}_{h=1}^H$.
\end{theorem}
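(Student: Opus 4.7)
The plan is to proceed by induction on the depth $H$, reducing the multi-layer bound to the single-layer Theorem~\ref{thm:convergence} plus a careful accounting of how input errors propagate through the TMGP layers of $f^{(H)}$.

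For the base case $H=1$, the input is deterministic and the output is a $W^{(1)}$-variate TMGP. Applying Theorem~\ref{thm:convergence} coordinatewise and summing gives $\max_{\Bx}\E\|f^{(1)}(\Bx)-\CalT^{(1)}(\Bx)\|^{2}=\CalO\bigl(W^{(1)}[\log m^{(1)}]^{2W^{(0)}-2}/m^{(1)}\bigr)$, which is exactly the single summand at $h=1$ when $H=1$ (with empty product equal to $1$).

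For the inductive step, let $g^{(H)}$ denote the true TMGP of layer $H$ and $\tilde g^{(H)}$ its hierarchical expansion. I would split
\[
f^{(H)}(\Bx)-\CalT^{(H)}(\Bx)=\underbrace{g^{(H)}\!\bigl(f^{(H-1)}(\Bx)\bigr)-g^{(H)}\!\bigl(\CalT^{(H-1)}(\Bx)\bigr)}_{\text{propagation}}+\underbrace{g^{(H)}\!\bigl(\CalT^{(H-1)}(\Bx)\bigr)-\tilde g^{(H)}\!\bigl(\CalT^{(H-1)}(\Bx)\bigr)}_{\text{approximation}}.
\]
For the approximation term I condition on $\CalT^{(H-1)}(\Bx)$ (which is independent of $g^{(H)}$ in a DGP) and use the fact that Theorem~\ref{thm:convergence} is uniform in the input, yielding an $L^{2}$ bound of order $\bigl(W^{(H)}[\log m^{(H)}]^{2W^{(H-1)}-2}/m^{(H)}\bigr)^{1/2}$. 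For the propagation term, the key regularity property of a TMGP is that it is H\"older-$\tfrac12$ in mean square, i.e.\ $\E|g^{(H)}_{j}(\Bz)-g^{(H)}_{j}(\Bz')|^{2}\lesssim\|\Bz-\Bz'\|$; this follows from expanding $k(\Bz,\Bz)+k(\Bz',\Bz')-2k(\Bz,\Bz')$ for a tensor Markov kernel. Summing over the $W^{(H)}$ output coordinates and then applying Jensen's inequality $(\E\|\Bz-\Bz'\|)^{1/2}\leq(\E\|\Bz-\Bz'\|^{2})^{1/4}$ gives
\[
\bigl(\E\|\text{propagation}\|^{2}\bigr)^{1/2}\;\lesssim\;\sqrt{W^{(H)}}\;\epsilon_{H-1}^{\,1/2},
\]
where $\epsilon_{h}:=\max_{\Bx}(\E\|f^{(h)}(\Bx)-\CalT^{(h)}(\Bx)\|^{2})^{1/2}$. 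Combining the two parts yields the recursion
\[
\epsilon_{H}\;\lesssim\;\sqrt{W^{(H)}}\,\epsilon_{H-1}^{\,1/2}+\Bigl(\tfrac{W^{(H)}[\log m^{(H)}]^{2W^{(H-1)}-2}}{m^{(H)}}\Bigr)^{1/2}.
\]

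To finish, I would unroll this recursion. Setting $\delta_{h}^{2}:=W^{(h)}[\log m^{(h)}]^{2W^{(h-1)}-2}/m^{(h)}$ and tracking how each $\delta_{h}$ is transformed as it is pushed through layers $h+1,\dots,H$, one sees that a single application of the recursion sends a term $A$ to $\sqrt{W^{(\gamma)}}\,A^{1/2}$. Iterating, the contribution of $\delta_{h}$ to $\epsilon_{H}$ becomes $\prod_{\gamma=h+1}^{H}(W^{(\gamma)})^{1/2^{H-\gamma+1}}\cdot\delta_{h}^{1/2^{H-h}}$, and summing over $h=1,\dots,H$ recovers exactly the stated bound with exponents $1/2^{H-h+1}$.

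The main obstacle I anticipate is the propagation step: one needs a clean mean-square H\"older estimate for a multivariate TMGP that is uniform over the domain where $\CalT^{(H-1)}(\Bx)$ is supported, and one must justify the independence between $g^{(H)}$ and $\CalT^{(H-1)}$ so that Theorem~\ref{thm:convergence} can be applied conditionally. Once these regularity and independence facts are in place, the algebra of unrolling the recursion to produce the dyadic exponents $1/2^{H-h+1}$ is routine.
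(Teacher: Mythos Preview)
Your proposal is correct and follows essentially the same induction-plus-recursion strategy as the paper: both proofs split via the triangle inequality into an ``approximation'' term handled by Theorem~\ref{thm:convergence} and a ``propagation'' term handled by the mean-square H\"older-$\tfrac12$ property of a TMGP, then unroll the resulting recursion $\epsilon_H\lesssim\sqrt{W^{(H)}}\,\epsilon_{H-1}^{1/2}+\delta_H$ to obtain the dyadic exponents $1/2^{H-h+1}$.

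The only noteworthy difference is the choice of midpoint in the triangle inequality. The paper inserts $\tilde{\CalG}^{(H)}\circ f^{(H-1)}(\Bx)$, so its propagation term lives in the \emph{approximate} GP $\tilde{\CalG}^{(H)}$ and it must argue separately that the hierarchical expansion inherits the H\"older-$\tfrac12$ regularity of $\CalG^{(H)}$. You instead insert $g^{(H)}\bigl(\CalT^{(H-1)}(\Bx)\bigr)$, so your propagation term is in the true GP and the H\"older estimate is immediate; the price is that your approximation term is evaluated at the random point $\CalT^{(H-1)}(\Bx)$, which you correctly handle by conditioning and invoking the uniformity (in $\Bx$) of Theorem~\ref{thm:convergence} together with the layerwise independence of a DGP. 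Both routes yield the same recursion; yours is arguably a touch cleaner since it sidesteps the regularity-preservation step, while the paper's route keeps the approximation term deterministic in its input.
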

\begin{proof}[Proof of Theorem \ref{thm:DTMGP-convergence}]
Let $\CalG^{(h)}$ denote the TMGP in layer $h$ of $f^{(H)}$ and let $\Tilde{\CalG}^{(h)}$ denote the hierarchical expansion of $\CalG^{(h)}$. We prove the statement by induction in the number of layers $H$. When $H=1$, the statement is obviously true because it is equivalent to Theorem \ref{thm:convergence}. Now suppose the statement holds for the case $H-1$, then for $H$ and any $\Bx$, we have
\begin{align}
    &\quad \bigg(\E\big[\big\|f^{(H)}(\Bx)-\CalT^{(H)}(\Bx)\big\|^2\big]\bigg)^{\frac{1}{2}}\nonumber\\
    &= \bigg(\E\big[\big\|\CalG^{(H)}\circ f^{(H-1)}(\Bx)-\Tilde{\CalG}^{(H)}\circ\CalT^{(H-1)}(\Bx)\big\|^2\big]\bigg)^{\frac{1}{2}}\nonumber\\
    &=\bigg(\E\big[\big\|\CalG^{(H)}\circ f^{(H-1)}(\Bx)-\Tilde{\CalG}^{(H)}\circ f^{(H-1)}(\Bx)+\Tilde{\CalG}^{(H)}\circ f^{(H-1)}(\Bx)-\Tilde{\CalG}^{(H)}\circ\CalT^{(H-1)}(\Bx)\big\|^2\big]\bigg)^{\frac{1}{2}}\nonumber\\
    &\leq \bigg(\E\big[\big\|\CalG^{(H)}\circ f^{(H-1)}(\Bx)-\Tilde{\CalG}^{(H)}\circ f^{(H-1)}(\Bx)\big\|^2\big]\bigg)^{\frac{1}{2}}\label{eq:termA} \tag{s.7}\\
    &\quad +\bigg(\E\big[\big\|\Tilde{\CalG}^{(H)}\circ f^{(H-1)}(\Bx)-\Tilde{\CalG}^{(H)}\circ\CalT^{(H-1)}(\Bx)\big\|^2\big]\bigg)^{\frac{1}{2}}\label{eq:termB}\tag{s.8},
\end{align}
where the fourth line is from triangular inequality. 

According to theorem \ref{thm:convergence}, we can directly derive the following upper bound of \eqref{eq:termA}
\begin{equation}\tag{s.9}
    \label{eq:upper-bound-A}
    \bigg(\E\big[\big\|\CalG^{(H)}\circ f^{(H-1)}(\Bx)-\Tilde{\CalG}^{(H)}\circ f^{(H-1)}(\Bx)\big\|^2\big]\bigg)^{\frac{1}{2}}\leq C \sqrt{\frac{W^{(H)}(\log m^{(H)})^{2W^{(H-1)}-2}}{m^{(H)}}},
\end{equation}
for some $C$ independent of $m^{(H)}$ and $W^{(H)}$.

Now we estimate the upper bound of \eqref{eq:termB}. Because hierarchical expansion is an induced approximation, it does not change the H\"older condition of the Gaussian process, we can have:
\[\E\big[\big\|\Tilde{\CalG}^{(H)}\circ f^{(H-1)}(\Bx)-\Tilde{\CalG}^{(H)}\circ\CalT^{(H-1)}(\Bx)\big\|^2\big]\leq C' \E\big[\big\|{\CalG}^{(H)}\circ f^{(H-1)}(\Bx)-{\CalG}^{(H)}\circ\CalT^{(H-1)}(\Bx)\big\|^2\big],\]
for some $C'$ independent of $f^{(H-1)}$ and $\CalT^{(H-1)}$. Now we estimate the H\"older condition of $\CalG$ along all dimensions of its input $\Bf=(f_1,\cdots,f_{W^{(h-1)}})$:
\begin{align}
    &\quad \frac{\E\big[\big\|{\CalG}^{(H)}(\Bf)-{\CalG}^{(H)}(\Bf +h\bold{e}_j)\big\|^2\big]}{h}\nonumber\\
    &=W^{(H)}\frac{k(f_j,f_j)+k(f_j+h,f_j+h)-2k(f_j,f_j+h)}{h}\nonumber\\
    &=W^{(H)}\frac{p(f_j)q(f_j)+p(f_j)q(f_j+h)-2p(f_j)q(f_j+h)}{h}\nonumber\\
    &=C''W^{(H)},\quad \text{for}\ j=1,\cdots,W^{(h-1)} \label{eq:TMK-holder}\tag{s.10},
\end{align}
where $C''$ is some constant independent of $\Bf$, $h$ is any small positive constant, $\bold{e_j}$ is a zero vector with the $j^{\rm th}$ entry equals $1$, $k$ is the TMK of $\CalG^{(H)}$, and the third line is from definition of TMK $k$, the last line is from Lemma 1 in the main paper.

Then,  we have the following estimate of \eqref{eq:termB}:
\begin{align}
&\quad \bigg(\E\big[\big\|\Tilde{\CalG}^{(H)}\circ f^{(H-1)}-\Tilde{\CalG}^{(H)}\circ\CalT^{(H-1)}\big\|^2\big]\bigg)^{\frac{1}{2}}\nonumber\\
&\leq C'''\sqrt{W^{(H)}\bigg(\E\|f^{(H-1)}-\CalT^{(H-1)}\big\|^2\bigg)^{\frac{1}{2}}}\nonumber\\
&\leq C'''' \sqrt{W^{(H)}\sum_{h=1}^{H-1} \bigg({\frac{W^{(h)}(\log m^{(h)})^{2W^{(h-1)}-2}}{m^{(h)}}}\bigg)^{\frac{1}{2^{H-h}}}\prod_{\gamma=h+1}^{H-1}\bigg({W^{(\gamma)}}\bigg)^{\frac{1}{2^{H-\gamma}}}}\nonumber\\
&\leq  C'''' \sqrt{W^{(H)}}\sum_{h=1}^{H-1}\sqrt{ \bigg({\frac{W^{(h)}(\log m^{(h)})^{2W^{(h-1)}-2}}{m^{(h)}}}\bigg)^{\frac{1}{2^{H-h}}}\prod_{\gamma=h+1}^{H-1}\bigg({W^{(\gamma)}}\bigg)^{\frac{1}{2^{H-\gamma}}}}\nonumber\\
&=C''''  \sum_{h=1}^{H-1} \bigg({\frac{W^{(h)}(\log m^{(h)})^{2W^{(h-1)}-2}}{m^{(h)}}}\bigg)^{\frac{1}{2^{H-h+1}}}\prod_{\gamma=h+1}^{H}\bigg({W^{(\gamma)}}\bigg)^{\frac{1}{2^{H-\gamma+1}}},\label{eq:upper-bound-B}\tag{s.11}
\end{align}
where  $C'''$ and $C''''$ are some number independent of $\{m^{(h)}\}_{h=1}^{H-1}$, the second line is from the H\"older condition \eqref{eq:TMK-holder}, the third line is from the induction assumption.

Putting equations \eqref{eq:upper-bound-A} and \eqref{eq:upper-bound-B} together, we can have the final result.
\end{proof}

From theorem \eqref{thm:DTMGP-convergence}, we can see that if a DTMGP is not tremendously deep and the number of inducing points dominates the dimension of output from each layer, then the distance between $f^{(H)}$ and $\CalT^{(H)}$ is relatively close. DGP $f^{(H)}$ is, in fact, an infinite-dimensional non-parametric model while DTMGP $\CalT^{(H)}$ is a parametrized model with finitely many parameters. Therefore, theorem \ref{thm:DTMGP-convergence} partially reflects that DTMGP can also well approximate  other complex stochastic systems.

\bibliographystyle{chicago}
\spacingset{1}
\bibliography{IISE-Trans}